\newcounter{protocol}
\xpatchcmd{\proof}{\itshape}{\scshape}{}{}
\renewenvironment{proof}[1][]{\par\noindent{\bfseries Proof #1\ }}{ \qedsymbol\\[2mm]}
\newtheorem{theorem}{Theorem}
\newtheorem{definition}{Definition}
\newtheorem{lemma}{Lemma}
\newtheorem{example}{Example}
\newtheorem{remark}{Remark}
\definecolor{darkgray}{RGB}{105,105,105}
\definecolor{lightgray}{RGB}{169,169,169}
\definecolor{darkmeganta}{rgb}{0.6,0,0.6}
\DeclareMathOperator*{\argmax}{arg\,max}
\DeclareMathOperator*{\argmin}{arg\,min}
\newcommand{\floor}[1]{\left\lfloor#1\right\rfloor}
\newcommand{\Unif}{\mathrm{Unif}}
\newcommand{\NN}{{\mathbb N}}
\newcommand{\1}[1]{\mathds{1}(#1)}
\newcommand{\EE}[1]{\mathbb{E}\left[#1\right]}
\newcommand{\EEs}[2]{\mathbb{E}_{#1}\left[#2\right]}
\newcommand{\PPs}[2]{\mathbb{P}_{#1}\left(#2\right)}
\renewcommand{\Pr}{\mathbb{P}}
\newcommand{\abs}[1]{\left|#1\right|}
\newcommand{\cA}{\mathcal{A}}
\newcommand{\cD}{\mathcal{D}}
\newcommand{\cH}{\mathcal{H}}
\newcommand{\cP}{\mathcal{P}}
\newcommand{\cX}{\mathcal{X}}
\newcommand{\cY}{\mathcal{Y}}
\renewcommand{\epsilon}{\varepsilon}
\renewcommand{\hat}{\widehat}
\renewcommand{\bar}{\overline}
\newcommand{\nothere}[1]{}
\newcommand{\lr}{\ell^{\textrm{recall}}}
\newcommand{\lp}{\ell^{\textrm{precision}}}
\newcommand{\ls}{\ell^{\textrm{scalar}}}
\newcommand{\hlr}{\hat \ell^{\textrm{recall}}}
\newcommand{\hlp}{\hat\ell^{\textrm{precision}}}
\newcommand{\ber}{\bar r}
\newcommand{\bep}{\bar p}
\date{}
\title{
Probably Approximately Precision and Recall Learning}
\author{
  Lee Cohen\thanks{Stanford. Email: \href{mailto:leecohencs@gmail.com}{\nolinkurl{leecohencs@gmail.com}}. Authors are ordered alphabetically.}\hspace*{1cm}
  Yishay Mansour\thanks{Tel Aviv University and Google Research. Email: \href{mailto:mansour.yishay@gmail.com}{\nolinkurl{mansour.yishay@gmail.com}}.}\hspace*{1cm}
  Shay Moran\thanks{Departments of Mathematics, Computer Science, and Data and Decision Sciences, Technion and Google Research. 
  Email: \href{mailto:smoran@technion.ac.il}{\nolinkurl{smoran@technion.ac.il}}.}\hspace*{1cm}
  Han Shao\thanks{University of Maryland, College Park. Email: \href{mailto:hanshao@umd.edu}{\nolinkurl{hanshao@umd.edu}}. Work done while at Harvard.}}
\begin{document}

\maketitle
\begin{abstract}
\textit{Precision} and \textit{Recall} are fundamental metrics in machine learning tasks where both accurate predictions and comprehensive coverage are essential, such as in multi-label learning, language generation, medical studies, and recommender systems. 
A key challenge in these settings is the prevalence of one-sided feedback, where only positive examples are observed during training—e.g., in multi-label tasks like tagging people in Facebook photos, we may observe only a few tagged individuals, without knowing who else appears in the image. To address learning under such partial feedback, we introduce a Probably Approximately Correct (PAC) framework in which hypotheses are set functions that map each input to a set of items, extending beyond single-label predictions and generalizing classical binary, multi-class, and multi-label models. Our results reveal sharp statistical and algorithmic separations from standard settings: classical methods such as Empirical Risk Minimization provably fail, even for simple hypothesis classes. We develop new algorithms that learn from positive data alone, achieving optimal sample complexity in the realizable case, and establishing multiplicative—rather than additive—approximation guarantees in the agnostic case, where achieving additive regret is impossible.

\end{abstract}
\thispagestyle{empty}
\newpage 
\setcounter{page}{1}

\section{Introduction}

\textit{Precision} and \textit{Recall} are fundamental metrics in many machine learning applications, including multi-label learning, medical studies, generative models, information retrieval, and recommender systems, where the goal is to learn a set for each input and both accurate prediction and comprehensive coverage are critical. For example, in multi-label learning---where each input may correspond to multiple labels (e.g., an image containing multiple objects or a group photo containing multiple people)---the objective is to return the complete set of labels associated with each input.  Recommender systems also rely heavily on precision and recall; for instance, Netflix aims to recommend a list of shows that includes all those a user would enjoy and excludes those they would not. 
In medical studies, precision, called \textit{positive predictive value} (PPV), and recall, called \textit{sensitivity}, are key performance metrics. For instance, if a test predicts that $10$ patients have a condition and $9$ truly have it, its PPV (precision) is $90\%$. If $5$ patients actually have the condition and the test correctly identifies $4$ of them, its sensitivity (recall) is $80\%$.
In language identification and generation in the limit~\cite{gold1967language, kleinberg2024language, kalavasis2024limits}, \textit{hallucination} denotes a failure of precision, where the model produces strings outside the true language, whereas \textit{mode collapse} denotes a failure of recall, where the model’s outputs become so constrained that valid, previously unseen strings are never generated 
.

A critical aspect of designing such systems is balancing two key metrics:
\begin{itemize}
    \item Precision — the proportion of output items that are in the ground-truth set. Low \textit{precision loss} means that most output items are correct.
    \item Recall — the proportion of ground-truth items that are included in the output. Low \textit{recall loss} means the system captures most of the correct items.
\end{itemize}
Precision and recall are often at odds. Increasing the size of output set can improve recall but may reduce precision. 
In an ideal scenario, we might consider a full information model where, for each input in the training data, the algorithm has access to the set of \textit{all} the ground-truth items. This setup implicitly includes negative examples, as items not in the set are known to be incorrect. At test time, the algorithm would then predict a set of items for each given input. 
Such a setting aligns well with the standard \textit{Probably Approximately Correct} (PAC) framework~\cite{Valiant84}, allowing us to apply standard PAC solutions (e.g., Empirical Risk Minimization). 

However, the assumption of full information is often unrealistic in many applications. In real-world scenarios, we typically observe only a small fraction of the ground-truth items (e.g., some objects in an image) during training, with no explicit information about unobserved items (e.g., which objects are not in the image). This situation is better characterized as a partial-information model. For instance, on Facebook, for each photo in the training data, we might only know some of the tagged individuals, without knowing whether any untagged person appears in the photo or not. To formalize this setting, we consider a simple abstraction: during training, for each input, we observe only \textit{a single item sampled uniformly at random} from its ground-truth set. At test time, given a random input, the model is expected to return the full set of associated items, not just one.

Following standard practice in learning theory, given an input space $\cX$ and a label space $\cY$, we consider a hypothesis class $\cH$. Each hypothesis is a set function $g: \cX \to 2^\cY$ that maps each input in $\cX$ to a set of labels in $\cY$. For example, in the context of multi-label learning, inputs may be images and items may be objects; a hypothesis maps an image to the set of objects it contains. Our goal is to find a set function that minimizes both precision and recall losses. These losses are defined with respect to a target hypothesis $g^{\text{target}}$, which captures the ground-truth label sets, using the standard counting measure:
\begin{align*}
    \lp(g) &:= \EEs{x \sim \cD}{\frac{|g(x) \setminus g^{\text{target}}(x)|}{|g(x)|}} \quad \mbox{and}\quad
    \lr(g) := \EEs{x \sim \cD}{\frac{|g^{\text{target}}(x) \setminus g(x)|}{|g^{\text{target}}(x)|}}\,,
 \end{align*}
where $\cD$ denotes the distribution over inputs.
Precision and recall are equal to $1$ minus the precision and recall losses, respectively. Binary and multiclass PAC learning can be viewed as special cases of our model, where we restricted the set size to be $1$. Note that, for each given $x$, we adopt the standard counting measure to quantify the difference between $g(x)$ and $g^{\text{target}}(x)$. While other measures are possible, it is often challenging to define more sophisticated alternatives in a principled and intuitive way. To the best of our knowledge, the only generalization of precision and recall beyond counting measure is proposed by~\cite{sajjadi2018assessing}, though the resulting notions are relatively complex. Hence, we will be focusing on the counting measure in this paper and leave a more general notion of precision and recall as a open question.

We distinguish between two settings- \textit{realizable} and \textit{agnostic}. In the realizable setting, we assume the target hypothesis~$g^{\text{target}}$ is in the class $\mathcal{H}$, and, aim to find a hypothesis with small precision and recall losses. 
In the agnostic setting, we do not assume that a perfect hypothesis is in the class. Instead, we aim to compete with the ``best’’ hypothesis in the class, acknowledging that some error is unavoidable. 
In the context of the agnostic setting, defining the ``best’’ hypothesis is subtle. One hypothesis might have high precision but low recall, while another has the opposite. Depending on the application's needs, one may prefer higher precision over recall or vice versa.

This naturally leads us to the concept of \textit{Pareto-loss} objective, which captures the trade-offs between precision and recall along the \textit{Pareto frontier}.\footnote{The term ``Pareto'' originates from Vilfredo Pareto, an economist who observed that certain distributions followed a pattern where improvements in one dimension often involved trade-offs in another. 
The concept of \href{https://en.wikipedia.org/wiki/Pareto_front}{Pareto frontier}, inspired by this principle, represents optimal trade-offs between competing objectives. Our Pareto loss captures the balance between precision and recall, aiming to improve both while acknowledging the 
trade-off.} 

Namely, the ``best'' hypotheses are on a Pareto frontier, which is the set of hypotheses where no other hypotheses are better in both precision and recall simultaneously (see, e.g., Figure~\ref{fig:Pareto}). 
Here, we try, given desired precision and recall losses parameters $(p,r)$ to return a hypothesis whose precision and recall losses $(p',r')$ satisfy $p'\lesssim p$ and $r'\lesssim r$. 
For example, one might aim to optimize precision while keeping recall below a specific threshold (e.g., at most $0.5$). 

\begin{figure}[h!]
    \centering
    \includegraphics[width=0.3\textwidth]{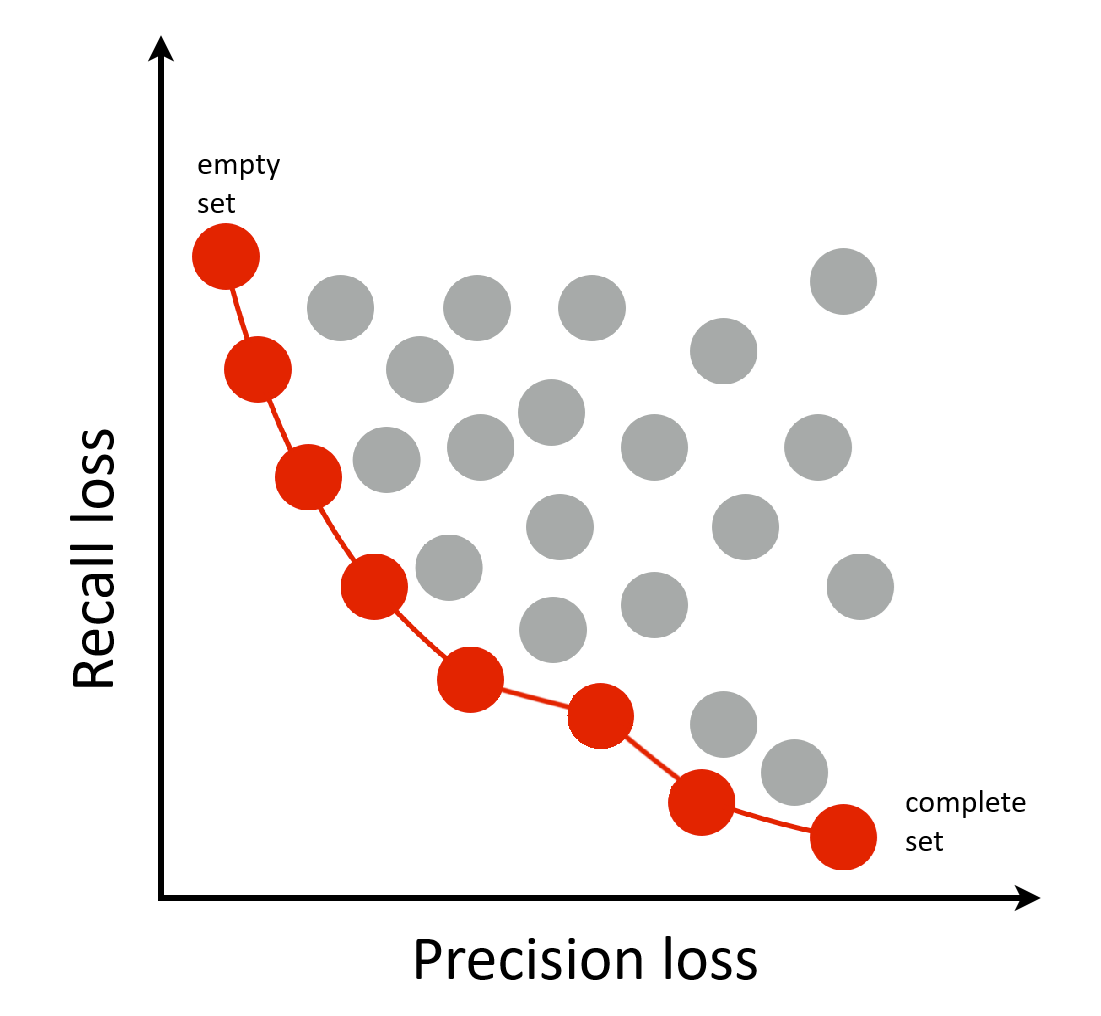}
    \caption{Example of hypotheses with varying precision and recall losses. Each point is a distinct hypothesis, with red points on the Pareto frontier, showing optimal trade-offs between precision and recall losses. The empty set function (which always return the empty set) always achieves zero precision loss (but has no guarantee on the recall loss), while the complete set function (which always return the entire label space $\cY$ for any given input $x$) always achieves zero recall loss (but has no guarantee on the precision loss).}\label{fig:Pareto}
\end{figure}


Our goal is to design algorithms whose sample complexity is polynomial in the log size of the class and the inverses of the accuracy and confidence parameters, and in some cases, on the maximum size of the output (which is arguably small in certain applications). 
We focus on finite hypothesis classes and aim for sample complexity bounds that depend logarithmically, rather than linearly, on the size of the class. This goal is motivated by standard sample complexity bounds in PAC learning and is particularly relevant when training data is costly to obtain, as is often the case with human-provided feedback.\interfootnotelinepenalty=10000{\footnote{While it would be interesting to explore infinite hypothesis classes and characterize them via a combinatorial measure in the style of VC dimension, the findings in~\cite{lechner2024inherent} suggest that such a dimension may not exist in this setting. We leave this as an intriguing open question, as addressing it falls beyond the scope of this work, which focuses on the finite case—a setting that is already challenging.}} 



Our learning problem is significantly more challenging than standard supervised learning tasks due to the absence of negative examples—that is, input-label pairs where the label is not in the ground-truth set. We observe only positive examples, making it impossible to estimate precision loss directly: without knowing which labels are incorrect, we cannot determine how many incorrect labels a hypothesis might output. This limitation undermines standard supervised learning approaches, which rely on access to both positive and negative examples.

In standard supervised learning, a classical solution known as \textit{Empirical Risk Minimization} (ERM) involves finding a hypothesis that best fits the data by minimizing the average loss over all observed examples. However, in our case, the absence of negative examples means that ERM cannot be applied effectively, as there is no way to determine how well a hypothesis avoids incorrect labels. For example, the complete set function (where $g(x)=\cY$ outputs all items) is consistent with every training set, since we have no negative examples to contradict it. However, such a hypothesis might have poor precision. Without negative examples in the training set, any hypothesis that covers all observed positive examples appears equally valid, even if it recommends many irrelevant items and incurs a high precision loss.
The failure of the ERM principle is not unique to our learning problem; it also occurs in other learning problems, such as multi-class classification~\cite{daniely2015multiclass}, density estimation~\cite{devroye2001combinatorial, bousquet2019optimal}, and partial concept learning~\cite{alon2022theory}.

In fact, it is not just that ERM would fail; we provide an example in which two hypotheses have nearly the same recall loss but very different precision losses, making it impossible to determine which hypothesis has better precision loss based solely on the training data (see Example~\ref{ex:2hypotheses} for more details). These challenges necessitate novel approaches to learning and evaluating hypotheses.

\paragraph{Our Contributions}
\begin{itemize}
    \item{\bf Learning Model:} We propose a learning model that operates under partial information, where the algorithm observes only positive examples. For each input drawn from unknown distribution~$\cD$, it observes one label sampled uniformly at random from its ground-truth set. This model reflects real-world constraints and is more practical than assuming access to complete ground-truth set as done in multi-label learning.
    \item{\bf Realizable Setting:} We design algorithms that, given a sample of size
    \[O\left(\frac{\log(|\mathcal{H}|/\delta)}{\epsilon}\right),\]
    achieve recall and precision losses of at most $\epsilon$ with probability at least $1 - \delta$. We propose two distinct approaches to achieve this goal: the first circumvents the challenge of estimating precision by minimizing an appropriate surrogate loss, while the second takes a more intuitive approach inspired by the maximum likelihood principle~\cite{Shalev-Shwartz_Ben-David_2014}. In essence, this second algorithm, when presented with two hypotheses consistent with the data, prioritizes the one with smaller output sizes in a suitably quantified sense.
    \item {\bf Agnostic Setting:} 
    We demonstrate that achieving a vanishing additive error, as is standard in learning theory, is impossible in this setting by providing lower bounds on the sum of precision and recall losses, with multiplicative factors greater than $1$. In the other direction, we show that constant multiplicative factor guarantees are indeed achievable by adapting our realizable setting algorithms to the agnostic case. Closing the gap between our upper and lower bounds on the best achievable multiplicative factor ($5$ vs. $1.05$) remains an open question.     
    For the Pareto-loss case, we establish both upper and lower bounds for the following question: Given that there exists a hypothesis in the class with precision and recall losses $(p, r)$, which guarantee pairs $(p', r')$ are achievable? 
    Finally, we pose open questions about 
    the optimal factors 
    in the agnostic setting. 



\end{itemize}

\paragraph{Related Work}
Precision and recall are natural and standard metrics used broadly in machine learning, spanning applications from binary classification~\citep{juba2019precision, diochnos2021learning}, multi-class classification~\citep{grandini2020metrics},
regression~\citep{torgo2009precision}, and time series~\citep{DBLP:conf/nips/TatbulLZAG18} to information retrieval~\citep{arora2016evaluation} and generative models~\citep{sajjadi2018assessing}.
Beyond precision-recall, another related metric---the area under the ROC curve (AUC)---has also been extensively studied in the history of binary classification~\citep{ DBLP:conf/nips/CortesM03, DBLP:conf/nips/CortesM04,DBLP:conf/icml/Rosset04, DBLP:journals/jmlr/AgarwalGHHR05},  with a focus on generalization. 
Our work, however, studies a different problem of multi-label learning where the goal is to output a list of labels for each input. In the context of recommender systems, recommending a list of items has also been addressed in cascading bandits~\citep{kveton2015cascading}. However, while our objective is to identify the items that each input likes, their focus is on learning the top $K$ items that are liked by most inputs. Another feature of our learning problem is that we can only learn from positive examples.
PAC learning for binary classification from positive examples has been studied in the literature~\citep{denis1998pac, de1999positive, letouzey2000learning, bekker2020learning}.

Multi-label learning~\citep{McCallum1999Multi,Schapire2000BoosTexterAB} has been an area of study in machine learning, with various, primarily experimental approaches (see, e.g., \citep{Elisseeff01,Petterson11,NKapoor12} and~\citep{Zhang14survey,BOGATINOVSKI2022survey} for surveys). In multi-label learning, the training set consists of examples, each associated with multiple labels rather than just one. The goal is to train a model that can learn the relationships between the features of each example and all its labels. At test time, the learner predicts a list of labels for new examples, aiming to capture all the relevant labels, rather than just a single one. 

Some works have examined multi-label learning from a theoretical standpoint, focusing in particular on the Bayes consistency of surrogate losses. Bayes-consistency in multi-label learning ensures that minimizing a surrogate loss also leads to minimizing the true target loss, which is crucial in multi-label settings where optimizing the actual loss is often computationally infeasible as it is non-convex, discrete losses in multi-label settings. Initiated by ~\cite{pmlr-v19-gao11a} who first addressed Bayes-consistency for Hamming and ranking losses, showing binary relevance’s consistency with Hamming loss but highlighting ranking loss difficulties. Extensions include rank-based metrics like precision@$\kappa$ and recall@$\kappa$~\citep{Menon19}, which are loss functions defined under the constraint that the number of labels predicted by the model is limited to 
$\kappa$. Recently,~\cite{mao2024multilabellearningstrongerconsistency} established $H$-consistency bounds for multi-label learning, offering stronger guarantees than Bayes-consistency by providing non-asymptotic guarantees that apply to finite number of samples. 
Our model is inherently more challenging than traditional multi-label learning because our training set consists of examples, each associated with only a single correct label rather than all possible correct labels, with no negative feedback. Yet, at test time, the learner still needs to predict a list of relevant labels for new examples.

\section{Model}
As is standard in learning theory, we assume a hypothesis class \( \mathcal{H} \) of set functions, our goal is to design algorithms whose sample complexity is polynomial in the log size of the class and the inverses of the accuracy and confidence parameters. More specifically, we are given a input space $\cX$, a label space $\cY$ and a hypothesis class $\cH$, in which every hypothesis $h:\cX\mapsto 2^\cY$ maps each $x$ to a set of labels in $\cY$. We denote an unknown target hypothesis \(g^{\text{target}} \).
The training set consists of a sequence \((x_i,v_i)_{i=1}^m\) with $(x_i,v_i)\in \cX\times \cY$. The inputs \( x_1, \ldots, x_m \) are drawn IID from unknown distribution \( \cD \). 
For each input \( x_i \), a random label \( v_i \) is drawn uniformly from its ground-truth set,  $g^{\text{target}}(x_i)$ defined by the target hypothesis. 
The algorithm then outputs a hypothesis \( g^{\text{output}} \), and the goal is to minimize the expected precision and recall losses:
\begin{align*}
     \lp(g^{\text{output}}) := \EEs{x\sim \cD}{\frac{|g^{\text{output}}(x)\setminus g^{\text{target}}(x)|}{n_{g^{\text{output}}}(x)}}\,,\\
    \lr(g^{\text{output}}) := \EEs{x\sim \cD}{\frac{|g^{\text{target}}(x)\setminus g^{\text{output}}(x)|}{n_{g^{\text{target}}}(x)}}\,,
\end{align*}
where for any hypothesis $g$, $n_{g}(x) = |g(x)|$ denotes the set size of $g$ at $x$.

We focus on designing learning rules that can compete with the ``best'' hypothesis in \( \cH \). Specifically, if there is a hypothesis \( g \in \cH \) with precision and recall losses \( p \) and \( r \), respectively, can we output a hypothesis \( g^{\text{output}} \) whose precision and recall losses are comparable to \( p \) and \( r \)? To answer this question we consider two natural metrics: scalar loss and Pareto loss.

\paragraph{Scalar-Loss Objective}
The scalar loss is defined as the average of precision and recall losses\footnote{The results can be generalized to any weighted sum of precision and recall losses via $w_1\lp(g)+w_2\lr(g)\leq 2\max(w_1,w_2)\ls(g)$.}
$$\ls(g) :=\frac{\lp(g)+\lr(g)}{2}\,.$$
For any $\alpha>0$, we say $\alpha$-approximate optimal scalar loss is achievable if 
there exists a polynomial \( P \) such that, for any finite hypothesis class \( \mathcal{H} \), there is an algorithm \( \cA \) such that the following holds:
For any \( \epsilon, \delta > 0 \) and any distribution $\cD$, if \( \cA \) is given an IID training set of size \( P(\log\lvert \mathcal{H} \rvert, 1/\epsilon, 1/\delta) \), with probability at least \( 1-\delta \), it outputs a hypothesis with scalar loss satisfying \[\ls(g^{\text{output}})  \leq \alpha\cdot \min_{g\in \cH} \ls(g)+\epsilon\,.\]

Our primary focus is on finite hypothesis classes, where we discuss dependencies on the cardinality of the hypothesis class, as is common in standard learning theory. 
We emphasize that the average loss is a natural way to combine precision and recall losses. Other scalarizations, such as the F1 score, are also possible. Several results in this paper extend to alternative definitions of scalar loss functions, which we discuss in the Discussion section.


\paragraph{Pareto-Loss Objective} 
Let \( p, p', r, r' \in [0,1] \).
We write \( (p,r) \implies (p',r') \) to denote the following statement: there exists a polynomial \( P \) such that, for any finite hypothesis class \( \mathcal{H} \), there is an algorithm \( \cA \) such that the following holds:
If \( \cD \) is a distribution for which there exists a hypothesis in \( \mathcal{H} \) with precision and recall losses \( (p, r) \), then for any \( \epsilon, \delta > 0 \), if \( \cA \) is given $p,r$ and an IID training set of size \( P(\log\lvert \mathcal{H} \rvert, 1/\epsilon, 1/\delta) \), with probability at least \( 1-\delta \), it outputs a hypothesis with precision and recall losses at most \( p' + \epsilon \) and \( r' + \epsilon \).\footnote{Actually, our algorithms only requires the knowledge of $r$.}

We are asking the following a question for each of our losses:
\begin{framed}
\vspace{-3ex}
\begin{center}
\textbf{What is the smallest $\alpha$ such that $\alpha$-approximate optimal scalar loss is achievable?}\\
\textbf{Given \( p,r \in [0,1] \), which pairs \( (p',r') \) satisfy \( (p,r) \implies (p',r') \)?} 
\end{center}
\vspace{-2ex}
\end{framed}

\vspace{-1ex}

This hypothesis learning problem is considerably more challenging than standard supervised learning. If the entire ground-truth set $g^{\text{target}}(x_i)$ were observed in the training set rather than a random label \( v_i \sim \text{Unif}(g^{\text{target}}(x_i)) \), the task would reduce to standard supervised learning. However, observing only a random label prevents an unbiased estimate of precision loss, complicating the problem. We demonstrate it in the following example.
\begin{example}\label{ex:2hypotheses}
In Fig~\ref{fig:example}, the target hypothesis associates input $x_i$ with a large number $n$ of labels. Our hypothesis class contains two hypotheses, each predicting exactly one label for input \( x_i \). In the red hypothesis \( g_1 \), the predicted label is always a true positive (i.e., it belongs to the ground-truth set), while in the blue hypothesis \( g_2 \), the predicted label is a false positive (i.e., it is not in the ground-truth set). Both hypotheses incur high recall loss: $\lr(g_1) = \frac{n-1}{n}$ and $\lr(g_2) = 1$. However, they differ significantly in precision loss: \( g_1 \) has a precision loss of $0$, as it always predicts a correct label, whereas \( g_2 \) has a precision loss of 1, as it always predicts an incorrect label. Since $n$ is large, the probability of observing either predicted label in the training data is very low, making it impossible to distinguish between these two hypotheses—despite their drastically different precision losses.
\end{example}

\begin{figure}[H]
    \centering
    \begin{tikzpicture}
    \node[draw, circle, fill=black, minimum size=1mm, inner sep=0pt, label=above:$x_i$] (x) at (0,0) {};
    \node[draw, circle, fill=white, minimum size=1mm, inner sep=0pt, label=below:$u_1$] (v1) at (-3,-1.5) {};
    \node[draw, circle, fill=white, minimum size=1mm, inner sep=0pt, label=below:$u_2$] (v2) at (-1.5,-1.5) {};
    \node[draw, circle, fill=white, minimum size=1mm, inner sep=0pt, label=below:$u_3$] (v3) at (0,-1.5) {};
    \node[draw, circle, fill=white, minimum size=1mm, inner sep=0pt, label=below:$u_{n-1}$] (vn) at (2.5,-1.5) {};
    \node[draw, circle, fill=white, minimum size=1mm, inner sep=0pt, label=below:$u_{n}$] (vn1) at (4,-1.5) {};
    \node[draw, circle, fill=white, minimum size=1mm, inner sep=0pt, label=below:$u'$] (u) at (5.5,-1.5) {};

    \draw[thick] (x) -- (v1);
    \draw[thick] (x) -- (v2);
    \draw[thick] (x) -- (v3);
    \draw[thick] (x) -- (vn);
    \draw[thick] (x) -- (vn1);

    \draw[thick, red, bend left=5] (x) to (vn1);

    \draw[thick, blue] (x) -- (u);

    \node at (1.25,-1.5) {$\cdots$};

\end{tikzpicture}
    \caption{The target hypothesis (black) outputs $g^{\text{target}}(x_i) = \{u_1,\ldots,u_n\}$ where $n$ is huge. The hypothesis $g_1$ (red) outputs only one label $u_n\in g^{\text{target}}(x_i)$ while $g_2$ (blue) outputs only one label $u'\notin g^{\text{target}}(x_i)$.} 
    \label{fig:example}
\end{figure}


One might argue that the issue in the above example arises from the large size of the ground-truth set; however, we will later show that even when the ground-truth set has a small size, accurately estimating and optimizing precision remains impossible.

We emphasize that, unlike in other supervised learning settings, such as PAC learning, where minimizing empirical risk is often straightforward (e.g., by outputting any classifier that 
is consistent with the training set), here the learner only observes a single label \(v_i\) per input \(x_i\), rather than its entire ground-truth set. As a result, minimizing empirical precision loss in this context is far from trivial. For instance, regardless of the ground-truth set, a complete set function, which outputs the entire label space $\cY$ for any input, is always consistent with the training data but can still incur high precision loss.

\section{Main Results}
\paragraph{The Realizable Setting} We begin by presenting our results in the realizable setting, where the target hypothesis belongs to the hypothesis class~$\cH$. In this case, there is no distinction between optimizing the scalar-loss objective and the Pareto-loss objective. We propose two new algorithms that achieve both the scalar-loss and Pareto-loss objectives: one based on maximum likelihood estimation, and the other on minimizing a surrogate loss.

\begin{restatable}{theorem}{realizable}\label{thm:realizable}
     In the realizable setting, there exist algorithms such that given an IID training set of size $m\geq O(\frac{\log(|\cH|/\delta)}{\epsilon})$, with probability at least $1-\delta$, the output hypothesis $g^{\text{output}}$ satisfies
    \begin{align*}
        \lr( g^{\text{output}}) \leq \epsilon\,,        \quad\lp(g^{\text{output}}) \leq  \epsilon\,.
        \end{align*}
\end{restatable}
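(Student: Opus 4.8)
The plan is to describe the two algorithms separately, since the theorem promises both; I will sketch the surrogate-loss approach first as it is conceptually cleaner, then the maximum-likelihood approach.

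\textbf{Recall is easy; precision is the obstacle.} First observe that recall is not the hard part. For any hypothesis $g$, a uniformly random label $v$ drawn from $g^{\text{target}}(x)$ lies outside $g(x)$ with probability exactly $\lvert g^{\text{target}}(x)\setminus g(x)\rvert / \lvert g^{\text{target}}(x)\rvert$, so the probability (over $x\sim\cD$ and then $v$) that the observed label is \emph{not} covered by $g$ equals $\lr(g)$ exactly. Hence the empirical fraction of training pairs $(x_i,v_i)$ with $v_i\notin g(x_i)$ is an unbiased estimate of $\lr(g)$, and a one-sided uniform-convergence argument over the finite class $\cH$ (Chernoff plus a union bound over $\lvert\cH\rvert$ hypotheses, which is where the $\log(\lvert\cH\rvert/\delta)/\epsilon$ sample size comes from, in the realizable regime where the target has zero empirical recall error) shows that every $g$ consistent with the data — meaning $v_i\in g(x_i)$ for all $i$ — has $\lr(g)\le\epsilon$ with probability $1-\delta/2$. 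The target hypothesis $g^{\text{target}}$ is always consistent, so the set of consistent hypotheses is nonempty. The entire difficulty is that ``consistent'' does not pin down precision: the complete set function is consistent too. So among all consistent hypotheses we must select one that also has small precision loss, using only the positive data.

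\textbf{Surrogate-loss algorithm.} The idea is to replace the unobservable precision loss by a surrogate that (i) can be estimated from positive data and (ii) upper bounds precision loss on consistent hypotheses. A natural candidate: for a consistent hypothesis $g$, its precision loss is $\E_{x}[\,\lvert g(x)\setminus g^{\text{target}}(x)\rvert / n_g(x)\,]$, and since $g$ is consistent we know $g^{\text{target}}(x)\cap g(x)$ contains at least the observed label, so $\lvert g(x)\setminus g^{\text{target}}(x)\rvert \le n_g(x) - (\text{number of distinct observed labels falling in } g(x))$ on inputs we have seen; more usefully, one bounds $\lp(g)$ by a quantity of the form $\E_x[1 - 1/n_g(x)]$ or a smoothed version, and picks the consistent hypothesis minimizing the empirical version of this surrogate. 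The key lemma to prove is that for \emph{the} realizable target, which satisfies $g^{\text{target}}\subseteq g$ is false in general — rather, the right statement is that minimizing output-set sizes among consistent hypotheses cannot overshoot: any consistent $g$ whose surrogate beats that of $g^{\text{target}}$ must itself have small precision loss. I would set up the surrogate so that $\lp(g)\le \text{surrogate}(g) - \text{surrogate}(g^{\text{target}}) + (\text{small slack})$ on the consistent set, then apply uniform convergence again.

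\textbf{Maximum-likelihood algorithm.} Here the cleaner route: the likelihood of the sample $(x_i,v_i)$ under hypothesis $g$ (pretending $g$ generated the labels uniformly from $g(x_i)$) is $\prod_i \frac{1}{n_g(x_i)}\mathbbm{1}[v_i\in g(x_i)]$, so the MLE restricted to consistent hypotheses is exactly the consistent $g$ minimizing $\sum_i \log n_g(x_i)$, i.e.\ the one that is ``smallest'' in the sense of average log output size on the sample. One shows: (a) $g^{\text{target}}$ has expected log-size $\E_x[\log n_{g^{\text{target}}}(x)]$, and any consistent competitor $g$ with substantially larger precision loss must, to stay consistent, have output sets that contain the true labels while being inflated by false positives, forcing $\E_x[\log n_g(x)]$ to exceed $\E_x[\log n_{g^{\text{target}}}(x)]$ by a margin controlled by $\lp(g)$; (b) hence a Chernoff/union-bound over $\cH$ gives that the empirical log-size minimizer has precision loss $\le\epsilon$. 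The quantitative heart is the inequality linking $\lp(g)$ to $\E_x[\log n_g(x)] - \E_x[\log n_{g^{\text{target}}}(x)]$ for consistent $g$ — this is where one must be careful that a false positive added to a large set changes the log only slightly, but then that input also contributes little to $\lp(g)$ because $n_g(x)$ is large; balancing these is exactly the content of Example~\ref{ex:2hypotheses} turned into a positive result (there the two hypotheses have the \emph{same} output size $1$, so MLE correctly cannot and need not distinguish them — but both have recall loss $\approx 1$, so they are not consistent-and-good, and the theorem is about what survives).

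\textbf{Main obstacle.} The crux in both approaches is proving the comparison inequality that controls precision loss of a consistent hypothesis by an \emph{observable} quantity (surrogate value, or average log output size) relative to the target's value — i.e.\ showing that you cannot have large precision loss while keeping the observable small. Everything else (unbiasedness of the recall estimate, Chernoff bounds, the union bound over $\lvert\cH\rvert$ giving the $\log(\lvert\cH\rvert/\delta)/\epsilon$ rate) is routine PAC bookkeeping. I would therefore spend the bulk of the argument isolating and proving that inequality, and then assemble the two algorithms as corollaries of it.
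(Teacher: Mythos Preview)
Your recall analysis and the MLE setup are right and match the paper. The gap is in step~(a) of your maximum-likelihood argument. You assert that a consistent $g$ with large precision loss must have $\sum_i \log n_g(x_i)$ exceed $\sum_i \log n_{g^{\text{target}}}(x_i)$ by a margin governed by $\lp(g)$. But consistency only forces $v_i\in g(x_i)$ for the \emph{single observed label}, not $g^{\text{target}}(x_i)\subseteq g(x_i)$; nothing prevents a fixed $g\in\cH$ from having, at some $x_i$, a small output set containing one target label plus many non-target labels --- high $\lp(g,x_i)$, yet $\log n_g(x_i)<\log n_{g^{\text{target}}}(x_i)$. Your balancing remark handles the overshoot case ($n_g\gg n_{g^{\text{target}}}$) but says nothing about this undershoot case, and it is exactly the undershoot case that lets a bad hypothesis beat $g^{\text{target}}$ in log-size.

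The paper closes this with a probabilistic step you do not mention. It first proves
\[
\hlp(g)\ \le\ \frac{2}{m}\sum_{i:\,n_g(x_i)\ge n_{g^{\text{target}}}(x_i)}\log\frac{n_g(x_i)}{n_{g^{\text{target}}}(x_i)}\ +\ 2\,\hlr(g),
\]
then uses the MLE property to bound the positive-ratio sum by the magnitude of the negative-ratio sum $\sum_{i:\,n_g(x_i)<n_{g^{\text{target}}}(x_i)}\log\frac{n_{g^{\text{target}}}(x_i)}{n_g(x_i)}$. The key point is that any fixed $g$ for which this negative sum is large is \emph{unlikely to have been consistent in the first place}: conditional on $x_{1:m}$,
\[
\Pr_{v_{1:m}}(g\text{ consistent})\ \le\ \prod_{i:\,n_g(x_i)<n_{g^{\text{target}}}(x_i)}\frac{n_g(x_i)}{n_{g^{\text{target}}}(x_i)},
\]
so a union bound over $\cH$ eliminates all such $g$. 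This is not a Chernoff bound on log-sizes as in your step~(b); it is a direct bound on the consistency event, and it is the missing idea. (Separately, your surrogate sketch --- based on something like $\E_x[1-1/n_g(x)]$ --- is not the paper's surrogate, which is the metric $d_\cH$ built from the vectors $v_g(g',g'')=\frac{1}{m}\sum_i U_i^g(g'(x_i)\setminus g''(x_i))$ and shown to sandwich the scalar loss.)
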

Since $\1{v_i\notin g(x_i)}$ is an unbiased estimate of the recall loss $\lr(g)$, any consistent hypothesis (i.e., hypothesis $g$ with $\sum_{i=1}^m \1{v_i\notin g(x_i)} =0$) will have low recall loss. But ERM does not work as the training set contains only positive examples, and a complete set function is consistent with any training set but can incur high precision loss. Hence, the main challenge lies in minimizing the precision loss. Below are high-level descriptions of two algorithms designed to tackle this problem.

\paragraph{Algorithm 1: Maximum Likelihood.} One of our proposed algorithms is based on the natural idea of \textit{maximum likelihood}. At a high level, although multiple hypotheses may be consistent with the training set, for any observed input-label pair $(x_i, v_i)$, if $v_i$ is contained in $g(x_i)$, the probability of observing this label is $\frac{1}{n_g(x_i)}$ when $g$ is the target hypothesis. 
The maximum likelihood method returns  
$g^{\text{output}} = \argmax_{g\in \cH} \prod_{i=1}^m \frac{\1{v_i\in g(x_i)}}{n_g(x_i)}$,
which is equivalent to returning the hypothesis with the minimum sum of log output sizes among all consistent hypotheses, i.e., 
\begin{equation*}
    g^{\text{output}} = \argmin_{g: g \text{ is consistent}}\sum_{i=1}^m \log(n_g(x_i))\,.
\end{equation*}
Consequently, we can rule out the complete set function, as its sum of log output size $\sum_{i=1}^m \log(n_g(x_i))$ is huge.
Any consistent hypothesis will have low empirical recall loss by applying standard concentration inequality and thus, $\hlr(g^{\text{output}})$ is small and we only need to show that the empirical precision loss is small. The main technical challenge in the analysis is \textit{how to connect precision loss with likelihood}.

\paragraph{Algorithm 2: Minimizing a Surrogate Loss.} The other algorithm is more directly aligned with the scalar-loss objective. While we cannot obtain an unbiased estimate of the precision loss, and hence the scalar loss, we introduce a surrogate loss that both upper- and lower-bounds the scalar loss within a constant multiplicative factor. Then we output a hypothesis minimizing this surrogate loss.
 
For any hypothesis \( g\), we define a vector $v_g:\cH\times\cH \to [0,1]$ by
\begin{align*}
v_g(g',g'')  =\frac{1}{m} \sum_{i=1}^m \PPs{v\sim \Unif(g(x_i))}{v\in g'(x_i)\setminus g''(x_i)}.
\end{align*}
Intuitively, $v_g(g',g'')$ represents the fraction of correct labels that are output by $g'$ but not by $g''$ in the counterfactual scenario where $g$ is the target hypothesis. If $g$ is indeed the target hypothesis, then this quantity should be  consistent with our training data, \(v_g(g',g'')\approx v_{\hat g}(g',g'')\),
where $\hat g$ is the observed (empirical) hypothesis; i.e., the hypothesis in which every $x_i$ is only associated with the random label $v_i$ which is observed in the training set. 

We then define a metric $d_\cH$ between two hypotheses $g_1$ and $g_2$ by
\[d_\cH({g_1,g_2}) = \|v_{g_1}-v_{g_2}\|_\infty\,.\]
Surprisingly, we show that $d_\cH(g^{\text{target}},g)$ is a surrogate for the scalar loss, providing both lower and upper bounds on the scalar loss with a constant multiplicative factor.

\paragraph{The Agnostic Setting} 
In the agnostic setting, we show in the next two theorems that it is impossible to achieve an additive error for both scalar-loss and Pareto-loss objectives as is standard in learning theory.
\begin{restatable}{theorem}{lowerscalar}\label{thm:lowerscalar}
    There exists a class $\cH =\{g_1,g_2\}$ of two hypotheses, such that for any (possibly randomized improper) algorithm, there exists a target hypothesis $g^{\text{target}}$ with bounded output set size (that is, there exists a constant $C$ such that $|g^{\text{target}}(x)| \leq C$ for all $x \in \mathcal{X}$) and a data distribution $\cD$ with $\min_{g\in\cH}(\ls(g))>0$ s.t. for any sample size $m>0$, with probability $1$ over the training set, the expected (over the randomness of the algorithm) loss of the output $g^{\text{output}}$
    $$\EE{\ls(g^{\text{output}})}\geq 1.05\cdot \min_{g\in\cH}(\ls(g))\,.$$
\end{restatable}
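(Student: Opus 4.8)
The plan is to exhibit a concrete two-hypothesis class and a family of target/distribution pairs that are statistically indistinguishable from the learner's point of view, yet force any output to pay a multiplicative penalty. The construction should echo Example~\ref{ex:2hypotheses}: take $\cH=\{g_1,g_2\}$ where on a single input $x$ (so $\cD$ is a point mass, or a tiny support) $g_1(x)$ and $g_2(x)$ are singletons $\{a_1\}$ and $\{a_2\}$ with $a_1\neq a_2$. The adversary picks a ground-truth set $g^{\text{target}}(x)$ of bounded size $C$ that contains exactly one of $a_1,a_2$ and excludes the other; which one it contains is the hidden bit. Because the observed label $v$ is drawn uniformly from $g^{\text{target}}(x)$, and $|g^{\text{target}}(x)|=C$, the chance that any sample ever reveals $a_1$ or $a_2$ can be made $0$ by placing $a_1,a_2$ \emph{outside} $g^{\text{target}}(x)$ entirely in one version — wait, that breaks realizability of neither hypothesis but we are agnostic, so that is allowed. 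Cleaner: use two target hypotheses $g^{\text{target}}_1, g^{\text{target}}_2$, where under $g^{\text{target}}_j$ the label $a_j$ is the unique element of $\{a_1,a_2\}$ lying in the ground-truth set, and the ground-truth set is otherwise a common pool $\{w_1,\dots,w_{C-1}\}$ of "filler" labels shared by both. Then the distribution of the observed sample $(x,v)_{i=1}^m$ is \emph{identical} under $g^{\text{target}}_1$ and $g^{\text{target}}_2$ as long as $v$ never equals $a_1$ or $a_2$; by choosing $C$ large — or, to keep $C=O(1)$, by putting $a_1,a_2\notin g^{\text{target}}$ at all and letting the pool be $\{w_1,\dots,w_C\}$ with neither $a_j$ in it — the two worlds produce literally the same training-set distribution, so no algorithm (randomized, improper) can correlate its output with the hidden bit.

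Next I would compute the losses. Under $g^{\text{target}}_j$: the "good" hypothesis $g_{3-j}$ — the one whose singleton misses $\{a_1,a_2\}\cap g^{\text{target}}$... let me instead fix the accounting directly. With $a_1,a_2\notin g^{\text{target}}$ and $g^{\text{target}}(x)=\{w_1,\dots,w_C\}$ in \emph{both} worlds, both $g_1,g_2$ have precision loss $1$ (they output a single wrong label) and recall loss $1$, giving $\ls=1$ and $\min_{g\in\cH}\ls(g)=1$; that yields no separation. So the worlds must genuinely differ. The fix: in world $j$, set $g^{\text{target}}_j(x)=\{a_j\}\cup\{w_1,\dots,w_{k}\}$ for a suitable $k$, so $a_j$ \emph{is} correct and $a_{3-j}$ is not. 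Then $g_j$ has precision loss $0$, recall loss $1-\tfrac{1}{k+1}=\tfrac{k}{k+1}$; $g_{3-j}$ has precision loss $1$, recall loss $1$. Hence $\ls(g_j)=\tfrac{k}{2(k+1)}$ and $\ls(g_{3-j})=1$, so $\min_g\ls(g)=\tfrac{k}{2(k+1)}$, achieved by the world-dependent hypothesis. Now the observed label $v$ is uniform on a set of size $k+1$; it equals $a_j$ with probability $\tfrac{1}{k+1}$ — this is the only leak. To kill the leak I want $k$ large, but then $\min\ls\to\tfrac14$ and the gap between "right" and "wrong" choice is $1$ vs $\tfrac14$, a factor $4$ — but the leak probability $\tfrac{m}{k+1}$ is not $0$. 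Since the theorem demands probability-$1$ over the training set, I instead make the two worlds share the \emph{same} observable support: let $g^{\text{target}}_j(x) = \{w_1,\dots,w_C\}\cup\{a_j\}$ but also, symmetrically, declare that $v$ is drawn uniformly from the \emph{fillers only}... that contradicts the model. The honest resolution: accept leak probability $p_{\text{leak}}=1-(1-\tfrac{1}{k+1})^m$, condition on the (probability-$1$, if $a_j\notin g^{\text{target}}$) no-leak event — meaning we go back to $a_j$ never observable, i.e. put $a_1, a_2$ \emph{both} in \emph{neither} target but distinguish worlds by which fillers they use, with $g_1,g_2$ predicting labels that are, respectively, a filler of world $1$ / a filler of world $2$. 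Concretely: $g^{\text{target}}_1(x)=\{b_1,c\},\ g^{\text{target}}_2(x)=\{b_2,c\}$ with $b_1,b_2,c$ distinct; $g_1(x)=\{b_1\}$, $g_2(x)=\{b_2\}$. Observed $v\in\{b_j,c\}$ uniformly. Now $v=c$ w.p. $\tfrac12$ (uninformative), $v=b_j$ w.p. $\tfrac12$ (fully informative) — still leaks. It seems any leak-free construction forces $\ls$ symmetric.

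The resolution I will actually pursue — and the step I expect to be the genuine obstacle — is to allow the leak but bound its \emph{effect}: use Le Cam's two-point method. Pick the two worlds so the total-variation distance between the $m$-fold sample laws is at most some $\tau<1$; then any algorithm errs (fails to identify the better hypothesis) with probability $\geq\tfrac{1-\tau}{2}$ in at least one world, and on that event pays $\ls(g^{\text{output}})\ge 1$ while $\min_g\ls(g)=\tfrac{k}{2(k+1)}$, so $\E[\ls(g^{\text{output}})]\ge \tfrac{1-\tau}{2}\cdot 1 + (\text{rest})\ge 1.05\cdot\tfrac{k}{2(k+1)}$ for suitable constants. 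To get probability $1$ over the training set (as the theorem states, not just "with high probability"), I note the sample is discrete and for \emph{every} realization that lies in the common support the posterior on the hidden bit is unchanged, so the algorithm's conditional output distribution is the same in both worlds; averaging over worlds, on \emph{every} training set the algorithm must pay the average of its two-world losses, and picking the worlds adversarially after seeing the (world-independent) output rule gives the bound for every sample. So the key steps are: (1) define the two worlds with a large common "filler" pool so the sample law is nearly world-independent, quantifying $\tau=\TV \le O(m/k)$, then take $k$ polynomially larger than any $P(\log|\cH|,1/\eps,1/\delta)$ — but since the bound must hold for \emph{all} $m$, instead take the filler pool \emph{countably infinite} with $v$ uniform over it in a way that makes $b_1,b_2$ measure-zero, so $\TV=0$ exactly and the leak event has probability $0$; (2) compute the four loss values and verify $\min_g\ls(g)=\tfrac{k}{2(k+1)}$ (with the infinite-pool limit, $\to\tfrac14$... but we need $\min\ls>0$, fine, any $k$ works) and that the wrong choice costs $\ge 1$; (3) run the two-point / symmetrization argument to conclude $\E[\ls(g^{\text{output}})]\ge \tfrac12(1)+\tfrac12(\tfrac{k}{2(k+1)})$ in the worse world, and check $\tfrac{\tfrac12 + \tfrac{k}{4(k+1)}}{\tfrac{k}{2(k+1)}} = \tfrac{k+1}{k}+\tfrac12 \ge \tfrac32 > 1.05$. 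The main obstacle is engineering the model so that the informative event ($v\in\{b_1,b_2\}$) genuinely has probability zero while keeping $|g^{\text{target}}(x)|$ bounded by a constant $C$ and keeping $\min_g\ls(g)$ bounded away from $0$; I expect this needs either a continuum of filler labels (with the counting/uniform measure suitably interpreted) or a careful finite construction where $b_1,b_2$ are replaced by labels that $g_1,g_2$ output but that are \emph{never} in any target — in which case the separation must instead come from the \emph{recall} side, distinguishing worlds by a second coordinate of $\cX$ that the sample, by design, never hits. Reconciling boundedness of $C$ with zero leak probability is precisely where the construction has to be clever, and is where I would spend the effort.
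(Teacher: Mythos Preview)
Your proposal has a genuine gap: you correctly identify that a two-world indistinguishability argument is needed, but your constructions cannot simultaneously achieve zero information leak and bounded $|g^{\text{target}}(x)|$, and you end by acknowledging exactly this as the unresolved obstacle. The approaches you try --- singletons $g_j(x)=\{a_j\}$ with the hidden bit being ``which label is correct'' --- are inherently leaky: with $|g^{\text{target}}(x)|=C<\infty$, the informative label $a_j$ is observed with probability $1/C>0$, and pushing $C\to\infty$ violates the boundedness requirement. Your fallback ideas (continuum of fillers, a second coordinate of $\cX$ never hit by the sample) are too vague to constitute a plan.

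The paper's construction avoids this trap via two ideas you are missing. First, the hidden parameter is not \emph{which} label is correct but the \emph{density} $\beta$ of correct labels: for each $x$ there are $n$ items split as $N_1(x),N_2(x)$ of size $n/2$ each, with $g_1(x)=N_1(x)$ and $g_2(x)=N_1(x)\cup N_2(x)$; the target is \emph{randomized}, drawing $\tfrac34\beta n$ items uniformly from $N_1(x)$ and $\tfrac14\beta n$ from $N_2(x)$. The point is that, marginalizing over this random target, the distribution of the observed label $v$ is the \emph{same for every} $\beta$ (each item in $N_1$ appears with probability $3/(2n)$, each in $N_2$ with probability $1/(2n)$), so a single observation at $x$ carries zero information about $\beta$. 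Second, to prevent the learner from accumulating information by seeing the same $x$ repeatedly, $\cX$ is taken infinite with $\cD$ uniform, so almost surely no $x$ repeats. With these in place, the losses $\ls(g_1)=\tfrac58-\tfrac34\beta$ and $\ls(g_2)=\tfrac12-\tfrac12\beta$ cross as $\beta$ varies, and a case analysis on the (improper) output's overlap fractions $\alpha_1,\alpha_2$ with $N_1,N_2$ shows that for either $\beta=\tfrac18$ or $\beta=\tfrac23$ the output underperforms the better of $g_1,g_2$ by the claimed factor. Your singleton hypotheses and ``which label'' hidden bit should be replaced by large-output hypotheses and a ``how many labels'' hidden parameter with a randomized target.
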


\begin{restatable}{theorem}{lowermulti}\label{thm:lowermulti}
    There exists a class $\cH =\{g_1,g_2\}$ of two hypotheses, such that for any (possibly randomized improper) algorithm given the knowledge of $(p,r) = (\frac{7}{16}, \frac{1}{4})$, there exists a target hypothesis $g^{\text{target}}$ with bounded output set size and a data distribution $\cD$ for which there exists a hypothesis $g^\dagger\in \cH$ with $\lr(g^\dagger) = \frac{1}{4}$
    and $\lp(g^\dagger) = \frac{7}{16}$
    s.t. for any sample size $m>0$, with probability $1$ over the training set, the expected (over the randomness of the algorithm) precision and recall losses of the output $g^{\text{output}}$ satisfy
    \[\EE{\lr(g^\text{output})}+\frac{12}{5}\EE{\lp(g^\text{output})}\geq \frac{7}{5}\,.\]
\end{restatable}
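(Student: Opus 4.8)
The plan is to construct an indistinguishability instance in the spirit of Example~\ref{ex:2hypotheses}: two candidate target hypotheses $g^{(A)}$ and $g^{(B)}$ that induce \emph{exactly the same distribution over training samples} $(x_i,v_i)$, so that no algorithm — even improper and randomized — can behave differently across the two worlds. I would let $\cX=\{x_0\}$ be a single point (or a tiny set, to also control output sizes), and design the ground-truth sets so that the observed label $v$ has an identical law under both worlds; the standard way to do this is to make the \emph{true positive} part of each hypothesis' prediction overlap the ground truth in a way that is masked by a large pool of unobservable labels (the ``$n$ huge'' trick), so that $v$ is, with probability $1$, drawn from a common ``decoy'' set that is a subset of both $g^{(A)}(x_0)$ and $g^{(B)}(x_0)$. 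Then $\cH=\{g_1,g_2\}$ is fixed, and I choose $g^{(A)},g^{(B)}$ so that in world $A$ the hypothesis $g^\dagger := g_1$ achieves $(\lp,\lr)=(\tfrac{7}{16},\tfrac14)$ while $g_2$ is bad, and in world $B$ the roles are swapped (with $g_2$ now achieving $(\tfrac{7}{16},\tfrac14)$ and $g_1$ bad). The key design constraint is that $g_1$'s precision and recall against $g^{(B)}$, and $g_2$'s against $g^{(A)}$, are forced to be large enough that the weighted objective $\lr+\tfrac{12}{5}\lp$ cannot be made small in both worlds simultaneously.

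The next step is the minimax/averaging argument. Fix any (randomized, improper) algorithm $\cA$. Since the sample distribution is identical in worlds $A$ and $B$, the (random) output $g^{\text{output}}$ has the same law in both worlds, conditioned on the sample; write $q$ for the expected ``mass'' the algorithm places, in a suitable sense, on outputs that look like $g_1$ versus $g_2$ (more precisely, I would directly bound $\EE{\lr(g^{\text{output}})}+\tfrac{12}{5}\EE{\lp(g^{\text{output}})}$ as a function of the output's behavior, valuing it in world $A$ versus world $B$). Averaging the two worlds, the adversary can pick whichever world makes the algorithm's expected weighted loss larger, so it suffices to show
\[
\tfrac12\Big(\EE{\lr+\tfrac{12}{5}\lp \mid A}\Big) + \tfrac12\Big(\EE{\lr+\tfrac{12}{5}\lp \mid B}\Big) \;\geq\; \tfrac{7}{5}
\]
for \emph{every} possible output behavior — i.e.\ a pointwise inequality over the output. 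Because any output hypothesis $g$ (proper or not) contributes $\lr(g)+\tfrac{12}{5}\lp(g)$ in world $A$ and likewise in world $B$, and because the two worlds' loss functionals are ``complementary'' by the symmetric construction, the left side is minimized at a deterministic output and reduces to checking finitely many cases (output $\approx g_1$, output $\approx g_2$, or the trivial empty/complete set functions). I would verify the numbers $(p,r)=(\tfrac{7}{16},\tfrac14)$, the multiplier $\tfrac{12}{5}$, and the target $\tfrac{7}{5}$ are exactly the values making this pointwise bound tight — these constants are presumably reverse-engineered from the optimization $\min_g \max_{\text{world}} [\lr(g)+\lambda\lp(g)]$ over the feasible $(\lp,\lr)$ pairs the construction allows.

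The main obstacle is the construction itself: getting the \emph{same sample distribution} in both worlds while simultaneously (i) keeping $|g^{\text{target}}(x)|$ bounded by an absolute constant $C$, and (ii) forcing the precision/recall of the ``wrong'' hypothesis in each world to be exactly large enough to yield the clean constant $\tfrac75$. The bounded-output-size requirement is the delicate part, since the naive ``$n$ huge'' construction of Example~\ref{ex:2hypotheses} uses an unboundedly large ground-truth set; I would instead use a \emph{distribution over inputs} with many rare inputs $x_j$, each with a small ground-truth set, arranged so that the marginal law of $(x,v)$ coincides across worlds while the per-input precision contributions aggregate to the desired averages. Once the instance is in hand, the averaging argument and the case check are routine. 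A secondary subtlety is handling randomized improper algorithms cleanly — but since the sample law is world-independent, conditioning on the sample and then on the algorithm's internal randomness reduces everything to the deterministic pointwise inequality, so no extra work beyond linearity of expectation is needed.
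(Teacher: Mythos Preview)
Your high-level strategy --- two symmetric worlds with identical sample laws, then an averaging/minimax argument reducing to a pointwise inequality over all possible (improper) outputs --- is exactly the paper's approach. However, the proposal has two genuine gaps relative to the paper, both in the construction rather than the analysis.

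First, your mechanism for making the sample law world-independent is not the right one. You suggest a ``decoy set'' $D$ contained in both $g^{(A)}(x_0)$ and $g^{(B)}(x_0)$ from which $v$ is drawn with probability $1$; but $v$ must be uniform over $g^{\text{target}}(x)$, so this would force $g^{\text{target}}(x)=D$ in both worlds, collapsing the two worlds into one. The paper's device is different and more subtle: the input space is infinite (so with probability $1$ no $x$ repeats), and \emph{within each world the target is randomized per input}. Concretely, each $x$ has $12$ private labels partitioned into three blocks of four; $g_1$ outputs blocks $1$--$2$, $g_2$ outputs blocks $2$--$3$. In World~I, for each $x$ independently, $g^{\text{target}}(x)$ equals $g_1(x)$ with probability $\tfrac12$ and otherwise equals a random two-element set with one label from block~$2$ and one from block~$3$; World~II swaps the roles of $g_1,g_2$. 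A short calculation shows that in \emph{both} worlds the marginal of $v$ is the $\tfrac12$--$\tfrac12$ mixture of $\Unif(g_1(x))$ and $\Unif(g_2(x))$, so the worlds are indistinguishable; yet $|g^{\text{target}}(x)|\in\{2,8\}$, giving the required bounded output size, and in World~I one computes $\lr(g_1)=\tfrac14$, $\lp(g_1)=\tfrac{7}{16}$ (with World~II symmetric). Your ``many rare inputs'' instinct is correct, but the per-$x$ randomization of the target is the missing ingredient.

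Second, the pointwise verification is not a check of ``output $\approx g_1$, output $\approx g_2$, empty, complete.'' Any improper output at an unseen $x$ is parametrized by the triple $(n_1,n_2,n_3)\in\{0,\dots,4\}^3$ counting how many labels it picks from each block. The paper computes, averaged over the two worlds and over the random target, the recall and precision as explicit functions of $(n_1,n_2,n_3)$ and shows $\text{recall}+\tfrac{12}{5}\text{precision}\le 2$ (equivalently $\lr+\tfrac{12}{5}\lp\ge\tfrac75$) for all such triples by a short optimization. This is where the constants $\tfrac{12}{5}$ and $\tfrac{7}{5}$ actually arise; they are not tunable after the fact but fall out of this $12$-label construction.
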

\begin{remark}
    Hence the output hypothesis either suffers $\lr(g^{\text{output}})>\frac{1}{4}=\lr(g^\dagger)$ or $\lp(g^{\text{output}})\geq \frac{23}{48} = \frac{23}{21}\lp(g^\dagger)$. Thus $(\frac{7}{16}, \frac{1}{4})\not \Rightarrow (\frac{7}{16}+0.01, \frac{1}{4}+0.01)$.
\end{remark}
For any hypothesis $g$, it's precision loss at any input $x$ is $\frac{|g(x)\setminus g^{\text{target}}(x)|}{n_g(x)}$ and we only get a random label $v\sim \Unif(g^{\text{target}}(x))$. If we are given the knowledge of the output size $n_{g^{\text{target}}}(x)$ of the target hypothesis, then we can obtain an unbiased estimate of the precision loss, i.e., $1- \frac{n_{g^{\text{target}}}(x)}{n_{g}(x)}\cdot \1{v\in g(x)}$. But the difficulty lies in that we don't know $n_{g^{\text{target}}}(x)$. 

Based on these two lower bounds, in the scalar-loss case, we allow for a multiplicative factor $\alpha$. In the Pareto-loss case, we ask a more general question: which pairs of guarantees $(p', r')$ are achievable, given that there exists a hypothesis in the class with precision and recall losses $(p, r)$? Since the recall loss is optimizable, for any given $r$, if there exists a hypothesis in the class with recall loss $r$, we can always achieve that recall loss. Therefore, we refine our question as follows: given any $p, r \in [0,1]$, what is the minimum precision loss $p'$ such that $(p, r) \Rightarrow (p', r)$?

Since the recall is estimable, when we have an algorithm achieving $\alpha$-approximate scalar loss, we can first eliminate all hypotheses with recall loss higher than $r$ and then run this algorithm over the remaining hypotheses. Then we can achieve $(p, r) \Rightarrow (\alpha (p + r), r)$.

\begin{restatable}{theorem}{agnosticupper}\label{thm:agnosticupper}
    There exist an algorithm such that given an IID training set of size $m\geq O(\frac{\log(|\cH|/\delta)}{\epsilon^2})$, with probability at least $1-\delta$, the output hypothesis $g^{\text{output}}$ satisfies
    \begin{align*}
        \ls( g^{\text{output}}) \leq 5\cdot \min_{g\in \cH} \ls(g)+\epsilon\,.
    \end{align*}
    This implies that
    for any $p,r\in [0,1]$, 
    $(p,r)\Rightarrow (5(p+r), r)\,.$
\end{restatable}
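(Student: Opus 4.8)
The plan is to recycle the surrogate metric $d_\cH$ of Algorithm~2 and to exploit the one asymmetry in this model: the recall loss \emph{does} admit an unbiased estimator, $\hlr(g):=\frac1m\sum_{i=1}^m\1{v_i\notin g(x_i)}$, whereas the precision loss does not. As in Algorithm~2, form the empirical hypothesis $\hat g:x_i\mapsto\{v_i\}$, so that $v_{\hat g}(g',g'')=\frac1m\sum_i\1{v_i\in g'(x_i)\setminus g''(x_i)}$ is computable and, conditioned on $x_1,\dots,x_m$, unbiasedly estimates the sample version of $v_{g^{\text{target}}}$; set $\hat d(g):=\infnorm{v_{\hat g}-v_g}$, an estimate of $d_\cH(g^{\text{target}},g)$, and output $g^{\text{output}}\in\argmin_{g\in\cH}\max\{\hlr(g),\hat d(g)\}$. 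Each coordinate of $v_{\hat g}$, of each $v_g$, and of $\hlr(g)$ is an average of $[0,1]$-valued quantities, so Hoeffding together with a union bound over $\cH$ and over the $O(|\cH|^2)$ coordinate pairs shows that $m=O(\log(|\cH|/\delta)/\epsilon^2)$ samples make $|\hlr(g)-\lr(g)|\le\epsilon$ and $|\hat d(g)-d_\cH(g^{\text{target}},g)|\le\epsilon$ simultaneously for all $g\in\cH$ with probability $\ge1-\delta$; the quadratic-in-$1/\epsilon$ rate is intrinsic here since the relevant averages are not indicators of a probability-one event.

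Next, the surrogate never overshoots: for finite sets $A,B$ one has $\TV(\Unif(A),\Unif(B))=\max(\tfrac{|B\setminus A|}{|B|},\tfrac{|A\setminus B|}{|A|})$, so instantiating $A=g^{\text{target}}(x),B=g(x)$ and averaging over the sample gives $|v_{g^{\text{target}}}(g',g'')-v_g(g',g'')|\le\frac1m\sum_i\TV(\Unif(g^{\text{target}}(x_i)),\Unif(g(x_i)))$ for every coordinate, hence $d_\cH(g^{\text{target}},g)\le\lp(g)+\lr(g)=2\ls(g)$ after concentration. Letting $g^\star\in\argmin_{g\in\cH}\ls(g)$ and $\mathrm{OPT}:=\ls(g^\star)$, we get $d_\cH(g^{\text{target}},g^\star)\le2\,\mathrm{OPT}$ and $\lr(g^\star)\le2\,\mathrm{OPT}$, so $\max\{\hlr(g^\star),\hat d(g^\star)\}\le2\,\mathrm{OPT}+\epsilon$; by the choice of $g^{\text{output}}$ this yields $\lr(g^{\text{output}})\le2\,\mathrm{OPT}+O(\epsilon)$ and $d_\cH(g^{\text{target}},g^{\text{output}})\le2\,\mathrm{OPT}+O(\epsilon)$.

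The crux — and the source of the multiplicative constant — is converting this into a bound on $\ls(g^{\text{output}})$. Since $(g^{\text{output}},g^\star)$ and $(g^\star,g^{\text{output}})$ are coordinate pairs inside $\cH\times\cH$, $d_\cH(g^\star,g^{\text{output}})$ upper bounds the ``shifted'' losses $\EEs{x\sim\cD}{|g^{\text{output}}(x)\setminus g^\star(x)|/|g^{\text{output}}(x)|}$ and $\EEs{x\sim\cD}{|g^\star(x)\setminus g^{\text{output}}(x)|/|g^\star(x)|}$ (losses computed as if $g^\star$ were the target), and by the triangle inequality for $d_\cH$ with the previous step $d_\cH(g^\star,g^{\text{output}})\le d_\cH(g^\star,g^{\text{target}})+d_\cH(g^{\text{target}},g^{\text{output}})=O(\mathrm{OPT}+\epsilon)$. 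I then pass from shifted to true losses pointwise: for sets $A,B,C$, $\frac{|C\setminus A|}{|C|}\le\frac{|C\setminus B|}{|C|}+\frac{|B\setminus A|}{|C|}$ and symmetrically for recall; the term $\frac{|C\setminus B|}{|C|}$ is exactly a shifted precision loss, and the dangerous term $\frac{|B\setminus A|}{|C|}$ is tamed by a case split on $|C|$ versus $|B|$ — if $|C|\ge\tfrac12|B|$ it is $\le2\cdot\frac{|B\setminus A|}{|B|}$ (twice a pointwise precision loss of $g^\star$), while if $|C|<\tfrac12|B|$ then the pointwise shifted recall loss of $C$ exceeds $\tfrac12$, so one uses the trivial $\frac{|C\setminus A|}{|C|}\le1$; the recall triangle is handled symmetrically with a split on $|A|$ versus $|B|$ and a pointwise precision loss of $g^\star$. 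Integrating over $x$ with $A=g^{\text{target}}(x),B=g^\star(x),C=g^{\text{output}}(x)$ bounds $\lp(g^{\text{output}})+\lr(g^{\text{output}})$ by a fixed linear combination of $\lp(g^\star),\lr(g^\star)$ (each $\le2\,\mathrm{OPT}$) and the shifted losses between $g^\star$ and $g^{\text{output}}$ (each $\le d_\cH(g^\star,g^{\text{output}})$); optimizing the split thresholds and the constants gives $\ls(g^{\text{output}})\le5\,\mathrm{OPT}+O(\epsilon)$, and rescaling $\epsilon$ proves the scalar statement. The Pareto consequence then follows cheaply: given $(p,r)$, first delete every $g$ with $\hlr(g)>r+\epsilon$ — the hypothesis witnessing $(p,r)$ survives w.h.p. and every survivor has $\lr(g)\le r+2\epsilon$ — then run the scalar algorithm on the survivors (in-class optimum $\le\tfrac{p+r}{2}$), obtaining $\lp(g^{\text{output}})\le2\ls(g^{\text{output}})\le5(p+r)+O(\epsilon)$ and $\lr(g^{\text{output}})\le r+O(\epsilon)$, i.e.\ $(p,r)\Rightarrow(5(p+r),r)$ after rescaling.

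The expected main obstacle is exactly this third step: precision loss is not Lipschitz (not even bounded) under rescaling the output set, so one cannot chain the $d_\cH$-surrogate bound through $g^\star$ the way one would for an additive guarantee, and it is the size-based case analysis of the ``wrong-denominator'' terms that forces a multiplicative factor; squeezing that factor down to $5$ requires careful bookkeeping of the constants, and a first pass will give a larger number.
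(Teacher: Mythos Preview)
Your framework is the same as the paper's: the surrogate $d_\cH$, the empirical hypothesis $\hat g$, Hoeffding plus a union bound over $|\cH|^2$ coordinates for concentration, and the Pareto reduction by first filtering on $\hlr$. Two differences are worth noting.

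First, the paper's algorithm is simpler than yours: it outputs $\argmin_{g\in\cH} d_\cH(\hat g,g)$ with no separate $\hlr$ term. The recall loss is already controlled by the surrogate, so the extra term in your $\max$ is redundant for the scalar bound.

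Second, and more substantively, the paper handles your ``crux'' step without any pointwise set chase. It packages precision plus recall into the symmetric quantity
\[
d_{\mathtt{p,r}}(g_1,g_2):=\mathtt{precision.loss}(g_1\mid g_2)+\mathtt{recall.loss}(g_1\mid g_2),
\]
so that $2\ls(g)=d_{\mathtt{p,r}}(g,g^{\text{target}})$, and proves one clean two-sided lemma: $d_\cH(g_1,g_2)\le d_{\mathtt{p,r}}(g_1,g_2)$ for all pairs (your ``surrogate never overshoots''), and $d_{\mathtt{p,r}}(g_1,g_2)\le 2\,d_\cH(g_1,g_2)$ whenever $g_1,g_2\in\cH$ (exactly your shifted-loss observation, since $(g_1,g_2)$ and $(g_2,g_1)$ are coordinates). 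The proof then applies a triangle inequality for $d_{\mathtt{p,r}}$ through $g^\star$ and chains:
\[
d_{\mathtt{p,r}}(g^{\text{output}},g^{\text{target}})\le d_{\mathtt{p,r}}(g^{\text{output}},g^\star)+d_{\mathtt{p,r}}(g^\star,g^{\text{target}})\le 2\,d_\cH(g^{\text{output}},g^\star)+2\,\mathrm{OPT},
\]
and $d_\cH(g^{\text{output}},g^\star)\le 2\,d_\cH(g^\star,g^{\text{target}})+\epsilon/2\le 2\,d_{\mathtt{p,r}}(g^\star,g^{\text{target}})+\epsilon/2$, giving the constant $5$ in a few lines. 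Your case split on $|C|$ versus $|B|$ is effectively a hands-on substitute for that triangle step; it is correct in spirit, but (as you anticipate) the bookkeeping is heavier and a naive pass gives a constant around $9$ rather than $5$. If you want the $5$, it is cleaner to work at the level of $d_{\mathtt{p,r}}$ rather than chase $\frac{|B\setminus A|}{|C|}$ pointwise.
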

This result is achieved using the same surrogate loss idea in the realizable setting. However, in the agnostic setting, applying maximum likelihood directly no longer works. This is because it is possible that none of the hypotheses in the hypothesis class are consistent with the training set and thus all hypotheses have zero likelihood. Instead, we make some modifications to adapt the maximum likelihood idea work for Pareto-loss objective.

As mentioned earlier, the maximum likelihood method is equivalent to returning the hypothesis with the minimum sum of log output sizes among all consistent hypotheses. Hence, it can be decomposed into two steps: minimizing the recall loss and then regulating by minimizing the sum of log output sizes. In the agnostic setting, given any \( r \), we first find the set $\hat \cH$ of all hypotheses in the hypothesis class with recall loss at most \( r + 2\epsilon \), and then regulate by minimizing the sum of log truncated output sizes by returning
    \begin{align*}
        g^{\text{output}} = \argmin_{g'\in \hat \cH}\max_{g\in \hat \cH}\frac{1}{m}\sum_{i\in [m]} \log \frac{n_{g'}(x_i)\wedge 4n_{g}(x_i)}{n_{g}(x_i)\wedge 4n_{g'}(x_i)}\,,
    \end{align*}
where $a\wedge b :=\min(a,b)$.
The truncation plays an important role here. Intuitively, let \( g^\dagger \) denote the hypothesis with precision and recall losses \( p \) and \( r \), respectively. If there exists an \( x_i \) such that \( n_{g^\dagger}(x_i) \) is very large, minimizing the sum of log-untruncated output sizes will never return \( g^\dagger \). By applying truncation, we limit the effect of a single input with a very large output size.

As we can see, there is a gap between the upper and lower bounds in the agnostic case, leaving an open question: What is the optimal multiplicative approximation factor $\alpha$ in the scalar case, and what is the optimal $p'$ such that $(p, r) \Rightarrow (p', r')$?

\paragraph{The Semi-Realizable Setting} 
The results in the agnostic setting fail to offer meaningful guarantees in certain natural scenarios, such as 
\( p = 0 \) and \( r = \frac{1}{2} \) (i.e., when there exists a hypothesis that captures half of ground-truth labels without outputting any incorrect labels). We are therefore interested in the following question:
whether $(p=0,r) \Rightarrow (p'=0,r'=r)$?

We propose an algorithm with sample complexity depending on output set size of the target hypothesis and show that it is impossible to achieve zero precision loss with sample complexity independent of the target hypothesis's output set size.

As discussed previously, it's impossible for us to estimate the precision loss. However, 
we can still separate hypotheses with zero precision loss and non-zero precision loss if the the target hypothesis's output set size is bounded. If the precision loss of hypothesis \( g \) at input \( x \) is \( 0 \), i.e.,
\[
\lp(g,x) = \frac{|g(x)\setminus g^{\text{target}}(x)|}{n_g(x)} = 1-\frac{|g(x)\cap  g^{\text{target}}(x)|}{n_g(x)}=0\,,
\]
we have 
\[
\frac{|g(x)\cap  g^{\text{target}}(x)|}{n_g(x) \cdot n_{g^{\text{target}}}(x)}=\frac{1-\lp(g,x)}{n_{g^{\text{target}}}(x)} = \frac{1}{n_{g^{\text{target}}}(x)}\,.
\]
If the precision loss \( \lp(g,x) \) is positive, we have 
\[
\frac{|g(x)\cap  g^{\text{target}}(x)|}{n_g(x) \cdot n_{g^{\text{target}}}(x)}=\frac{1-\lp(g,x)}{n_{g^{\text{target}}}(x)} < \frac{1}{n_{g^{\text{target}}}(x)}\,.
\]
Hence, we can use the quantity
$\EEs{x\sim \cD}{\frac{|g(x)\cap  g^{\text{target}}(x)|}{n_g(x) \cdot n_{g^{\text{target}}}(x)}}$
to separate hypotheses with zero and non-zero precision loss, and it is estimable.
For each hypothesis $g$, when the gap of this quantity between hypothesis with zero precision loss and $g$ is $\Delta_{g,\cD} :=\EEs{x\sim \cD}{\frac{1}{n_{g^{\text{target}}}(x)}}-  \EEs{x\sim \cD}{\frac{|g(x)\cap g^{\text{target}}(x)|}{n_{g}(x) \cdot n_{g^{\text{target}}}(x)}}>0$, then by obtaining $\frac{1}{\Delta_{g,\cD}^2}$ samples of $x$, we can tell that $g$ has nonzero precision loss.
Let \( \Delta_\cD \) be the smallest gap of this quantity between hypotheses with zero and nonzero precision losses:
\[\Delta_\cD  = \min_{g\in \cH:\lp(g)>0}\Delta_{g,\cD}\,.\]
Then we can have sample complexity dependent on this gap.

\begin{restatable}{theorem}{semiupper}\label{thm:semiupper}
     There exists an algorithm such that if there exists a hypothesis $g'\in \cH$ with $\lp(g')=0$ and $\lr(g')=r$, then given an IID training set of size $O(\frac{\log(|\cH|/\delta)}{\Delta_\cD^2})$, with probability $1-\delta$, it outputs a hypothesis with $\lp(g^{\text{output}})=0$ and $\lr(g^{\text{output}})=r$.
\end{restatable}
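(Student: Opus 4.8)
The plan is to design an algorithm in two stages that mirrors the decomposition used for the agnostic upper bound: first control the recall loss, then use the estimable separating quantity $\Psi_\cD(g) := \EEs{x\sim\cD}{\frac{|g(x)\cap g^{\text{target}}(x)|}{n_g(x)\,n_{g^{\text{target}}}(x)}}$ to filter out hypotheses with positive precision loss. Concretely, from a training sample $(x_i,v_i)_{i=1}^m$, I would form the empirical estimates
\begin{align*}
\hat\Psi(g) &= \frac{1}{m}\sum_{i=1}^m \frac{\1{v_i\in g(x_i)}}{n_g(x_i)}\,,
\qquad
\hlr(g) = \frac{1}{m}\sum_{i=1}^m \1{v_i\notin g(x_i)}\,,
\end{align*}
and note that $\EE{\frac{\1{v_i\in g(x_i)}}{n_g(x_i)}} = \EEs{x\sim\cD}{\frac{|g(x)\cap g^{\text{target}}(x)|}{n_g(x)\,n_{g^{\text{target}}}(x)}} = \Psi_\cD(g)$, so $\hat\Psi(g)$ is an unbiased estimator of $\Psi_\cD(g)$, and likewise $\hlr(g)$ is unbiased for $\lr(g)$. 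The output rule is: among all $g\in\cH$ with $\hlr(g)\le r+\tfrac{\epsilon}{2}$ (say), return the one maximizing $\hat\Psi(g)$.

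The correctness argument goes as follows. Since each summand of $\hat\Psi(g)$ lies in $[0,1]$ and each summand of $\hlr(g)$ lies in $\{0,1\}$, a Hoeffding bound plus a union bound over the $|\cH|$ hypotheses gives that, with probability at least $1-\delta$, all estimates $\hat\Psi(g)$ and $\hlr(g)$ are within $\gamma$ of their means simultaneously, provided $m \ge O(\gamma^{-2}\log(|\cH|/\delta))$. I would set $\gamma$ to be a small constant fraction of $\min(\Delta_\cD,\epsilon)$ — indeed it suffices to take $\gamma < \Delta_\cD/4$ and $\gamma < \epsilon/4$, which yields the claimed sample complexity $O(\log(|\cH|/\delta)/\Delta_\cD^2)$ once we also absorb the (larger, hence non-binding up to constants) $\epsilon$-dependence, or more precisely $O(\log(|\cH|/\delta)/\min(\Delta_\cD,\epsilon)^2)$ which the theorem states as $O(\log(|\cH|/\delta)/\Delta_\cD^2)$ treating $\epsilon$ as fixed. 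On this good event: (i) the hypothesis $g'$ with $\lp(g')=0$, $\lr(g')=r$ has $\hlr(g')\le r+\gamma \le r+\epsilon/2$, so it survives the filter, and its value $\hat\Psi(g')\ge \Psi_\cD(g')-\gamma = \EEs{x\sim\cD}{1/n_{g^{\text{target}}}(x)}-\gamma$ (using $\lp(g',x)=0$ pointwise, which forces $g'(x)\subseteq g^{\text{target}}(x)$ and hence $|g'(x)\cap g^{\text{target}}(x)|/n_{g'}(x)=1$). (ii) Any surviving $g$ with $\lp(g)>0$ has, by definition of $\Delta_\cD$, $\Psi_\cD(g) \le \EEs{x\sim\cD}{1/n_{g^{\text{target}}}(x)} - \Delta_\cD$, hence $\hat\Psi(g)\le \Psi_\cD(g)+\gamma \le \EEs{x\sim\cD}{1/n_{g^{\text{target}}}(x)} - \Delta_\cD + \gamma < \EEs{x\sim\cD}{1/n_{g^{\text{target}}}(x)} - \gamma \le \hat\Psi(g')$, where the strict inequality uses $\gamma < \Delta_\cD/4 < \Delta_\cD/2$. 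Therefore the maximizer of $\hat\Psi$ over the survivors cannot be such a $g$; it must have $\lp(g^{\text{output}})=0$. Finally, since $g^{\text{output}}$ survived the filter, $\lr(g^{\text{output}}) \le \hlr(g^{\text{output}})+\gamma \le r+\epsilon/2+\gamma \le r+\epsilon$; if the theorem insists on exactly $\lr(g^{\text{output}})=r$ rather than $\le r+\epsilon$, I would instead phrase the recall filter in terms of the estimable quantity and note that in the realizable-recall regime one can take $\epsilon\to 0$ absorbed into $\Delta_\cD$, or restate with the mild $+\epsilon$ slack — the essential point is the exact zero-precision guarantee.

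The step I expect to be the main obstacle is the clean handling of the recall constraint "$\lr(g^{\text{output}})=r$" as an exact equality. The quantity $\Psi_\cD$ only certifies \emph{zero vs.\ positive} precision loss; it says nothing about recall, so the recall control must come entirely from the (unbiased, well-concentrated) estimator $\hlr$, which naturally gives a $\pm\epsilon$-type guarantee rather than an exact one. The likely resolution — and what I would check carefully against the paper's precise conventions — is that because recall is exactly estimable, the "$=r$" in the statement should be read up to the standard $\epsilon$ slack (matching the Pareto-loss definition $(p,r)\Rightarrow(p',r')$ with its $+\epsilon$), and the genuinely new content of the theorem is the \emph{exact} $\lp(g^{\text{output}})=0$ together with the sample complexity scaling as $\Delta_\cD^{-2}$ rather than depending on $\max_x n_{g^{\text{target}}}(x)$. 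A secondary subtlety is verifying that $\hat\Psi(g)$ is genuinely unbiased for $\Psi_\cD(g)$: conditioning on $x_i$, $\EEs{v_i\sim\Unif(g^{\text{target}}(x_i))}{\1{v_i\in g(x_i)}/n_g(x_i)} = \frac{|g(x_i)\cap g^{\text{target}}(x_i)|}{n_{g^{\text{target}}}(x_i)\,n_g(x_i)}$, then taking expectation over $x_i\sim\cD$ — this is routine but must be stated since it is the linchpin of the whole argument.
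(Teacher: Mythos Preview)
Your proposal is correct and follows essentially the same approach as the paper: both rest on the observation that $\hat\Psi(g)=\frac{1}{m}\sum_i \frac{\1{v_i\in g(x_i)}}{n_g(x_i)}$ is an unbiased estimator of $\Psi_\cD(g)$, concentrate it via Hoeffding plus a union bound over $\cH$, and use the gap $\Delta_\cD$ to separate zero-precision hypotheses (for which $\Psi_\cD=\EEs{x}{1/n_{g^{\text{target}}}(x)}$) from positive-precision ones. The only cosmetic difference is that the paper's algorithm maximizes $\hat\Psi$ over all of $\cH$ and breaks ties by empirical recall, whereas you first filter by empirical recall and then maximize $\hat\Psi$; your treatment of the recall guarantee is in fact more explicit than the paper's very terse proof, and your reading of ``$\lr(g^{\text{output}})=r$'' as holding up to the usual $+\epsilon$ slack is consistent with the paper's $(p,r)\Rightarrow(p',r')$ convention.
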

When the target hypothesis's output size $n_{g^{\text{target}}}(x)$ is bounded by $C$ almost everywhere, we have
\[
\EE{\frac{|g(x)\cap  g^{\text{target}}(x)|}{n_g(x) \cdot n_{g^{\text{target}}}(x)}}=\EE{\frac{1-\lp(g,x)}{n_{g^{\text{target}}}(x)}} \leq  \EE{\frac{1}{n_{g^{\text{target}}}(x)}}-\EE{\frac{\lp(g,x)}{C}}\,.\]
Therefore, we have $\Delta_{g,\cD}\geq \frac{\lp(g)}{C}$. This implies that when the target hypothesis has bounded output size, we are able to find the hypothesis with zero precision loss. However, when the target hypothesis's output size becomes too large, we show that it is impossible to achieve precision-recall of $(0,r)$.


\begin{restatable}{theorem}{semilower}\label{thm:semilower}
    There exists a class $\cH =\{g_1,g_2\}$ of two hypotheses, for any $m>0$ and any (possibly randomized improper) algorithm $\cA$, there exists a target hypothesis $g^{\text{target}}$ and a data distribution~$\cD$ for which there exists a hypothesis $g^\dagger\in \cH$ with $\lp(g^\dagger) = 0$ s.t. with probability $1-\delta$ over the training set, the expected (over the randomness of the algorithm) precision and recall losses of the output $g^{\text{output}}$ satisfy either $\EE{\lr(g^{\text{output}})}\geq \min_{g\in \cH} \lr(g)+\Omega(1)$ 
    or $\EE{\lp(g^{\text{output}})}=\Omega(1)$. 
\end{restatable}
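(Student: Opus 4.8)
The plan is to construct a hard instance built on the same tension exhibited in Example~\ref{ex:2hypotheses}: two hypotheses that differ dramatically in precision loss but that the learner cannot tell apart from positive data alone, where now the gap $\Delta_\cD$ is forced to be tiny because the target's output set is large. Concretely I would take $\cX = \{x\}$ a single point (or a tiny input space; a single point suffices for an $\Omega(1)$ lower bound), fix a large integer $n$, and set the target hypothesis to $g^{\text{target}}(x) = \{u_1,\dots,u_n\}$. Let $g_1(x) = \{u_1, \dots, u_{n/2}\}$ — a subset of the ground truth, so $\lp(g_1) = 0$ and $\lr(g_1) = 1/2$ — and let $g_2(x) = \{u_1,\dots,u_{n/2 - k}\} \cup \{w_1,\dots,w_k\}$ where the $w_j$ are fresh labels outside $g^{\text{target}}(x)$ and $k = \Theta(n)$, so that $\lp(g_2) = k/(n/2) = \Omega(1)$ while $\lr(g_2)$ is still $\approx 1/2$ (off by $k/n$, which I will keep a constant so recall losses of $g_1$ and $g_2$ are essentially equal). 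The key point: under $g^{\text{target}}$, a training label $v \sim \Unif(\{u_1,\dots,u_n\})$ lands in $g_1(x) \triangle g_2(x)$ — i.e., reveals anything distinguishing — only with probability $O(k/n)$, which I do not get to make small; so instead I also hedge on which of $g_1, g_2$ is "good."

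The cleaner route, following the structure of Theorems~\ref{thm:lowerscalar} and~\ref{thm:lowermulti}, is a two-world / indistinguishability argument. Build two scenarios: in World A the target is $g^{\text{target}}_A$ with $g^\dagger = g_1$ (so $g_1$ is the zero-precision hypothesis and $g_2$ is bad), and in World B the target $g^{\text{target}}_B$ is chosen so that $g_2$ is now the zero-precision hypothesis and $g_1$ is bad — e.g., swap the roles of the $u$'s and $w$'s inside a common huge ground set of size $N \gg n$, so that in both worlds the target has $n$ labels but the "correct" half differs. Choose the distributions so that the \emph{law of the observed sample} $(v_1,\dots,v_m)$ is identical (or within total variation $o(1)$) across the two worlds for every fixed $m$: this is where $N$ being astronomically large compared to $m$ does the work — with $m$ samples from a size-$n$ uniform set embedded in a size-$N$ universe, the learner essentially never sees a label that discriminates, so by a coupling/data-processing argument the output distribution of $\cA$ is (nearly) the same in both worlds. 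Then in at least one world the (expected) output must put $\Omega(1)$ weight on the bad hypothesis, or on a hypothesis outside $\{g_1,g_2\}$ that itself must fail one of the two criteria — and because the algorithm is improper we handle the third case by noting any $g^{\text{output}}$ with $\lr < \min_g \lr(g) + \Omega(1)$ must agree with the ground truth on an $\Omega(1)$ fraction of its missing mass in a way that, across the two worlds, is impossible without incurring $\lp = \Omega(1)$ in one of them (a $g^{\text{output}}$ that is simultaneously $(0,1/2)$-good for both $g^{\text{target}}_A$ and $g^{\text{target}}_B$ would need to be a common subset, forcing its size down and its recall up).

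Then I would carry out the steps in this order: (1) fix the parameters $n$, $k$, $N$ and the two worlds, and verify the claimed precision/recall values of $g_1, g_2$ and of $g^\dagger$ in each world; (2) prove the indistinguishability lemma — $\TV$ between the $m$-fold sample laws in World A and World B is $o(1)$, uniformly in $m$, by making $N$ depend on nothing (take $N = \infty$, i.e.\ a countable universe, or $N$ large relative to a hypothetical $m$ via a limiting argument as in the $\min_g \ls(g) > 0$ constructions above); (3) use the improperness-robust case analysis: for any (possibly randomized, improper) $\cA$, let $q$ be the probability its output has $\lr(g^{\text{output}}) < \min_g\lr(g) + \epsilon_0$ for a suitable constant $\epsilon_0$; if $q$ is small we are done via the recall bound, and if $q$ is large then conditioned on that event the output is a "near-subset" of size $\approx n/2$ of the relevant ground truth, and a counting argument shows such a set cannot have small precision loss with respect to \emph{both} targets, so averaging over the two worlds gives $\EE{\lp(g^{\text{output}})} = \Omega(1)$ in one of them; (4) conclude by picking that world as the one guaranteed by the theorem. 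The main obstacle I anticipate is step (3) — making the improper case watertight: unlike the proper setting where the output is one of two hypotheses, here $g^{\text{output}}$ can be an arbitrary set function, so I need a robust geometric/counting statement that any set function with recall close to optimal in \emph{one} world is automatically precision-bad in \emph{that same} world or recall-bad, and then combine it with the indistinguishability to transfer a bad event from an unknown world to a named one; getting the constants to come out as genuine $\Omega(1)$'s (rather than vanishing with $n$) is the delicate part, and is exactly why the construction needs $\lr(g_1) = \lr(g_2) = \Theta(1)$ bounded away from both $0$ and $1$ rather than the $\frac{n-1}{n}$ of Example~\ref{ex:2hypotheses}.
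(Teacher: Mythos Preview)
Your two-world architecture with an improper-output case analysis is the right skeleton and matches the paper, but your proposed construction would fail at step~(2), indistinguishability. You take $g_1,g_2$ of size $\Theta(n)$ with symmetric difference $2k=\Theta(n)$, and in each world the target is a single fixed size-$n$ set. The observed label $v$ is drawn uniformly from the \emph{target}, not from the ambient universe, so enlarging $N$ is irrelevant: in World~A the target contains $g_1\setminus g_2$ and omits the fresh labels $w_j$, in World~B the reverse, and since the learner is given $\cH$ it knows exactly which labels lie in $g_1\triangle g_2$. A single sample therefore lands in this known discriminating region with probability $\Theta(k/n)=\Theta(1)$ and reveals the world; the per-sample TV distance between the two laws is $\Theta(1)$, not $o(1)$. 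You cannot shrink $k/n$ without also killing $\lp(g_2)=2k/n$, so there is no setting of the parameters that makes both the indistinguishability and the $\Omega(1)$ precision gap hold simultaneously.

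The paper's construction differs in two essential ways. First, the hypotheses are \emph{singletons}: $g_1(x)=\{v_1\}$, $g_2(x)=\{v_2\}$. Second---and this is the idea you are missing---the target at each of infinitely many i.i.d.\ inputs is a \emph{random mixture of a huge set and a tiny set}: in World~I, $g^{\text{target}}(x)$ equals $N\setminus\{v_2\}$ (size $n-1$) with probability $\tfrac12$ and $\{v_1,v_2\}$ with probability $\tfrac12$; World~II swaps $v_1$ and $v_2$. This mixture makes the marginal law of $v$ nearly symmetric in $v_1,v_2$ in both worlds (each gets mass $\approx\tfrac14$, every other label $\approx\tfrac{1}{2(n-1)}$), so the per-sample TV distance is $O(1/n)$ and the worlds are genuinely indistinguishable when $n\gg m$. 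Meanwhile the tiny-target half keeps $\min_{g\in\cH}\lr(g)\approx\tfrac34$ bounded away from $1$. Your step~(3) then becomes a one-line calculation: for any nonempty $S\subseteq N$, the average over the two worlds of the precision loss is at least $\tfrac14$, so in at least one world $\lp\geq\tfrac14$; and $S=\emptyset$ has recall loss $1=\tfrac34+\Omega(1)$.
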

In the proof of the theorem, we construct a target hypothesis with output size much larger than the sample size $m$, as well as hypotheses $g_1$ and $g_2$ with output size $1$. In this setup, regardless of whether $g_1$ has perfect precision (i.e., the output of $g_1$ is a subset of the ground-truth set) or very poor precision, we cannot distinguish between these two cases because we never observe a label in the output of $g_1$ being sampled. Therefore, the only way to achieve zero precision loss is to output the empty set function, which, however, results in a high recall loss.

\section{Proof Overview}
Given a sequence of IID inputs $x_1,\ldots,x_m$, let 
\[\hlp(g) = \frac{1}{m}\sum_{i=1}^m \frac{|g(x_i)\setminus g^{\text{target}}(x_i)|}{n_{g}(x_i)}\] 
and 
\[\hlr(g)=\frac{1}{m}\sum_{i=1}^m \frac{|g^{\text{target}}(x_i)\setminus g(x_i)|}{n_{g^{\text{target}}}(x_i)}\] denote the empirical precision and recall losses. It suffices to focus on empirical precision and recall losses minimization since by standard concentration bounds, minimizing these empirical losses leads to the minimization of the expected recall and precision losses.
\paragraph{Minimizing Precision Loss Through Maximum Likelihood}
The maximum likelihood method returns  
$g^{\text{output}} = \argmax_{g\in \cH} \prod_{i=1}^m \frac{\1{v_i\in g(x_i)}}{n_g(x_i)}$,
which is equivalent to returning the hypothesis with the minimum sum of log output sizes among all consistent hypotheses, i.e., 
\begin{equation*}
    g^{\text{output}} = \argmin_{g: g \text{ is consistent}}\sum_{i=1}^m \log(n_g(x_i))\,.
\end{equation*}
Any consistent hypothesis will have low empirical recall loss by applying standard concentration inequality and thus, $\hlr(g^{\text{output}})$ is small and we only need to show that the empirical precision loss is small. The main technical challenge in the analysis is \textit{how to connect precision loss with likelihood}.

Since $g^{\text{target}}$ is contained in the hypothesis class in the realizable setting and it is consistent with the training data, due to our algorithm, we have 
\begin{align}
    \sum_{i=1}^m \log(n_{g^{\text{output}}}(x_i)) \leq \sum_{i=1}^m \log(n_{g^{\text{target}}}(x_i))\,.\label{eq:logdegree}
\end{align}

We first prove that for any hypothesis $g$, its empirical precision loss can be bounded by a term of log output size and the empirical recall loss as follows:
\begin{equation}
        \hlp (g) \leq \frac{2}{m}\sum_{i\in [m]: \frac{ n_{g}(x_i)}{n_{g^{\text{target}}}(x_i)} \geq 1}\log \frac{n_{g}(x_i)}{n_{g^{\text{target}}}(x_i)}+2\hlr(g)\,.\label{eq:precisionbound}
\end{equation}
By combining Eq~\eqref{eq:logdegree} and~\eqref{eq:precisionbound}, we have
\begin{equation*}
    \hlp (g^{\text{output}}) \leq -\frac{2}{m}\sum_{i: \frac{n_{g^{\text{output}}}(x_i)}{n_{g^{\text{target}}}(x_i)} < 1} \log \frac{n_{g^{\text{output}}}(x_i)}{n_{g^{\text{target}}}(x_i)}+2\hlr(g^{\text{output}})\,.
\end{equation*}
However, with high probability, the first term $-\frac{2}{m}\sum_{i: \frac{n_{g^{\text{output}}}(x_i)}{n_{g^{\text{target}}}(x_i)} < 1} \log \frac{n_{g^{\text{output}}}(x_i)}{n_{g^{\text{target}}}(x_i)}$ is small. This is because, for any hypothesis $g$, the probability of outputting $g$ is
   \begin{align*}
       \PPs{v_{1:m}}{g^{\text{output}} = g }\leq \Pr_{v_{1:m}}(g \text{ is consistent})\leq \prod_{i: \frac{n_{g}(x_i)}{n_{g^{\text{target}}}(x_i)} < 1} \frac{n_{g}(x_i)}{n_{g^{\text{target}}}(x_i)}\,.
   \end{align*}
For any hypothesis $g$ with large $-\frac{2}{m}\sum_{i: \frac{n_{g}(x_i)}{n_{g^{\text{target}}}(x_i)} < 1} \log \frac{n_{g}(x_i)}{n_{g^{\text{target}}}(x_i)}$, the probability of outputting such a hypothesis is low.

\paragraph{Modifying Maximum Likelihood in the Agnostic Setting}
In the agnostic setting, all hypotheses in the hypothesis class may have zero likelihood of being the true hypothesis; thus, the standard maximum likelihood method doesn't work. However, we can make slight modifications to the maximum likelihood method to make it work for the Pareto-loss objective.

As mentioned earlier, the maximum likelihood method is equivalent to returning the hypothesis with the minimum sum of log output sizes among all consistent hypotheses. Hence, it can be decomposed into two steps: minimizing the recall loss and then regulating by minimizing the sum of log output sizes. In the agnostic setting, given any \( r \), we first find the set $\hat \cH$ of all hypotheses in the hypothesis class with recall loss at most \( r + 2\epsilon \), and then regulate by minimizing the sum of log truncated output sizes by returning
    \begin{align*}
        g^{\text{output}} = \argmin_{g'\in \hat \cH}\max_{g\in \hat \cH}\frac{1}{m}\sum_{i\in [m]} \log \frac{n_{g'}(x_i)\wedge 4n_{g}(x_i)}{n_{g}(x_i)\wedge 4n_{g'}(x_i)}\,,
    \end{align*}
where $a\wedge b :=\min(a,b)$.
The truncation plays an important role here. Intuitively, let \( g^\dagger \) denote the hypothesis with precision and recall losses \( p \) and \( r \), respectively. If there exists an \( x_i \) such that \( n_{g^\dagger}(x_i) \) is very large, minimizing the sum of log-untruncated output sizes will never return \( g^\dagger \). By applying truncation, we limit the effect of a single input with a very large output size.

\paragraph{Minimizing the Surrogate for the Scalar Loss}
Here we consider an alternative learning rule based on two simple principles for discarding sub-optimal hypotheses. We illustrate these principles with the following intuitive example:
consider a music recommender system, and assume we are considering two candidate hypotheses, \( g' \) and \( g'' \). Both hypotheses recommend classical music; however, \( g' \) recommends pieces by Bach 20\% of the time and pieces by Mozart 10\% of the time, while \( g'' \) never recommends any pieces by Mozart or Bach.

Now, suppose that in the training set, users frequently choose to listen to pieces by Mozart. This observation suggests that \( g'' \) should be discarded, as it never recommends Mozart. This leads to our first rule: if a hypothesis exhibits a high recall loss, it can be discarded. The second rule addresses precision loss, which is more challenging because it cannot be directly estimated from the data. To illustrate the second rule, imagine that in the training set, users tend to pick Bach pieces only 5\% of the time. This suggests that \( g' \) is over-recommending Bach pieces, and therefore, \( g' \) might also be discarded based on its likely precision loss.

We formally capture this using a metric defined in the following. 
For any hypothesis \( g\), let \( U_i^g \) denote the uniform distribution over the output set \( g(x_i)\) at $x_i$. Then, for any hypothesis $g$, we define a vector
$v_g:\cH\times\cH \to [0,1]$ by
\begin{align*}
v_g(g',g'') = \frac{1}{m} \sum_{i=1}^m    U_i^g(g'(x_i)\setminus g''(x_i)) := \frac{1}{m} \sum_{i=1}^m \PPs{v\sim U_i^g}{v\in g'(x_i)\setminus g''(x_i)}.
\end{align*}
Intuitively, $v_g(g',g'')$ represents the fraction of items inputs like that are recommended by $g'$ but not by $g''$ in the counterfactual scenario where $g$ is the target hypothesis. If $g$ is indeed the target hypothesis, then this quantity should be  consistent with our training data, \(v_g(g',g'')\approx v_{\hat g}(g',g'')\),
where $\hat g$ is the observed (empirical) hypothesis; i.e., the hypothesis in which every $x_i$ is mapped to the random number $v_i$ which is observed in the training set. In the above example, $v_{g'}(g',g'')$ is 20\% while $v_{\hat g}(g',g'')$ is 5\% and thus $g'$ is unlikely to be the target hypothesis.

We define a metric $d_\cH$ between two hypotheses $g_1$ and $g_2$ by
\[d_\cH({g_1,g_2}) = \|v_{g_1}-v_{g_2}\|_\infty\,.\]
Surprisingly, we show that $d_\cH(g^{\text{target}},g)$ is a surrogate for the scalar loss, providing both lower and upper bounds on the scalar loss with a constant multiplicative factor:
\begin{align*}
     d_\cH(g^{\text{target}},g)\leq 2\ls(g) \leq 4 d_\cH(g^{\text{target}},g) + 2\min_{g'\in \cH}\ls(g') \,.
\end{align*}

 A standard union bound argument yields that with probability at least $1-\delta$, 
\[d_\cH(\hat g,g^{\text{target}}) \leq O\left(\sqrt{\frac{\log|\cH| + \log(1/\delta)}{m}}\right)\,.\]
By triangle inequality, we have
\[d_\cH(g^{\text{target}},g) \leq d_\cH(\hat g, g) + O\left(\sqrt{\frac{\log|\cH| + \log(1/\delta)}{m}}\right).\]
Then, we return a hypothesis $g^{\text{output}}\in \cH$ such that 
\[d_\cH(\hat g, g^{\text{output}})=\min_{g\in\cH}d_\cH(\hat g, g).\]

\paragraph{The Hardness of No Knowledge of the Target Hypothesis’s Output Size} For any hypothesis $g$, it's precision loss at any input $x$ is $\frac{|g(x)\setminus g^{\text{target}}(x)|}{n_g(x)}$ and we only get a random label $v\sim \Unif(g^{\text{target}}(x))$. If we are given the knowledge of the output size $n_{g^{\text{target}}}(x)$ of the target hypothesis, then we can obtain an unbiased estimate of the precision loss, i.e., $1- \frac{n_{g^{\text{target}}}(x)}{n_{g}(x)}\cdot \1{v\in g(x)}$. But the difficulty lies in that we don't know $n_{g^{\text{target}}}(x)$. 

Consider the following example illustrated in Fig~\ref{fig:hardness}.
For a given input $x$, there is a set of $n$ items equally divided into two sets $N_1(x)$ and $N_2(x)$.
Consider two hypotheses---$g_1$ with output set $g_1(x) = N_1(x)$ and $g_2$ with output set $g_2(x) = N_1(x)\cup N_2(x)$ being all $n$ items. 

In a world characterized by $\beta \in [\frac{1}{8},\frac{2}{3}]$, the target hypothesis is generated in the following random way: Randomly select $\frac{3}{4} \cdot \beta n$ items from $N_1(x)$ and $\frac{1}{4} \cdot \beta n$ items from $N_2(x)$.
No matter what $\beta$ is, w.p. $\frac{3}{4}$, $v$ is sampled uniformly at random from  $N_1(x)$ and w.p. $\frac{1}{4}$, $v$ is sampled uniformly at random from $N_2(x)$. 
That is, every item in $N_1(x)$ has probability $\frac{3}{2n}$ of being sampled and every item in $N_2(x)$ has probability $\frac{1}{2n}$ of being sampled. Hence, if we have never seen the same input twice, we cannot distinguish between different $\beta$'s.

For any $g^{\text{target}}$ generated from the above process, the scalar loss of $g_1$ at $x$ is
\begin{align*}
    \ls(g_1,x) = &1-\left(\frac{|g^{\text{target}}(x)\cap g_1(x)|}{2|g_1(x)|} + \frac{|g^{\text{target}}(x)\cap g_1(x)|}{2|g^{\text{target}}(x)|}\right) = 1-(\frac{3/4 \cdot \beta n}{n} + \frac{3/4 \cdot \beta n}{2\beta n}) 
    \\= &\frac{5}{8}-\frac{3}{4}\beta\,,
\end{align*}
and the scalar loss of $g_2$ at $x$ is 
\begin{align*}
    \ls(g_2,x) = 1-\left(\frac{|g^{\text{target}}(x)\cap g_2(x)|}{2|g_2(x)|} + \frac{|g^{\text{target}}(x)\cap g_2(x)|}{2|g^{\text{target}}(x)|}\right) = 1- \left(\frac{\beta n}{2n} + \frac{ \beta n}{2\beta n} \right)= \frac{1}{2}- \frac{1}{2}\beta \,.
\end{align*}
When $ \beta $ is large, $ g_1 $ has a smaller loss; when $ \beta $ is small, $ g_2 $ has a smaller loss. 
With a huge input space $\cX$ and the distribution $\cD$ being uniform over such a huge space $\cX$, it is almost sure that we will never observe the same input twice and, therefore, cannot distinguish between different $ \beta $ values. Consequently, it’s impossible to determine which of the two hypotheses has a smaller loss. We show that, in this example, even if algorithms are allowed to be randomized and improper, it still impossible to compete with the best hypothesis in the hypothesis class.

\begin{figure}[H]
     \centering
     \begin{minipage}{0.4\textwidth}
         \centering
\begin{tikzpicture}
\node[draw, circle, fill=black, minimum size=1mm, inner sep=0pt, label=above:$x$] (x1) at (-4.35,1.5) {};

\foreach \i in {1,...,16}
    \node[draw, circle, fill=white, minimum size=1mm, inner sep=0pt] (v1\i) at ({-6.75+\i*0.3},0) {};

\foreach \i in {1,...,8}
    \draw[thick, red] (x1) -- (v1\i);

\draw[thick, dashed] (-5.4, 0) ellipse (1.2 and 0.2);  
\draw[thick, dashed] (-3, 0) ellipse (1.2  and 0.2);  
\node at (-5.4,-0.5) {$N_1(x)$};
\node at (-3,-0.5) {$N_2(x)$};
\end{tikzpicture}
         \caption*{Figure~\ref{fig:hardness}(a): $g_1$}
            \end{minipage}
     \begin{minipage}{0.4\textwidth}
         \centering
         \begin{tikzpicture}
\node[draw, circle, fill=black, minimum size=1mm, inner sep=0pt, label=above:$x$] (x1) at (-4.35,1.5) {};

\foreach \i in {1,...,16}
    \node[draw, circle, fill=white, minimum size=1mm, inner sep=0pt] (v1\i) at ({-6.75+\i*0.3},0) {};

\foreach \i in {1,...,16}
    \draw[thick, blue] (x1) -- (v1\i);

\draw[thick, dashed] (-5.4, 0) ellipse (1.2 and 0.2);  
\draw[thick, dashed] (-3, 0) ellipse (1.2  and 0.2);  

\node at (-5.4,-0.5) {$N_1(x)$};
\node at (-3,-0.5) {$N_2(x)$};
\end{tikzpicture}
         \caption*{Figure~\ref{fig:hardness}(b): $g_2$}
         \end{minipage}
      
     \begin{minipage}{0.4\textwidth}
         \centering
         \begin{tikzpicture}

\node[draw, circle, fill=black, minimum size=1mm, inner sep=0pt, label=above:$x$] (x1) at (-4.35,1.5) {};

\foreach \i in {1,...,16}
    \node[draw, circle, fill=white, minimum size=1mm, inner sep=0pt] (v1\i) at ({-6.75+\i*0.3},0) {};

\foreach \i in {1,5,8,13}
    \draw[thick] (x1) -- (v1\i);

\draw[thick, dashed] (-5.4, 0) ellipse (1.2 and 0.2);  
\draw[thick, dashed] (-3, 0) ellipse (1.2  and 0.2);  

\node at (-5.4,-0.5) {$N_1(x)$};
\node at (-3,-0.5) {$N_2(x)$};

\end{tikzpicture}
         \caption*{Figure~\ref{fig:hardness}(c): $g^{\text{target}}$ for small $\beta$}
     \end{minipage}
     \begin{minipage}{0.4\textwidth}
         \centering
         \begin{tikzpicture}

\node[draw, circle, fill=black, minimum size=1mm, inner sep=0pt, label=above:$x$] (x1) at (-4.35,1.5) {};

\foreach \i in {1,...,16}
    \node[draw, circle, fill=white, minimum size=1mm, inner sep=0pt] (v1\i) at ({-6.75+\i*0.3},0) {};

\foreach \i in {1,2,4,6,7,8,11,16}
    \draw[thick] (x1) -- (v1\i);

\draw[thick, dashed] (-5.4, 0) ellipse (1.2 and 0.2);  
\draw[thick, dashed] (-3, 0) ellipse (1.2  and 0.2);  

\node at (-5.4,-0.5) {$N_1(x)$};
\node at (-3,-0.5) {$N_2(x)$};

\end{tikzpicture}
         \caption*{Figure~\ref{fig:hardness}(d): $g^{\text{target}}$ for large $\beta$}
     \end{minipage}
        \caption{Illustration of $g_1$, $g_2$ and randomly generated $g^{\text{target}}$.}
        \label{fig:hardness}
\end{figure}
\section{Algorithms and Proofs}
\paragraph{Notations}
Let $\lp(g,x)$ and  $\lr(g,x)$ denote the precision loss and recall loss of hypothesis $g$ at input $x$:
\begin{align*}
     \lp(g,x) = \frac{|g(x)\setminus g^{\text{target}}(x)|}{n_{g}(x)}\,,\\
    \lr(g,x) = \frac{|g^{\text{target}}(x)\setminus g(x)|}{n_{g^{\text{target}}}(x)}\,.
\end{align*}
Then the empirical precision and recall losses are $\hlp(g) = \frac{1}{m}\sum \lp(g,x_i)$ and $\hlr(g)=\frac{1}{m}\sum \lr(g,x_i)$.
Let $a\wedge b :=\min(a,b)$.

\subsection{Maximum Likelihood Method in the Realizable Case}
In the realizable setting, the target hypothesis $g^{\text{target}}$ is in the hypothesis class.
Given the IID training data $(x_1,v_1),\ldots,(x_m,v_m)$, the maximum likelihood method returns the hypothesis 
$$g^{\text{output}} = \argmax_{g\in \cH} \prod_{i=1}^m \frac{\1{v_i\in g(x_i)}}{n_g(x_i)}\,.$$
In other words, $g^{\text{output}}$ is a hypothesis in $\cH$ satisfying
\begin{itemize}
    \item consistency: $\sum_{i=1}^m \1{v_i\notin g^{\text{output}}(x_i)} =0$;
    \item regulation: among all consistent hypotheses, the output hypothesis satisfies that $g^{\text{output}} = \argmin_{g: g \text{ is consistent}}\sum_{i=1}^m \log(n_g(x_i))$.\footnote{the base of $\log$ in this work is $2$.}
\end{itemize}
\realizable*
\begin{proof}[of Theorem~\ref{thm:realizable}]
It suffices to prove that for any fixed $(x_1,\ldots,x_m)$, when $m\geq \frac{12\log(4|\cH|/\delta)}{\epsilon}$, w.p. at least $1-\delta/2$ over $v_{1:m}$, the empirical values of recall and precision are small, $\hlr(g^{\text{output}}) \leq \epsilon/2$ and $\hlp(g^{\text{output}}) \leq  \epsilon/2$. 
Then we can show $\lr(g^{\text{output}}) \leq \epsilon$ and $\lp(g^{\text{output}}) \leq  \epsilon$ by applying empirical Bernstein bounds to both precision and recall losses.
Now we prove this statement.

Bounding recall loss is easy as $\sum_{i=1}^m \1{v_i\notin g(x_i)}$ is an unbiased estimate of recall loss.
With probability $1-\delta/4$ over the randomness of $v_i\sim \Unif(g^{\text{target}}(x_i))$ for all $i\in [m]$, the empirical loss for recall is
\begin{equation}
    \hlr(g) =\frac{1}{m}\sum_{i=1}^m \frac{|g^{\text{target}}(x_i)\setminus g(x_i)|}{n_{g^{\text{target}}}(x_i)}\leq \sum_{i=1}^m \1{v_i\notin g(x_i)} +\epsilon/6=\epsilon/6\,,\label{eq:emp-recall}
\end{equation}
for all consistent $g$ with $\sum_{i=1}^m \1{v_i\notin g(x_i)} =0$.
Hence, w.p. $1-\delta/4$, $\hlr(g^{\text{output}})\leq \epsilon/6$.

Bounding precision loss is more challenging. Let $A_g=\{i\in [m]|n_{g^{\text{target}}}(x_i) \leq 2 n_{g}(x_i)\}$. Then we can decompose the empirical precision loss as
    \begin{align*}
        &\hlp(g)\\
        =&\frac{1}{m}\sum_{i=1}^m \frac{|g(x_i)\setminus g^{\text{target}}(x_i)|}{n_{g}(x_i)}\\
        \leq &\frac{1}{m}\sum_{i\in A_g} \frac{|g(x_i)\setminus g^{\text{target}}(x_i)|}{n_{g}(x_i)} + \frac{1}{m}\sum_{i=1}^m \1{i \notin A_g}\\
        \leq &\frac{1}{m}\sum_{i\in A_g} \min\left(\frac{2|g(x_i)\setminus g^{\text{target}}(x_i)|}{n_{g^{\text{target}}}(x_i)}, 1\right) + \frac{2}{m}\sum_{i\notin A_g}\frac{|g^{\text{target}}(x_i)\setminus g(x_i)|}{n_{g^{\text{target}}}(x_i)}\\
        = &\frac{2}{m}\sum_{i\in A_g} \min\left(\frac{n_{g}(x_i) + |g^{\text{target}}(x_i)\setminus g(x_i)|
        -g^{\text{target}}(x_i)}{n_{g^{\text{target}}}(x_i)}, \frac{1}{2}\right) + \frac{2}{m}\sum_{i\notin A_g}\frac{|g^{\text{target}}(x_i)\setminus g(x_i)|}{n_{g^{\text{target}}}(x_i)}\\
        \leq &\frac{2}{m}\sum_{i\in A_g} \min\left(\frac{n_{g}(x_i)}{n_{g^{\text{target}}}(x_i)}-1, \frac{1}{2}\right) + \frac{2}{m}\sum_{i\in A_g} \frac{|g^{\text{target}}(x_i)\setminus g(x_i)|}{n_{g^{\text{target}}}(x_i)} + \frac{2}{m}\sum_{i\notin A_g}\frac{|g^{\text{target}}(x_i)\setminus g(x_i)|}{n_{g^{\text{target}}}(x_i)}\\
        \leq &\frac{2}{m}\sum_{i\in A_g} \min\left(\frac{n_{g}(x_i)}{n_{g^{\text{target}}}(x_i)}-1, \frac{1}{2}\right) + 2 \hlr(g)\,.
    \end{align*}
    The second term is the empirical loss for recall, which is  upper bounded by Eq~\eqref{eq:emp-recall}.

For the first term,
\begin{align*}
    &\frac{1}{m}\sum_{i\in A_g} \min\left(\frac{n_{g}(x_i)}{n_{g^{\text{target}}}(x_i)}-1, \frac{1}{2}\right) \\
    =& \frac{1}{m}\sum_{i: \frac{n_{g}(x_i)}{n_{g^{\text{target}}}(x_i)} \geq \frac{1}{2}} \min\left(\frac{n_{g}(x_i)}{n_{g^{\text{target}}}(x_i)}-1, \frac{1}{2}\right)\\
    \leq & \frac{1}{m}\sum_{i: \frac{n_{g}(x_i)}{n_{g^{\text{target}}}(x_i)} \geq 1} \min\left(\frac{n_{g}(x_i)}{n_{g^{\text{target}}}(x_i)}-1, \frac{1}{2}\right) \\
    \leq &\sum_{i: \frac{n_{g}(x_i)}{n_{g^{\text{target}}}(x_i)} \geq 1}\log \frac{n_{g}(x_i)}{n_{g^{\text{target}}}(x_i)}\,,
\end{align*}
where the last inequality adopts the following fact: for all $z\geq 1$, $\min(z-1, \frac{1}{2}) \leq  \log z$.
On the other hand, we have
   \begin{align*}
   \frac{1}{m}\sum_{i: \frac{n_{g}(x_i)}{n_{g^{\text{target}}}(x_i)} \geq 1}\log \frac{n_{g}(x_i)}{n_{g^{\text{target}}}(x_i)}
       \leq \frac{1}{m}\sum_{i: \frac{n_{g}(x_i)}{n_{g^{\text{target}}}(x_i)} < 1} \log \frac{n_{g^{\text{target}}}(x_i)}{n_{g}(x_i)}
   \end{align*}
   when $g$ satisfies $\sum_{i=1}^m \log(n_{g}(x_i)) \leq \sum_{i=1}^m \log({n_{g^{\text{target}}}(x_i)})$. Hence, for any hypothesis $g$ with $\hlp(g)>\frac{\epsilon}{2}$ and $\hlr(g)\leq \frac{\epsilon}{6}$, we have
   \begin{align*}
       \sum_{i: \frac{n_{g}(x_i)}{n_{g^{\text{target}}}(x_i)} < 1} \log \frac{n_{g^{\text{target}}}(x_i)}{n_{g}(x_i)}\geq \frac{m}{2}(\hlp(g) - 2\hlr(g))> m\cdot \epsilon/12\geq \log(4|\cH|/\delta)\,.
   \end{align*}
   The probability of outputting such a hypothesis $g$ is 
   \begin{align*}
       \PPs{v_{1:m}}{g^{\text{output}} = g}\leq \Pr_{v_{1:m}}(g \text{ is consistent})\leq \prod_{i: \frac{n_{g}(x_i)}{n_{g^{\text{target}}}(x_i)} < 1} \frac{n_{g}(x_i)}{n_{g^{\text{target}}}(x_i)}<\frac{\delta}{4|\cH|}\,.
   \end{align*}
   Hence, with probability at least $1-\delta/2$ over $v_{1:m}$, $g^{\text{output}}$ will satisfy 
   \begin{align*}
       \hlr(g^{\text{output}})\leq \epsilon/6\,, \hlp(g^{\text{output}})\leq \epsilon/2\,.
   \end{align*}
   Then we are done with the proof.
\end{proof}

\subsection{Modified Maximum Likelihood Method in the Agnostic Case}
In the agnostic setting, we shift our goal from finding a hypothesis with nearly zero precision and recall losses to determining, given any $p, r \in [0,1]$, the minimum precision loss $p'$ such that $(p, r) \Rightarrow (p', r)$.

In fact, we can show something stronger: we do not need to know $p$. Specifically, given any $r \in [0,1]$, let $p = \min_{g: \lr(g) \leq r} \lp(g)$ be the optimal precision loss among all hypotheses with recall loss at most $r$. What is the smallest $p'$ such that we achieve $\lr(g^{\text{output}}) \leq r$ and $\lp(g^{\text{output}}) \leq p'$?

\lowermulti*

Recall that in the realizable case, the maximum likelihood method selects the consistent hypothesis with the smallest empirical log output size, i.e., $g^{\text{output}} = \argmin_{g: g \text{ is consistent}} \sum_{i=1}^m \log(n_g(x_i))$. Basically, the consistency guarantees small recall loss and the empirical log output size is used as a regulation term to bound precision loss.
In the agnostic setting, the maximum likelihood method fails as there may be no consistent hypothesis. We present a modified version of the maximum likelihood method, which still has the ``consistency'' component and adopts log output size as a regulation term. 
The algorithm operates as follows. 

\begin{framed}
\textbf{Algorithm:}
\begin{enumerate}
    \item \textbf{Finding a set of plausible hypotheses which make at most $m \cdot (r+\epsilon)$ mistakes}: 
    For any hypothesis $g$, let $I_g = \{i|v_i\notin g(x_i)\}$ denote the indices of training points that the hypothesis $g$ is inconsistent with. Then let $\hat \cH = \{g\in \cH| |I_g|\leq m \cdot (r+\epsilon)\}$ be the set of hypotheses making at most $ m \cdot (r+\epsilon)$ mistakes.  
    \item \textbf{Returning the hypothesis with low empirical log truncated output size}: Return 
    \begin{align}
        g^{\text{output}} = \argmin_{g'\in \hat \cH}\max_{g\in \hat \cH}\frac{1}{m}\sum_{i\in [m]} \log \frac{n_{g'}(x_i)\wedge 4n_{g}(x_i)}{n_{g}(x_i)\wedge 4n_{g'}(x_i)}\,.\label{eq:target}
    \end{align}
\end{enumerate}

\end{framed}
\begin{theorem}\label{thm:modifiedml}
Given any $r\in [0,1]$ and $m\geq O(\frac{\log(|\cH|) +\log(1/\delta)}{\epsilon^2})$ IID training data, the modified maximum likelihood method can return a hypothesis satisfying
\begin{align*}
    \lr(g^{\text{output}})\leq r+ \epsilon, \quad \lp(g^{\text{output}})\leq 28r+15p+ \epsilon\,,
\end{align*}
where $p = \min_{g: \lr(g) \leq r} \lp(g)$.
\end{theorem}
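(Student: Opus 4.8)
The plan is to reduce, via uniform concentration over $\cH$, to bounding the empirical losses, and then to relate the empirical precision loss of $g^{\text{output}}$ to that of a fixed optimal reference $g^\dagger\in\argmin_{g:\lr(g)\le r}\lp(g)$ (so $\lp(g^\dagger)=p$, $\lr(g^\dagger)\le r$). Take $m\ge O(\log(|\cH|/\delta)/\epsilon^2)$; then with probability $1-\delta$, uniformly over all $g\in\cH$, we have $|\hlp(g)-\lp(g)|\le\epsilon$, $|\hlr(g)-\lr(g)|\le\epsilon$, and $|\tfrac1m|I_g|-\lr(g)|\le\epsilon$ — the last because $(x_i,v_i)$ are i.i.d.\ with $\Pr[v_i\notin g(x_i)]=\lr(g)$, so $|I_g|\sim\mathrm{Binomial}(m,\lr(g))$. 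On this event $g^\dagger\in\hat\cH$ (its mistake fraction is $\le\lr(g^\dagger)+\epsilon\le r+\epsilon$, after rescaling $\epsilon$), every $g\in\hat\cH$ has $\hlr(g),\lr(g)\le r+O(\epsilon)$, and since $g^{\text{output}}\in\hat\cH$ the recall guarantee $\lr(g^{\text{output}})\le r+\epsilon$ is immediate. It remains to bound $\hlp(g^{\text{output}})$.

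The core is a pointwise inequality: writing $\mathrm{clip}(a,\ell,u):=\max(\ell,\min(a,u))$,
\[
\lp(g,x)\ \le\ \mathrm{clip}\!\Big(\log\tfrac{n_g(x)}{n_{g^{\text{target}}}(x)},\,0,\,2\Big)+\lr(g,x)\qquad\text{for all }g,x,
\]
which follows from $1-\lp(g,x)=\tfrac{n_{g^{\text{target}}}(x)}{n_g(x)}(1-\lr(g,x))$ and $1-1/t\le\log t$. Averaging and then inserting $g^\dagger$: using subadditivity of $a\mapsto[a]_+$ and of $u\mapsto\min(u,2)$,
\[
\mathrm{clip}\big(\log\tfrac{n_{g^{\text{output}}}(x_i)}{n_{g^{\text{target}}}(x_i)},0,2\big)\le\mathrm{clip}\big(\log\tfrac{n_{g^{\text{output}}}(x_i)}{n_{g^\dagger}(x_i)},0,2\big)+\mathrm{clip}\big(\log\tfrac{n_{g^\dagger}(x_i)}{n_{g^{\text{target}}}(x_i)},0,2\big),
\]
and the second term is $\le 4\lp(g^\dagger,x_i)$ since $\tfrac{n_{g^\dagger}(x)}{n_{g^{\text{target}}}(x)}=\tfrac{1-\lr(g^\dagger,x)}{1-\lp(g^\dagger,x)}$ and $\min(-\log(1-z),2)\le 4z$. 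This capping is exactly where the factor-$4$ truncation in the objective is indispensable: an input where $g^\dagger$ wildly over-predicts gives an unbounded \emph{untruncated} log-ratio but a bounded, loss-proportional amount once clipped. Writing $\Phi(g',g):=\frac1m\sum_i\log\tfrac{n_{g'}(x_i)\wedge 4n_g(x_i)}{n_g(x_i)\wedge 4n_{g'}(x_i)}$ for the objective of step 2 (note $\log\tfrac{a\wedge 4b}{b\wedge 4a}=\mathrm{clip}(\log(a/b),-2,2)$), the identity $\mathrm{clip}(a,0,2)=\mathrm{clip}(a,-2,2)+\mathrm{clip}(-a,0,2)$ with $a=\log\tfrac{n_{g^{\text{output}}}(x_i)}{n_{g^\dagger}(x_i)}$ rewrites $\frac1m\sum_i\mathrm{clip}(\log\tfrac{n_{g^{\text{output}}}(x_i)}{n_{g^\dagger}(x_i)},0,2)$ as $\Phi(g^{\text{output}},g^\dagger)+\frac1m\sum_i\mathrm{clip}(\log\tfrac{n_{g^\dagger}(x_i)}{n_{g^{\text{output}}}(x_i)},0,2)$, and the leftover sum is bounded by the same route through $g^{\text{target}}$ ($\tfrac{n_{g^\dagger}}{n_{g^{\text{output}}}}\le\tfrac{n_{g^\dagger}}{n_{g^{\text{target}}}}\cdot\tfrac{n_{g^{\text{target}}}}{n_{g^{\text{output}}}}$, using $\min(-\log(1-\lr(g^{\text{output}},x_i)),2)\le4\lr(g^{\text{output}},x_i)$) to give $\le 4\hlp(g^\dagger)+4\hlr(g^{\text{output}})$. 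Collecting, $\hlp(g^{\text{output}})\le\Phi(g^{\text{output}},g^\dagger)+O(1)\hlp(g^\dagger)+O(1)\hlr(g^{\text{output}})$.

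Finally I bound $\Phi(g^{\text{output}},g^\dagger)$ via the min--max structure of the algorithm. Since $g^\dagger\in\hat\cH$, $\Phi(g^{\text{output}},g^\dagger)\le\max_{g\in\hat\cH}\Phi(g^{\text{output}},g)\le\max_{g\in\hat\cH}\Phi(g^\dagger,g)$, and for every $g\in\hat\cH$ the same clipping estimate (route $\tfrac{n_{g^\dagger}}{n_g}\le\tfrac{n_{g^\dagger}}{n_{g^{\text{target}}}}\cdot\tfrac{n_{g^{\text{target}}}}{n_g}$, cap at $2$) gives $\Phi(g^\dagger,g)\le4\hlp(g^\dagger)+4\hlr(g)\le 4\hlp(g^\dagger)+4(r+O(\epsilon))$. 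Substituting back, $\hlp(g^{\text{output}})\le O(1)\hlp(g^\dagger)+O(1)(r+O(\epsilon))$; then converting to true losses ($\hlp(g^\dagger)\le p+\epsilon$, $\hlr(g^{\text{output}})\le r+O(\epsilon)$, $\lp(g^{\text{output}})\le\hlp(g^{\text{output}})+\epsilon$) and tracking the constants along the chain yields the claimed $\lp(g^{\text{output}})\le 28r+15p+\epsilon$, with the accumulated $O(\epsilon)$ slack removed by rescaling $\epsilon$ (costing only a constant factor in $m$).

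The main obstacle I anticipate is the precision bound — specifically, inputs on which $g^\dagger$ itself behaves badly: over-predicting ($n_{g^\dagger}(x_i)$ enormous) or under-covering ($n_{g^\dagger}(x_i)$ tiny). A naive decomposition through $g^\dagger$ loses all control there; the fix is to always route through $g^{\text{target}}$ while staying inside the \emph{clipped} regime, so that each per-input term is at most a constant times some per-input loss of $g^\dagger$ or of $g^{\text{output}}$ — precisely the purpose of the $\min(\cdot,4\cdot)$ truncation. A secondary subtlety is that $\hat\cH$ is data-dependent and so cannot be union-bounded over directly; this is harmless because every inequality up to the last step is deterministic in the sample, and the only randomness used is the uniform-in-$g\in\cH$ concentration of $\hlp(g)$, $\hlr(g)$, $\tfrac1m|I_g|$ together with the marginal $\mathrm{Binomial}(m,\lr(g))$ law of $|I_g|$.
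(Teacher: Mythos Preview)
Your proof is correct and takes a genuinely different---and cleaner---route than the paper's. The paper proceeds through a chain of set decompositions: Lemma~\ref{lmm:prec-recall} relates $\hlp(g)$ to a log-ratio sum over the indices with $n_g(x_i)\ge n_{g^{\text{target}}}(x_i)$; Lemma~\ref{lmm:bounded-degree} bounds that sum by further splitting over the sets $B=\{i:\tfrac{n_g}{n_{g^{\text{target}}}}\ge\tfrac12\}$ and $C=\{i:\tfrac12\le\tfrac{n_{g^\dagger}}{n_{g^{\text{target}}}}\le 2\}$; and Lemmas~\ref{lmm:lrecall}, \ref{lmm:lpre}, \ref{lmm:bounded-deg} control the pieces via a constrained-optimization argument (Lemma~\ref{lmm:contrained-opt}). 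You replace all of this with two observations: the truncated objective satisfies the exact identity $\log\frac{a\wedge 4b}{b\wedge 4a}=\mathrm{clip}(\log_2(a/b),-2,2)$, and $a\mapsto\mathrm{clip}(a,0,2)$ is subadditive. Together with the pointwise bounds $\mathrm{clip}(\log\tfrac{n_g}{n_{g^{\text{target}}}},0,2)\le 4\lp(g,x)$ and $\mathrm{clip}(\log\tfrac{n_{g^{\text{target}}}}{n_g},0,2)\le 4\lr(g,x)$ (both from $\min(-\log_2(1-z),2)\le 4z$), this lets you route every term through $g^{\text{target}}$ without any case splitting. What this buys is not just brevity: tracking your constants actually gives $\hlp(g^{\text{output}})\le 12\hlp(g^\dagger)+9(r+O(\epsilon))$, i.e.\ $\lp(g^{\text{output}})\le 12p+9r+O(\epsilon)$, which is \emph{strictly better} than the $15p+28r$ in the theorem statement---so your final line ``tracking the constants\ldots yields the claimed $28r+15p$'' undersells what you have proved. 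The only cosmetic point: $|I_g|$ is exactly $\mathrm{Binomial}(m,\lr(g))$ since the indicators $\1{v_i\notin g(x_i)}$ are i.i.d.\ Bernoulli; your phrasing is fine.
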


\begin{proof}[Proof Sketch]
Let $g^\dagger$ to be the hypothesis with recall loss at most $r$ and precision loss $p$. Let $\ber = r+2\epsilon$ and $\bep = p+2\epsilon$.
    Similar to the realizable setting, we build connection between precision and recall with empirical log output size (Lemma~\ref{lmm:prec-recall}), i.e., for any constant $c\in (0,1]$, 
    \begin{equation*}
        \hlp (g) \leq \frac{1+c}{m}\sum_{i\in [m]: \frac{ n_{g}(x_i)}{n_{g^{\text{target}}}(x_i)} \geq 1}\log \frac{n_{g}(x_i)\wedge 2n_{g^{\text{target}}}(x_i)}{n_{g^{\text{target}}}(x_i)}+\frac{1+c}{c}\hlr(g)\,.
    \end{equation*}
    It suffices to upper bound $ \sum_{i\in [m]: \frac{ n_{g}(x_i)}{n_{g^{\text{target}}}(x_i)} \geq 1}\log \frac{n_{g}(x_i)\wedge 2n_{g^{\text{target}}}(x_i)}{n_{g^{\text{target}}}(x_i)}$.
    We first decompose it into
    \begin{align*}
        &\sum_{i\in [m]: \frac{ n_{g}(x_i)}{n_{g^{\text{target}}}(x_i)} \geq 1}\log \frac{n_{g}(x_i)\wedge 2n_{g^{\text{target}}}(x_i)}{n_{g^{\text{target}}}(x_i)}\\
        =&\underbrace{\sum_{i\in B}\log \frac{n_{g}(x_i)\wedge 2n_{g^{\text{target}}}(x_i)}{n_{g^{\text{target}}}(x_i)}}_{(a)} - \underbrace{\sum_{i\in B: \frac{ n_{g}(x_i)}{n_{g^{\text{target}}}(x_i)} < 1}\log \frac{n_{g}(x_i)\wedge 2n_{g^{\text{target}}}(x_i)}{n_{g^{\text{target}}}(x_i)}}_{(b)}\,,
    \end{align*}
    where $B = \{i|\frac{n_g(x_i)}{n_{g^{\text{target}}}(x_i)}\geq \frac{1}{2}\}$. The term (b) is lower bounded by the recall loss in Lemma~\ref{lmm:lrecall}. Intuitively, if $\frac{n_{g}(x_i)\wedge 2n_{g^{\text{target}}}(x_i)}{n_{g^{\text{target}}}(x_i)}$ is small, the recall loss must be large while any hypothesis in $\hat \cH$ has a small empirical recall loss.
    
    For term (a), since $\hlr(g^\dagger)\leq \ber$ and $\hlp(g^\dagger)\leq \bep$, at most training points, $\frac{n_{g^\dagger}(x_i)}{n_{g^{\text{target}}}(x_i)}$ is in $[\frac{1}{2},2]$ (if it's too large at $x_i$, precision loss is large; if it's too small, recall loss is large). Also, for any hypothesis in $\hat \cH$, the empirical recall loss is small and thus, at most training points, $\frac{n_{g}(x_i)}{n_{g^{\text{target}}}(x_i)}\geq \frac{1}{2}$. At these training points satisfying $\frac{n_{g^\dagger}(x_i)}{n_{g^{\text{target}}}(x_i)}$ is in $[\frac{1}{2},2]$ and $\frac{n_{g^{\text{output}}}(x_i)}{n_{g^{\text{target}}}(x_i)}\geq \frac{1}{2}$, we have
    \begin{align*}
       &\log \frac{n_{g}(x_i)\wedge 2n_{g^{\text{target}}}(x_i)}{n_{g^{\text{target}}}(x_i)} \leq \log \frac{n_{g}(x_i)\wedge 4n_{g^\dagger}(x_i)}{n_{g^{\text{target}}}(x_i)}\\
       =& \log \frac{n_{g}(x_i)\wedge 4n_{g^\dagger}(x_i)}{n_{g^\dagger}(x_i)\wedge 4n_{g}(x_i)}+\log \frac{n_{g^\dagger}(x_i)\wedge 4n_{g}(x_i)}{n_{g^{\text{target}}}(x_i)}\\\leq& \log \frac{n_{g}(x_i)\wedge 4n_{g^\dagger}(x_i)}{n_{g^\dagger}(x_i)\wedge 4n_{g}(x_i)}+\log \frac{n_{g^\dagger}(x_i)}{n_{g^{\text{target}}}(x_i)}\,.
    \end{align*}
The first term is bounded due to our algorithm while the second term is bounded by the empirical precision (in Lemma~\ref{lmm:lpre}). Intuitively, if $\frac{n_{g^\dagger}(x_i)}{n_{g^{\text{target}}}(x_i)}$ is large, the empirical precision of $g^\dagger$ is large.
\end{proof}

\subsubsection{Proof of Theorem~\ref{thm:lowermulti}}
\begin{proof}[of Theorem~\ref{thm:lowermulti}]
    Consider the size of the input space $\cX$ being infinite and $\cD$ being the uniform distribution over it. For each input $x$, there is an individual set of $12$ items, denoted as $N(x) = \{v_{x,1},v_{x,2},\ldots,v_{x,12}\}$.
    The hypothesis $g_1$ outputs the first $8$ items $g_1(x) = \{v_{x,1},v_{x,2},\ldots,v_{x,8}\}$ and the hypothesis $g_2$ outputs the last $8$ items $g_2(x) = \{v_{x,5},v_{x,6},\ldots,v_{x,12}\}$.
    There are two worlds in which the target hypothesis $g^{\text{target}}$ is generated differently:
    \begin{itemize}
        \item \textbf{World I:} W.p. $\frac{1}{2}$, $g^{\text{target}}(x) = g_1(x)$; w.p. $\frac{1}{2}$, $g^{\text{target}}(x) = \{u_1,u_2\}$ where $u_1$ is sampled uniformly from $\{v_{x,5},v_{x,6},\ldots,v_{x,8}\}$ and $u_2$ is sampled uniformly from $\{v_{x,9},v_{x,10},\ldots,v_{x,12}\}$.
        \item \textbf{World II:} W.p. $\frac{1}{2}$, $g^{\text{target}}(x) = g_2(x)$; w.p. $\frac{1}{2}$, $g^{\text{target}}(x) = \{u_1,u_2\}$ where $u_1$ is sampled uniformly from $\{v_{x,5},v_{x,6},\ldots,v_{x,8}\}$ and $u_2$ is sampled uniformly from $\{v_{x,1},v_{x,2},\ldots,v_{x,4}\}$.
    \end{itemize}
These two worlds are symmetric. In world I,
\begin{align*}
    &\text{recall}(g_1) = \frac{1}{2}\cdot 1 + \frac{1}{2}\cdot \frac{1}{2} = \frac{3}{4},\quad \text{precision}(g_1) = \frac{1}{2}\cdot 1+ \frac{1}{2}\cdot \frac{1}{8} = \frac{9}{16}\,,\\
    &\text{recall}(g_2) = \frac{1}{2}\cdot \frac{1}{2} + \frac{1}{2}\cdot 1  = \frac{3}{4},\quad \text{precision}(g_2) = \frac{1}{2}\cdot \frac{1}{2}+ \frac{1}{2}\cdot \frac{1}{4} = \frac{3}{8}\,.
\end{align*}
Both $g_1$ and $g_2$ have the same recall and $g_1$ has a better precision in the world I.
In world II, $g_1$ and $g_2$ switch their losses. Hence, in either world, we have
\begin{align*}
    \min_{g\in \cH} \lr(g) = \frac{1}{4}\,,\quad
    \min_{g\in \cH} \lp(g) = \frac{7}{16}\,.
\end{align*}

In both worlds, the distribution of $v$ is identical, i.e., w.p. $\frac{1}{2}$, $v$ is sampled from $\Unif(g_1(x))$ and w.p. $\frac{1}{2}$, $v$ is sampled from $\Unif(g_2(x))$. Hence, if no $x$ has been sampled twice, no algorithm can distinguish the two worlds. Since the input space is huge, we will not sample the same input twice almost surely. For any output hypothesis $g^{\text{output}}$, we let
\begin{align*}
    &n_1 = |g^{\text{output}}(x)\cap \{v_{x,1},v_{x,2},\ldots,v_{x,4}\}|\in \{0,1,\ldots,4\}\,,\\
    &n_2 = |g^{\text{output}}(x)\cap \{v_{x,5},v_{x,6},\ldots,v_{x,8}\}|\in \{0,1,\ldots,4\}\,,\\
    &n_3 = |g^{\text{output}}(x)\cap \{v_{x,9},v_{x,10},\ldots,v_{x,12}\}|\in \{0,1,\ldots,4\}\,.
\end{align*}
Then in world I, the expected recall and precision of $g^\text{output}$ is
\begin{align*}
    &\EEs{g^{\text{target}}}{\text{recall}(g^\text{output},x)} = \frac{1}{2}\left(\frac{n_1+n_2}{8} + \frac{n_2+n_3}{8}\right) = \frac{n_1+2n_2+n_3}{16}\,,\\
    &\EEs{g^{\text{target}}}{\text{precision}(g^\text{output},x)} = \frac{1}{2}\left(\frac{n_1+n_2}{n_1+n_2+n_3} + \frac{n_2+n_3}{4(n_1+n_2+n_3)}\right) = \frac{4n_1+5n_2+n_3}{8(n_1+n_2+n_3)}\,.
\end{align*}
Then suppose we randomly choose one of the two world. By taking expectation over the world, $x$, and the randomness of the algorithm, we have
\begin{align*}
    \EE{(\text{recall}(g^\text{output}), \text{precision}(g^\text{output}))}= &\EE{\left(\frac{n_1+2n_2+n_3}{16}, \frac{5(n_1+2n_2+n_3)}{16(n_1+n_2+n_3)}\right)}\\
    = &\EE{\left(\frac{n_1+2n_2+n_3}{16}, \frac{5}{16}+ \frac{5}{16}\cdot \frac{n_2}{n_1+n_2+n_3}\right)}
\end{align*}
Let's denote by $r(n_1,n_2,n_3) = \frac{n_1+2n_2+n_3}{16}$ and $p(n_1,n_2,n_3) = \frac{5}{16}+ \frac{5}{16}\cdot \frac{n_2}{n_1+n_2+n_3}$. Then for any $(n_1,n_2,n_3)\in \{0,\ldots,4\}^3$, we have
\begin{align*}
    &r(n_1,n_2,n_3) +\frac{12}{5}p(n_1,n_2,n_3)\\
    =&\frac{n_1+2n_2+n_3}{16} + \frac{3}{4} +\frac{3}{4}\cdot \frac{n_2}{n_1+n_2+n_3}\\ 
    \leq & \frac{n_1+8+n_3}{16} + \frac{3}{4} + \frac{3}{n_1+4+n_3}\tag{maximized at $n_2=4$}\\
    \leq & 2 \tag{maximized at $n_1+n_3 = 0$ or $8$}\,.
\end{align*}
Hence, we have 
\[\EE{\text{recall}(g^\text{output})}+\frac{12}{5}\EE{\text{precision}(g^\text{output})}\leq 2\,.\]
This is equivalent to 
\[\EE{\lr(g^\text{output})}+\frac{12}{5}\EE{\lp(g^\text{output})}\geq \frac{7}{5}\,.\]
\end{proof}
\subsubsection{Proof of Theorem~\ref{thm:modifiedml}}
\begin{proof}[of Theorem~\ref{thm:modifiedml}]
Let $\Delta = \sqrt{\frac{\log(|\cH|/\delta)}{m}}$, $\bar r = r+2\Delta$ and $\bar p = p+2\Delta$.
Let $g^\dagger$ denote the hypothesis with precision and recall losses $(p,r)$.
We know that with probability at least $1-\delta$, all hypotheses in $\hat \cH$ have empirical recall loss no greater than $\ber$ and $g^\dagger$ also has $\hlp(g^\dagger)\leq \bep$.
Then the proof is divided into two parts: 
\begin{itemize}
    \item[(i)] hypothesis $g^\dagger$ satisfies that 
    \begin{align*}
        \max_{g\in \hat \cH}\frac{1}{m}\sum_{i\in [m]} \log \frac{n_{g^\dagger}(x_i)\wedge 4n_{g}(x_i)}{n_{g}(x_i)\wedge 4n_{g^\dagger}(x_i)} \leq 6\ber+4\bep+\frac{2}{m}\,.
    \end{align*}
    Therefore, we have $\max_{g\in \hat \cH}\frac{1}{m}\sum_{i\in [m]} \log \frac{n_{g^{\text{output}}}(x_i)\wedge 4n_{g}(x_i)}{n_{g}(x_i)\wedge 4n_{g^{\text{output}}}(x_i)} \leq 6\ber+4\bep+\frac{2}{m}$.
    
    \item[ (ii)] any hypothesis $g'$ satisfying $\max_{g\in \hat \cH}\sum_{i\in [m]} \log \frac{n_{g'}(x_i)\wedge 4n_{g}(x_i)}{n_{g}(x_i)\wedge 4n_{g'}(x_i)} \leq 6m\ber+4m\bep+2$ has $\hlp (g') \leq 28r+15p +o(1)$. Hence, $g^{\text{output}}$ can achieve good precision.
\end{itemize}

We prove part (i) by Lemma~\ref{lmm:non-vacuous} and part (ii) by combining Lemma~\ref{lmm:prec-recall} and \ref{lmm:bounded-degree}.
\end{proof}

\begin{lemma}\label{lmm:non-vacuous}
    For any $x_1,\ldots,x_m$ and hypothesis $g$ with $\hlr(g)\leq \ber$, we have 
    \begin{align*}
         \frac{1}{m}\sum_{i\in [m]} \log \frac{n_{g^\dagger}(x_i)\wedge 4n_{g}(x_i)}{n_{g}(x_i)\wedge 4n_{g^\dagger}(x_i)} \leq 6\ber+4\bep+\frac{2}{m}\,.
    \end{align*}
\end{lemma}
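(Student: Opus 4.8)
The plan is to bound the average of the truncated log-ratios one index at a time, routing each index through the target hypothesis $g^{\text{target}}$ so that one part is absorbed by the empirical precision loss of $g^\dagger$ and the other by the empirical recall loss of $g$. Fix $i$ and abbreviate $a=n_{g^\dagger}(x_i)$, $b=n_g(x_i)$, $c=n_{g^{\text{target}}}(x_i)$ (all at least $1$). I would first prove the purely arithmetic bound
\[
\log\frac{a\wedge 4b}{\,b\wedge 4a\,}\;\le\;\min\!\Bigl(\bigl(\log\tfrac ab\bigr)_+,\,2\Bigr)
\]
by a two-case check (logarithms base $2$): if $a\le b$ the numerator equals $a$ while the denominator is at least $a$, so the left side is $\le 0$; if $a> b$ the denominator equals $b$ and the numerator is $\min(a,4b)$, so the left side equals $\min(\log\tfrac ab,2)$. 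Inserting $c$ through $\log\tfrac ab=\log\tfrac ac+\log\tfrac cb\le(\log\tfrac ac)_+ +(\log\tfrac cb)_+$ and using the subadditivity $\min(u+v,2)\le\min(u,2)+\min(v,2)$ valid for $u,v\ge0$, I would then get
\[
\log\frac{a\wedge 4b}{\,b\wedge 4a\,}\;\le\;\min\!\Bigl(\bigl(\log\tfrac ac\bigr)_+,2\Bigr)+\min\!\Bigl(\bigl(\log\tfrac cb\bigr)_+,2\Bigr).
\]

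Next I would turn each of the two pieces into a pointwise loss using the elementary inequality $\min(\log_2 t,2)\le 4(1-1/t)$ for every $t\ge1$ (immediate when $t\ge4$, and a short calculus check on $[1,4]$). For the first piece, when $a\ge c$ this gives $\min((\log\tfrac ac)_+,2)\le 4(1-\tfrac ca)\le 4\,\lp(g^\dagger,x_i)$, since $|g^\dagger(x_i)\setminus g^{\text{target}}(x_i)|\ge a-c$ whenever $a\ge c$, while the piece is simply $0$ when $a<c$; symmetrically the second piece is $\le 4\,\lr(g,x_i)$, using $|g^{\text{target}}(x_i)\setminus g(x_i)|\ge c-b$ when $c\ge b$. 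Averaging over $i\in[m]$ yields
\[
\frac1m\sum_{i\in[m]}\log\frac{n_{g^\dagger}(x_i)\wedge 4n_g(x_i)}{\,n_g(x_i)\wedge 4n_{g^\dagger}(x_i)\,}\;\le\;4\,\hlp(g^\dagger)+4\,\hlr(g).
\]
Finally, $\hlr(g)\le\ber$ is the hypothesis of the lemma, and $\hlp(g^\dagger)\le\bep$ may be taken as a standing assumption — it is the probability-$(1-\delta)$ event invoked in the proof of Theorem~\ref{thm:modifiedml}, holding by a Hoeffding/Bernstein bound since $\lp(g^\dagger,\cdot)\in[0,1]$ has mean $p$ — so the right-hand side is at most $4\bep+4\ber\le 6\ber+4\bep+\tfrac2m$, which is the claimed bound (with room to spare).

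I do not anticipate a deep obstacle; the one point requiring care is the first step — making the truncation ``$\wedge 4(\cdot)$'' interact cleanly with the split through $g^{\text{target}}$, in particular checking that the contribution of the indices with $n_{g^\dagger}(x_i)\le n_g(x_i)$ is nonpositive and that the $\min$-subadditivity is being applied in the correct direction — together with the choice of constant $4$ in the scalar inequality, which is precisely what makes the coefficient of $\bep$ come out as $4$; a looser constant there would propagate into a weaker bound. Everything else is routine averaging and the concentration fact already recorded in the proof of Theorem~\ref{thm:modifiedml}.
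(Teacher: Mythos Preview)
Your proof is correct and takes a genuinely different route from the paper's. The paper argues by partitioning indices into a ``good'' set $E=\{i:\frac{n_g(x_i)}{n_{g^{\text{target}}}(x_i)}\ge\frac12,\ \frac12\le\frac{n_{g^\dagger}(x_i)}{n_{g^{\text{target}}}(x_i)}\le2\}$ and its complement; it bounds $|\neg E|$ via Lemma~\ref{lmm:bounded-deg}, and on $E$ it removes the truncation and then invokes Lemmas~\ref{lmm:lrecall} and~\ref{lmm:lpre}, which in turn rest on the constrained-optimization Lemma~\ref{lmm:contrained-opt}. Your argument is purely pointwise: the two-case bound $\log\frac{a\wedge4b}{b\wedge4a}\le\min((\log\frac ab)_+,2)$, the split through $c=n_{g^{\text{target}}}(x_i)$, and the scalar inequality $\min(\log_2 t,2)\le4(1-1/t)$ convert each summand directly into $4\lp(g^\dagger,x_i)+4\lr(g,x_i)$, bypassing all four auxiliary lemmas. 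This is more elementary and also yields the sharper bound $4\ber+4\bep$ (versus the paper's $6\ber+4\bep+\frac{2}{m}$); the paper's decomposition, on the other hand, produces intermediate estimates (the sizes of the ratio-violation sets) that it reuses elsewhere in the proof of Theorem~\ref{thm:modifiedml}. Your handling of the standing assumption $\hlp(g^\dagger)\le\bep$ matches how the paper uses it.
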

\begin{proof}
Let $E = \{i|\frac{n_g(x_i)}{n_{g^{\text{target}}}(x_i)}\geq \frac{1}{2}, \frac{1}{2}\leq \frac{n_{g^\dagger}(x_i)}{n_{g^{\text{target}}}(x_i)}\leq 2\}$. According to Lemma~\ref{lmm:bounded-deg}, we know $|\neg E|\leq 4m\ber+2m\bep$.
    \begin{align*}
        \sum_{i\in [m]} \log \frac{n_{g^\dagger}(x_i)\wedge 4n_{g}(x_i)}{n_{g}(x_i)\wedge 4n_{g^\dagger}(x_i)} \leq &\sum_{i\in E} \log \frac{n_{g^\dagger}(x_i)\wedge 4n_{g}(x_i)}{n_{g}(x_i)\wedge 4n_{g^\dagger}(x_i)} +|\neg E|\\
        \leq & \sum_{i\in E} \log \frac{n_{g^\dagger}(x_i)}{n_{g}(x_i)\wedge n_{g^{\text{target}}}(x_i)} + 4m\ber+2m\bep\\
        = & \sum_{i\in E} \log \frac{n_{g^\dagger}(x_i)}{n_{g^{\text{target}}}(x_i)}- \sum_{i\in E} \log \frac{n_{g}(x_i)\wedge n_{g^{\text{target}}}(x_i)}{n_{g^{\text{target}}}(x_i)} + 4m\ber+2m\bep\\
        \leq & 6m\ber+4m\bep+2\tag{Applying Lemmas~\ref{lmm:lrecall} and \ref{lmm:lpre}}\,.
    \end{align*}
\end{proof}

\begin{lemma}\label{lmm:prec-recall}
    For any $x_1,\ldots,x_m$, any hypothesis $g$, and any positive constant $c\in (0,1]$, the empirical loss for precision $\hlp(g)$ satisfies
    \begin{equation*}
        \hlp (g) \leq \frac{1+c}{m}\sum_{i\in [m]: \frac{ n_{g}(x_i)}{n_{g^{\text{target}}}(x_i)} \geq 1}\log \frac{n_{g}(x_i)\wedge 2n_{g^{\text{target}}}(x_i)}{n_{g^{\text{target}}}(x_i)}+\frac{1+c}{c}\hlr(g)\,.
    \end{equation*}
\end{lemma}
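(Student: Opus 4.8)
\textbf{Proof proposal for Lemma~\ref{lmm:prec-recall}.}

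The plan is to prove a pointwise bound on $\lp(g,x_i)$ for each $i\in[m]$ and then average; the free constant $c$ will only enter at the very end as harmless slack. Abbreviate $a_i=n_g(x_i)$, $b_i=n_{g^{\text{target}}}(x_i)$, and $j_i=|g(x_i)\cap g^{\text{target}}(x_i)|$, so that $0\le j_i\le\min(a_i,b_i)$, $b_i\ge 1$ always (a uniform label is drawn from $g^{\text{target}}(x_i)$, so it is nonempty — this rules out a $0/0$ on the recall side), $\lr(g,x_i)=1-j_i/b_i$, and $\lp(g,x_i)=1-j_i/a_i$ when $a_i>0$ (with the convention $\lp(g,x_i)=0$ if $a_i=0$). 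The algebraic heart of the argument is the set identity $|g(x_i)\setminus g^{\text{target}}(x_i)|=a_i-j_i=(a_i-b_i)+|g^{\text{target}}(x_i)\setminus g(x_i)|$, which for $a_i>0$ rearranges into
\begin{equation*}
\lp(g,x_i)=\Bigl(1-\tfrac{b_i}{a_i}\Bigr)+\tfrac{b_i}{a_i}\,\lr(g,x_i).
\end{equation*}

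Next I would split on the sign of $a_i-b_i$. If $a_i<b_i$ (this also covers $a_i=0$), then $j_i/a_i\ge j_i/b_i$, hence $\lp(g,x_i)=1-j_i/a_i\le 1-j_i/b_i=\lr(g,x_i)$. If $a_i\ge b_i$, then $b_i/a_i\le1$, so the identity gives $\lp(g,x_i)\le\bigl(1-\tfrac{b_i}{a_i}\bigr)+\lr(g,x_i)$; I then invoke the elementary fact that $1-1/z\le\log z$ for $z\ge1$ (the function $\log z-1+1/z$ vanishes at $z=1$ and has derivative $\tfrac1z\bigl(\tfrac1{\ln2}-\tfrac1z\bigr)\ge0$ on $[1,\infty)$), which together with the trivial $1-1/z<1=\log2$ yields $1-1/z\le\log(\min(z,2))$; taking $z=a_i/b_i$ gives $1-\tfrac{b_i}{a_i}\le\log\frac{a_i\wedge 2b_i}{b_i}=\log\frac{n_g(x_i)\wedge 2n_{g^{\text{target}}}(x_i)}{n_{g^{\text{target}}}(x_i)}$, hence $\lp(g,x_i)\le\log\frac{n_g(x_i)\wedge 2n_{g^{\text{target}}}(x_i)}{n_{g^{\text{target}}}(x_i)}+\lr(g,x_i)$.

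Averaging the two cases over $i\in[m]$ — noting that the $\log$ term appears exactly on the index set $\{i:n_g(x_i)\ge n_{g^{\text{target}}}(x_i)\}$ and is nonnegative there — gives the clean inequality
\begin{equation*}
\hlp(g)\ \le\ \frac1m\sum_{i:\,n_g(x_i)\ge n_{g^{\text{target}}}(x_i)}\log\frac{n_g(x_i)\wedge 2n_{g^{\text{target}}}(x_i)}{n_{g^{\text{target}}}(x_i)}\ +\ \hlr(g),
\end{equation*}
which is in fact stronger than the stated claim: since each summand above is $\ge0$ and $\hlr(g)\ge0$, multiplying the first term by $1+c\ge1$ and the second by $\tfrac{1+c}{c}\ge1$ (valid for $c\in(0,1]$) only weakens the bound, giving exactly Lemma~\ref{lmm:prec-recall}. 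I do not expect a genuine obstacle here; the only delicate points are the boundary conventions ($a_i=0$ on the precision side, and $b_i\ge1$ ruling out $0/0$ on the recall side) and the one-line calculus inequality $1-1/z\le\log(\min(z,2))$. The parameter $c$ is irrelevant to this lemma in isolation — it is presumably carried along so that it can be optimized when this bound is combined with Lemmas~\ref{lmm:lrecall}, \ref{lmm:lpre}, and~\ref{lmm:bounded-degree} in the proof of Theorem~\ref{thm:modifiedml}.
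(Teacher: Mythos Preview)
Your proof is correct and in fact establishes a strictly sharper inequality than the lemma as stated: you obtain coefficients $(1,1)$ on the log-sum and on $\hlr(g)$, whereas for every $c\in(0,1]$ the paper's coefficients $(1+c,\tfrac{1+c}{c})$ are both at least as large (and the second is at least $2$). Your final step of weakening to the stated form is therefore legitimate.

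The route, however, is genuinely different from the paper's. The paper introduces $c$ at the very start by splitting on the set $A_g=\{i:n_{g^{\text{target}}}(x_i)\le(1+c)\,n_g(x_i)\}$; indices outside $A_g$ are charged entirely to recall (this is where the $\tfrac{1+c}{c}$ factor appears), while inside $A_g$ the paper bounds $\lp(g,x_i)$ by $(1+c)\min\bigl(\tfrac{n_g(x_i)}{n_{g^{\text{target}}}(x_i)}-1,1\bigr)$ plus a recall term, and then uses the inequality $\min(z-1,1)\le\log z$. Your argument instead splits at the natural threshold $a_i\gtrless b_i$, uses the exact identity $\lp=(1-\tfrac{b}{a})+\tfrac{b}{a}\,\lr$ directly, and invokes the (closely related but not identical) inequality $1-\tfrac1z\le\log(\min(z,2))$. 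The upshot is that your pointwise bound is tight enough to avoid introducing $c$ at all; the paper's decomposition is looser but keeps $c$ visible throughout. Your observation that $c$ is irrelevant for this lemma in isolation is well taken --- the paper's parameterization appears to be carried over from the realizable-case argument rather than being intrinsically needed here.
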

\begin{proof}[of Lemma~\ref{lmm:prec-recall}]
    Let $A_g=\{i\in [m]|n_{g^{\text{target}}}(x_i) \leq (1+c) n_{g}(x_i)\}$. Then we have
    \begin{align*}
        &\hlp (g) \\
        =&\frac{1}{m}\sum_{i=1}^m \frac{|g(x_i)\setminus {g^{\text{target}}}(x_i)|}{n_{g}(x_i)}\\
        \leq &\frac{1}{m}\sum_{i\in A_g} \frac{|g(x_i)\setminus g^{\text{target}}(x_i)|}{n_{g}(x_i)} + \frac{1}{m}\sum_{i=1}^m \1{i \notin A_g}\\
        \leq &\frac{1}{m}\sum_{i\in A_g} \min\left(\frac{(1+c)|g(x_i)\setminus g^{\text{target}}(x_i)|}{n_{g^{\text{target}}}(x_i)}, 1\right) + \frac{1+c}{c\cdot m}\sum_{i\notin A_g}\frac{|g^{\text{target}}(x_i)\setminus g(x_i)|}{n_{g^{\text{target}}}(x_i)}\\
        \leq &\frac{1+c}{m}\sum_{i\in A_g} \min\left(\frac{n_{g}(x_i) + |g^{\text{target}}(x_i)\setminus g(x_i)|
        -n_{g^{\text{target}}}(x_i)}{n_{g^{\text{target}}}(x_i)}, 1\right) + \frac{1+c}{c\cdot m}\sum_{i\notin A_g}\frac{|g^{\text{target}}(x_i)\setminus g(x_i)|}{n_{g^{\text{target}}}(x_i)}\\
        \leq &\frac{1+c}{m}\sum_{i\in A_g} \min\left(\frac{n_{g}(x_i)}{n_{g^{\text{target}}}(x_i)}-1, 1\right) + \frac{1+c}{m}\sum_{i\in A_g} \frac{|g^{\text{target}}(x_i)\setminus g(x_i)|}{n_{g^{\text{target}}}(x_i)} + \frac{1+c}{c\cdot m}\sum_{i\notin A_g}\frac{|g^{\text{target}}(x_i)\setminus g(x_i)|}{n_{g^{\text{target}}}(x_i)}\\
        \leq &\frac{1+c}{m}\sum_{i\in A_g} \min\left(\frac{n_{g}(x_i)}{n_{g^{\text{target}}}(x_i)}-1, 1\right) + \frac{1+c}{c}\hlr(g) \,.
    \end{align*}
Now we upper bound the first term. 
\begin{align*}
    &\sum_{i\in A_g} \min\left(\frac{n_{g}(x_i)}{n_{g^{\text{target}}}(x_i)}-1, 1\right) \\
    =& \sum_{i\in [m]: \frac{n_{g}(x_i)}{n_{g^{\text{target}}}(x_i)} \geq \frac{1}{1+c}} \min\left(\frac{n_{g}(x_i)}{n_{g^{\text{target}}}(x_i)}-1, 1\right)\\
    \leq & \sum_{i\in [m]: \frac{n_{g}(x_i)}{n_{g^{\text{target}}}(x_i)} \geq 1} \min\left(\frac{n_{g}(x_i)}{n_{g^{\text{target}}}(x_i)}-1, 1\right) \\
    \leq &\sum_{i\in [m]: \frac{n_{g}(x_i)}{n_{g^{\text{target}}}(x_i)} \geq 1}\min\left(\log \frac{n_{g}(x_i)}{n_{g^{\text{target}}}(x_i)}, 1\right)\\
    = & \sum_{i\in [m]: \frac{ n_{g}(x_i)}{n_{g^{\text{target}}}(x_i)} \geq 1}\log \frac{n_{g}(x_i)\wedge 2n_{g^{\text{target}}}(x_i)}{n_{g^{\text{target}}}(x_i)}\,,
\end{align*}
where the last inequality adopts the fact: for all $z\geq 1$, $\min(z-1, 1) \leq  \log z$. 
\end{proof}

\begin{lemma}\label{lmm:bounded-degree}
For any $g$ satisfying $\hlr(g)\leq \ber$ and $\frac{1}{m}\sum_{i\in [m]} \log \frac{n_{g}(x_i)\wedge 4n_{g^\dagger}(x_i)}{n_{g^\dagger}(x_i)\wedge 4n_{g}(x_i)} \leq 6\ber+4\bep+\frac{2}{m}$, we have
$\frac{1}{m}\sum_{i\in [m]: \frac{ n_{g}(x_i)}{n_{g^{\text{target}}}(x_i)} \geq 1}\log \frac{n_{g}(x_i)\wedge 2 n_{g^{\text{target}}}(x_i)}{n_{g^{\text{target}}}(x_i)}\leq 18\ber+12\bep+\frac{4}{m}$.
\end{lemma}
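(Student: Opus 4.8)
The plan is to reuse the geometry behind Lemma~\ref{lmm:non-vacuous}: there is a ``good'' index set on which the unknown $n_{g^{\text{target}}}(x_i)$ can be traded for the output sizes $n_{g^\dagger}(x_i),n_g(x_i)$ that actually appear in the algorithm's objective, and outside of which there are few indices. Concretely, recall from the proof of Theorem~\ref{thm:modifiedml} that $g^\dagger$ satisfies $\hlr(g^\dagger)\le\ber$ and $\hlp(g^\dagger)\le\bep$, and let $E=\{i: n_g(x_i)\ge \tfrac12 n_{g^{\text{target}}}(x_i)\text{ and }\tfrac12\le n_{g^\dagger}(x_i)/n_{g^{\text{target}}}(x_i)\le 2\}$ be exactly the set used in Lemma~\ref{lmm:non-vacuous}, so that $|\neg E|\le 4m\ber+2m\bep$ by Lemma~\ref{lmm:bounded-deg}. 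I would split the sum on the left-hand side of the claim into its part over $E$ and its part over $\neg E$.

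For $i\notin E$ the summand $\log\frac{n_g(x_i)\wedge 2n_{g^{\text{target}}}(x_i)}{n_{g^{\text{target}}}(x_i)}$ lies in $[0,1]$ (the sum ranges over indices with $n_g(x_i)\ge n_{g^{\text{target}}}(x_i)$, so it is nonnegative, and the truncation at $2n_{g^{\text{target}}}(x_i)$ caps it at $1$), so this part contributes at most $|\neg E|\le 4m\ber+2m\bep$. For $i\in E$ the two defining inequalities of $E$ give $2n_{g^{\text{target}}}(x_i)\le 4n_{g^\dagger}(x_i)$ and $n_{g^\dagger}(x_i)\wedge 4n_g(x_i)=n_{g^\dagger}(x_i)$, hence
\[\log\frac{n_g(x_i)\wedge 2n_{g^{\text{target}}}(x_i)}{n_{g^{\text{target}}}(x_i)}\;\le\;\log\frac{n_g(x_i)\wedge 4n_{g^\dagger}(x_i)}{n_{g^{\text{target}}}(x_i)}\;=\;\log\frac{n_g(x_i)\wedge 4n_{g^\dagger}(x_i)}{n_{g^\dagger}(x_i)\wedge 4n_g(x_i)}\;+\;\log\frac{n_{g^\dagger}(x_i)}{n_{g^{\text{target}}}(x_i)}\,,\]
and this inequality actually holds for every $i\in E$, not only those with $n_g(x_i)\ge n_{g^{\text{target}}}(x_i)$. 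Summing over $i\in E$ yields two sums. The first, $\sum_{i\in E}\log\frac{n_g(x_i)\wedge 4n_{g^\dagger}(x_i)}{n_{g^\dagger}(x_i)\wedge 4n_g(x_i)}$, I would extend to all of $[m]$ and control via the lemma's hypothesis $\frac1m\sum_{i\in[m]}\log\frac{n_g(x_i)\wedge 4n_{g^\dagger}(x_i)}{n_{g^\dagger}(x_i)\wedge 4n_g(x_i)}\le 6\ber+4\bep+\tfrac2m$: every summand lies in $[-2,2]$, so only indices outside $E$ (together with $i\in E$ where $n_g(x_i)<n_{g^{\text{target}}}(x_i)$) need to be paid for, at cost $O(m\ber)+O(m\bep)$. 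The second, $\sum_{i\in E}\log\frac{n_{g^\dagger}(x_i)}{n_{g^{\text{target}}}(x_i)}$, is at most $\sum_i\max\big(0,\log\frac{n_{g^\dagger}(x_i)}{n_{g^{\text{target}}}(x_i)}\big)$, and since $n_{g^\dagger}(x_i)/n_{g^{\text{target}}}(x_i)\in[1,2]$ wherever this is positive, the elementary estimate $\log_2 z\le \tfrac{2}{\ln 2}\lp(g^\dagger,x_i)$ (essentially Lemma~\ref{lmm:lpre}) makes it $O(m\hlp(g^\dagger))\le O(m\bep)$. Adding the three contributions and dividing by $m$ should give $18\ber+12\bep+\tfrac4m$.

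The only real obstacle is the sign- and index-bookkeeping needed to make the constants come out as $18,12,\tfrac4m$ rather than something looser. The delicate point is when extending $\sum_{i\in E}\log\frac{n_g(x_i)\wedge 4n_{g^\dagger}(x_i)}{n_{g^\dagger}(x_i)\wedge 4n_g(x_i)}$ to $[m]$: one should not bound the negative summands coming from $i\in E$ with $n_g(x_i)<n_{g^{\text{target}}}(x_i)$ crudely by $-2$. On such indices $n_{g^\dagger}(x_i)\wedge 4n_g(x_i)=n_{g^\dagger}(x_i)$ and $n_g(x_i)\wedge 4n_{g^\dagger}(x_i)=n_g(x_i)$, so the summand equals $\log\frac{n_g(x_i)}{n_{g^\dagger}(x_i)}$, and
\[-\log\frac{n_g(x_i)}{n_{g^\dagger}(x_i)}=\log\frac{n_{g^\dagger}(x_i)}{n_{g^{\text{target}}}(x_i)}+\log\frac{n_{g^{\text{target}}}(x_i)}{n_g(x_i)}\le \tfrac{2}{\ln 2}\lp(g^\dagger,x_i)+2\,\lr(g,x_i)\,,\]
using $n_{g^\dagger}(x_i)/n_{g^{\text{target}}}(x_i)\le 2$ for the first term and $-\log_2 z\le 2(1-z)$ on $z\in[\tfrac12,1]$ together with $|g^{\text{target}}(x_i)\setminus g(x_i)|\ge n_{g^{\text{target}}}(x_i)-n_g(x_i)$ for the second (this last step is the content of Lemma~\ref{lmm:lrecall}). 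Thus these indices are paid for by $\hlp(g^\dagger)\le\bep$ and $\hlr(g)\le\ber$ rather than by their cardinality, and a careful sum of all the pieces lands on the claimed $18\ber+12\bep+\tfrac4m$.
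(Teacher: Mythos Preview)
Your approach is essentially the paper's: you use the same good set $E=B\cap C$ (where $B=\{i:n_g(x_i)\ge\frac12 n_{g^{\text{target}}}(x_i)\}$ and $C=\{i:\frac12\le n_{g^\dagger}(x_i)/n_{g^{\text{target}}}(x_i)\le2\}$), the same pointwise inequality $\log\frac{n_g\wedge 2n_{g^{\text{target}}}}{n_{g^{\text{target}}}}\le\log\frac{n_g\wedge 4n_{g^\dagger}}{n_{g^\dagger}\wedge 4n_g}+\log\frac{n_{g^\dagger}}{n_{g^{\text{target}}}}$ on $E$, and you invoke Lemmas~\ref{lmm:lrecall}, \ref{lmm:lpre}, \ref{lmm:bounded-deg} for the residual sums. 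The ``delicate point'' you flag---that on $E\cap\{n_g<n_{g^{\text{target}}}\}$ the objective term equals $\log(n_g/n_{g^\dagger})$ and should be bounded via $\hlp(g^\dagger)$ and $\hlr(g)$ rather than crudely by $2$---is exactly what the paper does (though the paper phrases it as subtracting $\sum_{i\in B}\log\frac{n_g\wedge n_{g^{\text{target}}}}{n_{g^{\text{target}}}}$ early and then invoking Lemma~\ref{lmm:lrecall}).

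However, your claim that the pieces sum to $18\ber+12\bep+\tfrac4m$ is not quite right with the bounds you state. You bound part $(II)$ (the sum over $i\notin E$ with $n_g(x_i)\ge n_{g^{\text{target}}}(x_i)$) by $|\neg E|\le 4m\ber+2m\bep$, and you pay $2|\neg E|$ again when extending the objective sum to $[m]$. Tallying your contributions---the hypothesis $6m\ber+4m\bep+2$, three copies of $|\neg E|$, the Lemma~\ref{lmm:lpre} bound $2m\bep+1$, and the Lemma~\ref{lmm:lrecall} bound $2m\ber+1$---yields $20m\ber+12m\bep+4$, not $18m\ber+12m\bep+4$. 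The paper avoids this by keeping $B$ and $C$ separate: since every index with $n_g(x_i)\ge n_{g^{\text{target}}}(x_i)$ automatically lies in $B$, your part $(II)$ is actually a sum over a subset of $\neg C$, hence bounded by $|\neg C|\le 2m\ber+2m\bep$ rather than $|\neg E|$. With that single refinement your count becomes $2|\neg B|+3|\neg C|$ in place of $3|\neg E|$, and the total drops to the claimed $18m\ber+12m\bep+4$.
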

\begin{proof}[of Lemma~\ref{lmm:bounded-degree}]
Let $B = \{i|\frac{n_g(x_i)}{n_{g^{\text{target}}}(x_i)}\geq \frac{1}{2}\}$ and  $C = \{i|\frac{1}{2}\leq \frac{n_{g^\dagger}(x_i)}{n_{g^{\text{target}}}(x_i)}\leq 2\}$. For any $i\in B$, the value of $\log \frac{n_{g}(x_i)\wedge 2n_{g^{\text{target}}}(x_i)}{n_{g^{\text{target}}}(x_i)}$ is in $[-1,1]$. Then we have
    \begin{align*}
        &\sum_{i\in [m]: \frac{ n_{g}(x_i)}{n_{g^{\text{target}}}(x_i)} \geq 1}\log \frac{n_{g}(x_i)\wedge 2n_{g^{\text{target}}}(x_i)}{n_{g^{\text{target}}}(x_i)}\\
        = &\sum_{i\in B}\log \frac{n_{g}(x_i)\wedge 2 n_{g^{\text{target}}}(x_i)}{n_{g^{\text{target}}}(x_i)} - \sum_{i\in B: \frac{ n_{g}(x_i)}{n_{g^{\text{target}}}(x_i)} < 1}\log \frac{n_{g}(x_i)\wedge 2n_{g^{\text{target}}}(x_i)}{n_{g^{\text{target}}}(x_i)}\\
        \leq & \sum_{i\in C\cap B}\log \frac{n_{g}(x_i)\wedge 2n_{g^{\text{target}}}(x_i)}{n_{g^{\text{target}}}(x_i)} + |\neg C|- \sum_{i\in B}\log \frac{n_{g}(x_i)\wedge n_{g^{\text{target}}}(x_i)}{n_{g^{\text{target}}}(x_i)}\\
        \leq & \sum_{i\in C\cap B}\log \frac{n_{g}(x_i)\wedge 4n_{g^\dagger}(x_i)}{n_{g^{\text{target}}}(x_i)} + |\neg C|- \sum_{i\in B}\log \frac{n_{g}(x_i)\wedge n_{g^{\text{target}}}(x_i)}{n_{g^{\text{target}}}(x_i)}\\
        = & \sum_{i\in C\cap B}\log \frac{n_{g}(x_i)\wedge 4n_{g^\dagger}(x_i)}{n_{g^\dagger}(x_i)\wedge 4n_{g}(x_i)} +\sum_{i\in C\cap B}\log \frac{n_{g^\dagger}(x_i)\wedge 4n_{g}(x_i)}{n_{g^{\text{target}}}(x_i)} + |\neg C|\\
        &- \sum_{i\in B}\log \frac{n_{g}(x_i)\wedge n_{g^{\text{target}}}(x_i)}{n_{g^{\text{target}}}(x_i)}\\
        \leq & \sum_{i=1}^m\log \frac{n_{g}(x_i)\wedge 4n_{g^\dagger}(x_i)}{n_{g^\dagger}(x_i)\wedge 4n_{g}(x_i)}+ 2|\neg C| +2 |\neg B| +\sum_{i\in C\cap B}\log \frac{n_{g^\dagger}(x_i)}{n_{g^{\text{target}}}(x_i)} + |\neg C|\\
        &-\sum_{i\in B}\log \frac{n_{g}(x_i)\wedge n_{g^{\text{target}}}(x_i)}{n_{g^{\text{target}}}(x_i)}\\
        \leq & 18m\ber+12m\bep+4\tag{Applying Lemmas \ref{lmm:lrecall}, \ref{lmm:lpre} and \ref{lmm:bounded-deg}}\,.
    \end{align*}
    Note that in the second last inequality, we adopt the fact that $\frac{x\wedge 4y}{y\wedge 4x}\in [\frac{1}{4},4]$ for all $x,y>0$.
\end{proof}
\begin{lemma}\label{lmm:lrecall}
    For any hypothesis $g$ with $\hlr(g) \leq \bar r$ and any subset $S\subset [m]$, we have 
    \begin{align*}
        \sum_{i\in S\cap B} \log \frac{n_g(x_i)\wedge n_{g^{\text{target}}}(x_i)}{n_{g^{\text{target}}}(x_i)}\geq -2m\ber-1\,.
    \end{align*}
    where $B = \{i|\frac{n_g(x_i)}{n_{g^{\text{target}}}(x_i)}\geq \frac{1}{2}\}$.
\end{lemma}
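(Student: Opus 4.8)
The plan is elementary: bound each summand indexed by $B$ from below by $-2\lr(g,x_i)$, and then sum, using only $\hlr(g)\le\ber$. Fix an index $i\in B$ and abbreviate $z_i=\frac{n_g(x_i)\wedge n_{g^{\text{target}}}(x_i)}{n_{g^{\text{target}}}(x_i)}$. Two observations pin $z_i$ down. First, $n_g(x_i)\wedge n_{g^{\text{target}}}(x_i)\le n_{g^{\text{target}}}(x_i)$ gives $z_i\le 1$, and $i\in B$ means $n_g(x_i)\ge\frac12 n_{g^{\text{target}}}(x_i)$, so $z_i\ge\frac12$; hence $z_i\in[\frac12,1]$. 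Second, $|g(x_i)\cap g^{\text{target}}(x_i)|$ is at most both $n_g(x_i)$ and $n_{g^{\text{target}}}(x_i)$, so $n_g(x_i)\wedge n_{g^{\text{target}}}(x_i)\ge |g(x_i)\cap g^{\text{target}}(x_i)|$, which yields $z_i\ge \frac{|g(x_i)\cap g^{\text{target}}(x_i)|}{n_{g^{\text{target}}}(x_i)} = 1-\lr(g,x_i)$.

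Next I would feed $z_i$ into a linear lower bound for the (base-$2$) logarithm. Since $\log_2$ is concave and coincides with the affine map $z\mapsto 2(z-1)$ at the two endpoints $z=\frac12$ and $z=1$, we have $\log_2 z\ge 2(z-1)$ for every $z\in[\frac12,1]$. Applying this to $z_i$ and using $z_i\ge 1-\lr(g,x_i)$ gives $\log z_i\ge 2(z_i-1)\ge 2\bigl((1-\lr(g,x_i))-1\bigr)=-2\lr(g,x_i)$ for all $i\in B$. It is worth flagging that both the truncation $\wedge\,n_{g^{\text{target}}}(x_i)$ in the statement and the restriction to $B$ are essential here: each of them keeps $z_i$ in a range where a single summand cannot be arbitrarily negative, whereas on a general index the raw ratio $n_g(x_i)/n_{g^{\text{target}}}(x_i)$ can be tiny while $\lr(g,x_i)$ is still at most $1$, so no bound of this form could hold term by term.

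Finally I would sum. Every summand $\log z_i$ with $i\in B$ is $\le 0$ because $z_i\le 1$, so enlarging the index set from $S\cap B$ to $B$ only decreases the sum; therefore $\sum_{i\in S\cap B}\log z_i\ge\sum_{i\in B}\log z_i\ge -2\sum_{i\in B}\lr(g,x_i)\ge -2\sum_{i=1}^m\lr(g,x_i)=-2m\,\hlr(g)\ge -2m\ber\ge -2m\ber-1$, which is the claimed bound (indeed with room to spare, since one actually gets $-2m\ber$). I do not expect a genuine obstacle in this lemma; the only thing to get right is matching the affine lower bound for $\log_2$ on $[\frac12,1]$ to the intersection inequality $z_i\ge 1-\lr(g,x_i)$, and observing that the two $\wedge$-truncations are precisely what prevent the per-term estimate from blowing up.
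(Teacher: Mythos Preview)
Your proof is correct and actually yields the slightly tighter bound $-2m\ber$ (the extra $-1$ in the statement is unnecessary with your argument). However, your route differs from the paper's. The paper also observes that $z_i\in[\tfrac12,1]$ and that $\sum_{i\in S\cap B}(1-z_i)\le m\ber$, but then treats the lower bound on $\sum\log z_i$ as a constrained optimization problem: minimize $\sum_{i}\log a_i$ over $a_i\in[\tfrac12,1]$ subject to $\sum_i a_i\ge |S\cap B|-m\ber$, and invokes a separate lemma (Lemma~\ref{lmm:contrained-opt}) arguing that at the optimum at most one $a_i$ lies strictly between $\tfrac12$ and $1$, whence the value is at least $-2m\ber-1$.

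Your argument short-circuits this by applying the chord inequality $\log_2 z\ge 2(z-1)$ on $[\tfrac12,1]$ term by term, which is more elementary, avoids the auxiliary optimization lemma entirely, and even saves the additive $-1$. The paper's approach has the mild advantage of being reusable (the same optimization lemma is also invoked in the companion precision bound, Lemma~\ref{lmm:lpre}), but your per-term linear lower bound would work equally well there, so overall your route is cleaner for this pair of lemmas.
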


\begin{proof}[of Lemma~\ref{lmm:lrecall}]
    Now let's focus on the rounds in $S\cap B$. We have
    \begin{align*}
        \sum_{i\in S\cap B}\left(1- \frac{n_g(x_i)\wedge n_{g^{\text{target}}}(x_i) }{n_{g^{\text{target}}}(x_i)}\right) \leq \sum_{i\in S\cap B} \lr(g,x_i) \leq m\ber\,.
    \end{align*}
    Our problem becomes computing
    \begin{align*}
        &\min \sum_{i\in S\cap B} \log \frac{n_g(x_i)\wedge n_{g^{\text{target}}}(x_i)}{n_{g^{\text{target}}}(x_i)}\\
        s.t.& \sum_{i\in S\cap B} \frac{n_g(x_i)\wedge n_{g^{\text{target}}}(x_i) }{n_{g^{\text{target}}}(x_i)}\geq |S\cap B| - m\ber\,.
    \end{align*}
    By applying Lemma~\ref{lmm:contrained-opt}, we know
    \begin{align*}
        \min \sum_{i\in S\cap B} \log \frac{n_g(x_i)\wedge n_{g^{\text{target}}}(x_i)}{n_{g^{\text{target}}}(x_i)} \geq -2m\ber-1\,.
    \end{align*}
    Thus, we have
    \begin{align*}
        \sum_{i\in S} \log \frac{n_g(x_i)\wedge n_{g^{\text{target}}}(x_i)}{n_{g^{\text{target}}}(x_i)} \geq \min \sum_{i\in S\cap B} \log \frac{n_g(x_i)\wedge n_{g^{\text{target}}}(x_i)}{n_{g^{\text{target}}}(x_i)} - |\neg B|\geq -4m\ber-1\,. 
    \end{align*}
\end{proof}

\begin{lemma}\label{lmm:lpre}
    For any hypothesis $g$ with $\hlp(g) \leq \bar p$ and any subset $S\subset [m]$, we have 
    \begin{align*}
        \sum_{i\in S\cap A} \log \frac{n_g(x_i)}{n_{g^{\text{target}}}(x_i)}\leq 2m\bep+1\,.
    \end{align*}
    where $A = \{i|\frac{n_g(x_i)}{n_{g^{\text{target}}}(x_i)}\leq 2\}$.
\end{lemma}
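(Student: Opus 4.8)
The plan is to mirror the proof of Lemma~\ref{lmm:lrecall}, using the fact that the precision loss at a point controls the ratio $n_g(x_i)/n_{g^{\text{target}}}(x_i)$ \emph{from above}. The starting observation is purely combinatorial: $|g(x_i)\cap g^{\text{target}}(x_i)|\le n_g(x_i)\wedge n_{g^{\text{target}}}(x_i)$, so setting $z_i := \frac{n_g(x_i)\wedge n_{g^{\text{target}}}(x_i)}{n_g(x_i)}\in(0,1]$ gives $\lp(g,x_i) = 1 - \frac{|g(x_i)\cap g^{\text{target}}(x_i)|}{n_g(x_i)} \ge 1 - z_i$. Moreover, for $i\in A$ we have $\frac{n_g(x_i)}{n_{g^{\text{target}}}(x_i)}\le 2$, which forces $z_i\ge \tfrac12$; hence on $A$ each $z_i$ ranges over the compact interval $[\tfrac12,1]$.

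Next I would bound each log term by $\log\frac{n_g(x_i)}{n_{g^{\text{target}}}(x_i)} \le \log\frac{n_g(x_i)}{n_g(x_i)\wedge n_{g^{\text{target}}}(x_i)} = -\log z_i$, so that $\sum_{i\in S\cap A}\log\frac{n_g(x_i)}{n_{g^{\text{target}}}(x_i)} \le -\sum_{i\in S\cap A}\log z_i$. Since precision loss is nonnegative, summing the pointwise inequality $1-z_i\le \lp(g,x_i)$ over $i\in S\cap A$ yields the budget constraint $\sum_{i\in S\cap A}(1-z_i)\le \sum_{i\in S\cap A}\lp(g,x_i)\le m\,\hlp(g)\le m\bep$. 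It then remains to maximize $-\sum_{i\in S\cap A}\log z_i$ subject to $z_i\in[\tfrac12,1]$ and $\sum(1-z_i)\le m\bep$. This is exactly the mirror of the optimization in Lemma~\ref{lmm:lrecall}: since $-\log z$ is convex and decreasing, the maximum is attained by pushing as many coordinates as the budget allows to the boundary value $z_i=\tfrac12$ (each contributing $-\log\tfrac12=1$), giving at most $\lceil 2m\bep\rceil \le 2m\bep+1$; invoking Lemma~\ref{lmm:contrained-opt} formalizes this. (Alternatively, the elementary bound $-\log z\le 2(1-z)$ on $[\tfrac12,1]$ — verified at the two endpoints together with unimodality of the difference — already gives $\sum_{i\in S\cap A}\log\frac{n_g(x_i)}{n_{g^{\text{target}}}(x_i)}\le 2\sum_{i\in S\cap A}(1-z_i)\le 2m\bep$ directly, without the optimization lemma; the extra $+1$ in the statement is harmless slack.)

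I do not anticipate a genuine obstacle here — the computation is short once the pointwise inequality $\lp(g,x_i)\ge 1-z_i$ is in hand. The one point that must be handled correctly is the restriction to $A$: it is precisely the cap $n_g(x_i)/n_{g^{\text{target}}}(x_i)\le 2$ that keeps each $z_i$ bounded away from $0$, so the convexity/optimization argument lives on a bounded interval. Without this restriction a single input with an enormous output ratio could inflate the log-sum arbitrarily while keeping the precision loss small — the same phenomenon that motivates the truncation in the overview — so the statement genuinely needs the $A$-restriction, and getting the quantitative version of "budget $m\bep$ spread at rate $\tfrac12$ per point" right is the only thing to be careful about.
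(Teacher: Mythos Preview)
Your proposal is correct and follows essentially the same approach as the paper's proof: the paper also sets up the variable $\frac{n_{g^{\text{target}}}(x_i)\wedge n_g(x_i)}{n_g(x_i)}$, uses the pointwise bound $\lp(g,x_i)\ge 1-\frac{n_{g^{\text{target}}}(x_i)\wedge n_g(x_i)}{n_g(x_i)}$ to derive the budget constraint, and then invokes Lemma~\ref{lmm:contrained-opt} together with $\log\frac{n_g(x_i)}{n_{g^{\text{target}}}(x_i)}\le \log\frac{n_g(x_i)}{n_{g^{\text{target}}}(x_i)\wedge n_g(x_i)}$. Your alternative route via the elementary inequality $-\log z\le 2(1-z)$ on $[\tfrac12,1]$ is a nice shortcut the paper does not take, and indeed yields the slightly sharper bound $2m\bep$ without the $+1$.
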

\begin{proof}[of Lemma~\ref{lmm:lpre}]
    Since the empirical precision loss is bounded by $\bep$, we have
    \begin{align*}
        \sum_{i\in S\cap A}\left(1-\frac{n_{g^{\text{target}}}(x_i)\wedge n_{g}(x_i)}{n_g(x_i)}\right)\leq \sum_{i\in S\cap A}\lp(g,x_i)\leq m\bep\,.
    \end{align*}
    By re-arranging terms, we have
    \begin{align*}
        \sum_{i\in S\cap A}\frac{n_{g^{\text{target}}}(x_i)\wedge n_{g}(x_i)}{n_g(x_i)}\geq |S\cap A| - m\bep\,.
    \end{align*}
By applying Lemma~\ref{lmm:contrained-opt}, we have 
\begin{align*}
    \min \sum_{i\in S\cap A}\log \frac{n_{g^{\text{target}}}(x_i)\wedge n_{g}(x_i)}{n_g(x_i)}\geq -2m\bep-1\,.
\end{align*}
Hence, we have 
\begin{align*}
    \sum_{i\in S\cap A} \log \frac{n_g(x_i)}{n_{g^{\text{target}}}(x_i)}\leq \sum_{i\in S\cap A} \log \frac{n_g(x_i)}{n_{g^{\text{target}}}(x_i)\wedge n_g(x_i)}\leq 2m\bep+1\,.
\end{align*}

\end{proof}

\begin{lemma}\label{lmm:bounded-deg}
    For any $g$ withe $\hlr(g)\leq \ber$, we have
    \begin{align*}
        \sum_{i}\1{\frac{n_g(x_i)}{n_{g^{\text{target}}}(x_i)}<\frac{1}{2}}< 2m\ber\,.
    \end{align*}
    For any $g$ withe $\hlp(g)\leq \bep$, we have
    \begin{align*}
        \sum_{i}\1{\frac{n_g(x_i)}{n_{g^{\text{target}}}(x_i)}>2}< 2m\bep\,.
    \end{align*}
\end{lemma}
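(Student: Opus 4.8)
The plan is to prove each of the two inequalities by a pointwise comparison: whenever the set-size ratio is outside the target interval at a point $x_i$, the corresponding per-point loss is bounded below by $\tfrac12$, and then a simple averaging (Markov-type) argument converts the empirical-loss bound into a bound on the number of such points.

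For the first inequality, I would fix a hypothesis $g$ with $\hlr(g)\le\ber$ and consider any index $i$ with $\frac{n_g(x_i)}{n_{g^{\text{target}}}(x_i)}<\frac12$. Since $|g^{\text{target}}(x_i)\setminus g(x_i)|\ge n_{g^{\text{target}}}(x_i)-n_g(x_i)$, this ratio condition gives $|g^{\text{target}}(x_i)\setminus g(x_i)|> n_{g^{\text{target}}}(x_i)-\tfrac12 n_{g^{\text{target}}}(x_i)=\tfrac12 n_{g^{\text{target}}}(x_i)$, hence $\lr(g,x_i)>\tfrac12$. Summing over all $i$ and keeping only the ``bad'' indices, $m\ber\ge \sum_{i=1}^m \lr(g,x_i)\ge \sum_{i:\,n_g(x_i)/n_{g^{\text{target}}}(x_i)<1/2}\lr(g,x_i)>\tfrac12\sum_i \1{n_g(x_i)/n_{g^{\text{target}}}(x_i)<1/2}$, which rearranges to the claimed bound $\sum_i \1{n_g(x_i)/n_{g^{\text{target}}}(x_i)<1/2}<2m\ber$.

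For the second inequality the argument is the symmetric one with the roles of $g$ and $g^{\text{target}}$ swapped. For any $i$ with $\frac{n_g(x_i)}{n_{g^{\text{target}}}(x_i)}>2$, i.e.\ $n_{g^{\text{target}}}(x_i)<\tfrac12 n_g(x_i)$, we get $|g(x_i)\setminus g^{\text{target}}(x_i)|\ge n_g(x_i)-n_{g^{\text{target}}}(x_i)>\tfrac12 n_g(x_i)$, so $\lp(g,x_i)>\tfrac12$; then $m\bep\ge\sum_i \lp(g,x_i)>\tfrac12\sum_i\1{n_g(x_i)/n_{g^{\text{target}}}(x_i)>2}$ gives the result.

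There is no real obstacle here — the lemma is a direct consequence of the definitions of $\lr(g,x)$ and $\lp(g,x)$ together with $|A\setminus B|\ge |A|-|B|$. The only point to be slightly careful about is the strictness of the inequalities (the hypotheses have \emph{strict} ratio conditions $<\tfrac12$ and $>2$), which is what makes the final bounds strict; this comes for free from the strict inequality in ``$\lr(g,x_i)>\tfrac12$'' (resp.\ ``$\lp(g,x_i)>\tfrac12$'').
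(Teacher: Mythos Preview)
Your proof is correct and follows essentially the same approach as the paper: both argue that when the size ratio is below $\tfrac12$ (resp.\ above $2$), the per-point recall (resp.\ precision) loss exceeds $\tfrac12$, and then use the empirical-loss bound to cap the number of such indices. The paper phrases the pointwise bound as $\lr(g,x_i)\ge 1-\frac{n_g(x_i)\wedge n_{g^{\text{target}}}(x_i)}{n_{g^{\text{target}}}(x_i)}>\tfrac12$, which is just a compact rewriting of your $|A\setminus B|\ge |A|-|B|$ step.
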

\begin{proof}[of Lemma~\ref{lmm:bounded-deg}]
    When $\frac{n_g(x_i)}{n_{g^{\text{target}}}(x_i)}<\frac{1}{2}$, we have 
    \begin{align*}
        \lr(g,x_i)\geq 1- \frac{n_g(x_i)\wedge n_{g^{\text{target}}}(x_i) }{n_{g^{\text{target}}}(x_i)} > \frac{1}{2}\,.
    \end{align*}
    Thus, we have $\sum_{i}\1{\frac{n_g(x_i)}{n_{g^{\text{target}}}(x_i)}<\frac{1}{2}}< 2 m\ber$.
    Similarly,  when $\frac{n_g(x_i)}{n_{g^{\text{target}}}(x_i)}>2$, we have 
    \begin{align*}
        \lp(g,x_i)\geq 1- \frac{n_g(x_i)\wedge n_{g^{\text{target}}}(x_i) }{n_{g}(x_i)} > \frac{1}{2}\,.
    \end{align*}
    Thus, we have $\sum_{i}\1{\frac{n_g(x_i)}{n_{g^{\text{target}}}(x_i)}>2}< 2m\bep$.
\end{proof}
\begin{lemma}\label{lmm:contrained-opt}
    For any $k\in \NN_+$, $c\geq 0$, let OPT denote the optimal value to the following constrained optimization problem:
    \begin{align*}
        &\min_{a_{1:k}} \sum_{i=1}^k \log a_i\\
        s.t. & \sum_{i=1}^k a_i \geq k- c,\\
        &\frac{1}{2} \leq a_i \leq 1, \forall i\in [k]\,.
    \end{align*}
    We have $\text{OPT}\geq -2c-1$.
\end{lemma}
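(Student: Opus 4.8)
The idea is to collapse the $k$-dimensional constrained optimization to a one-variable chord inequality. Substitute $t_i := 1 - a_i$: the box constraint $\tfrac12 \le a_i \le 1$ becomes $0 \le t_i \le \tfrac12$, the sum constraint $\sum_i a_i \ge k-c$ becomes $\sum_i t_i \le c$, and the objective rewrites as $\sum_{i=1}^k \log a_i = -\sum_{i=1}^k\bigl(-\log_2(1-t_i)\bigr)$. So it suffices to show $\sum_{i=1}^k \bigl(-\log_2(1-t_i)\bigr) \le 2c$ whenever $t_i \in [0,\tfrac12]$ and $\sum_i t_i \le c$; this actually proves the slightly stronger statement $\mathrm{OPT} \ge -2c \ge -2c-1$, which is enough.

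\textbf{Key step.} The one-variable fact I would establish is that $f(t) := -\log_2(1-t)$ satisfies $f(t) \le 2t$ on $[0,\tfrac12]$. Indeed $f$ is convex there, since $f''(t) = \tfrac{1}{(1-t)^2\ln 2} > 0$, and $f(0)=0$, $f(\tfrac12)=1$; a convex function lies below the chord joining its endpoints, and that chord is precisely the line $t \mapsto 2t$. Applying this coordinatewise to an arbitrary feasible point $(a_1,\dots,a_k)$ — note $1-a_i \in [0,\tfrac12]$ and $f(1-a_i) = -\log_2 a_i$ — and summing gives $-\sum_i \log_2 a_i = \sum_i f(1-a_i) \le \sum_i 2(1-a_i) = 2\bigl(k-\sum_i a_i\bigr) \le 2c$, the last inequality being exactly the feasibility constraint $\sum_i a_i \ge k-c$. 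Rearranging yields $\sum_i \log_2 a_i \ge -2c \ge -2c-1$. The feasible region is nonempty (take all $a_i=1$, using $c\ge 0$) and compact and the objective is continuous, so the minimum is attained and the bound applies to it.

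\textbf{Obstacle.} I expect no genuine difficulty here; the two things to watch are that $\log$ is base $2$ throughout (which is what makes $f(\tfrac12)=1$, hence the clean slope-$2$ chord) and the direction of the convex-chord inequality. As a sanity-check alternative, one can instead note that $a \mapsto \sum_i \log a_i$ is concave, so its minimum over the feasible polytope is attained at a vertex, where the sum constraint is tight and at most one coordinate lies strictly inside $(\tfrac12,1)$; bounding the count of coordinates equal to $\tfrac12$ by $2c$ (from the tight sum constraint) and the lone fractional coordinate's log by $-1$ recovers exactly $-2c-1$. The chord argument is shorter and marginally sharper, so that is the route I would take.
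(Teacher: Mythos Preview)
Your proof is correct, and in fact slightly sharper than the paper's (you obtain $\mathrm{OPT}\ge -2c$ rather than $-2c-1$). The route is genuinely different from the paper's. The paper argues structurally: since the objective $\sum_i \log a_i$ is concave and the feasible region is a polytope, the minimum is attained at a vertex; a swapping argument then shows that at such a vertex at most one coordinate lies strictly inside $(\tfrac12,1)$, the rest being $\tfrac12$ or $1$. Counting the number of $\tfrac12$'s via the sum constraint and bounding the lone fractional coordinate's contribution by $-1$ gives $-2c-1$. Your approach instead bypasses the vertex structure entirely: you establish the pointwise chord inequality $-\log_2 a \le 2(1-a)$ on $[\tfrac12,1]$ (equivalently $-\log_2(1-t)\le 2t$ on $[0,\tfrac12]$, by convexity and matching endpoints), then sum and invoke the linear constraint directly. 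This is shorter, avoids the case analysis on the number of fractional coordinates, and loses nothing---the extra $-1$ in the paper's bound is an artifact of crudely bounding the fractional coordinate. Amusingly, your ``sanity-check alternative'' is essentially the paper's actual proof.
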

\begin{proof}[of Lemma~\ref{lmm:contrained-opt}]
    We prove the lemma by showing that in the optimal solution, there will be at most one entry of $a_{1:k}$ not in $\{\frac{1}{2},1\}$. In this case, there are $\floor{2c}$ many $\frac{1}{2}$'s and one $c- \frac{\floor{2c}}{2}$. Then, we have
    \begin{align*}
       \sum_{i=1}^k \log a_i \geq -2c-1\,.
    \end{align*}
    Hence, it suffices to prove that in the optimal solution, there will be at most one entry of $a_{1:k}$ not in $\{\frac{1}{2},1\}$. Suppose that there are two entries $a_1<a_2\in (\frac{1}{2},1)$. For any $\Delta>0$ s.t. $a_1-\Delta,a_2+\Delta\in [\frac{1}{2},1]$, we have
    \begin{align*}
        \log(\frac{a_2+\Delta}{a_2})< \log(\frac{a_1}{a_1-\Delta})\,,
    \end{align*}
    which is due to $\frac{x+\Delta}{x}$ is monotonically decreasing in $x$. By re-arranging terms, we have
    \begin{align*}
        \log(a_1-\Delta) + \log(a_2+\Delta)< \log(a_1) + \log(a_2)\,.
    \end{align*}
    Hence, we can always decrease the function value by changing $a_1,a_2$ to $a_1-\Delta,a_2+\Delta$.
    By setting $\Delta = (1-a_2)\wedge (a_1-\frac{1}{2})$, either $a_1$ is changed to $\frac{1}{2}$ or $a_2$ is changed to $1$. We reduce the number of entries not being $\frac{1}{2}$ or $1$. We are done with the proof.
\end{proof}

\subsection{Surrogate Loss Method in Both Realizable and Agnostic Cases}\label{subsec:surrogate}

Again we focus empirical precision and recall losses minimization. Let \( x_1, \ldots, x_m \in \mathcal{X} \) denote a sequence of inputs.
For each hypothesis \( g\), let \( U_i^g \) denote the uniform distribution over the output \( g(x_i)\).
For any pair of hypotheses \( g', g'' \), define the following:
\begin{align*}
\mathtt{precision.loss}(g' \mid g'') &= \mathtt{recall.loss}(g'' \mid g') := 
\frac{1}{m} \sum_{i=1}^m U_i^{g'}(g'(x_i) \setminus g''(x_i))\,.
\end{align*}
Here $\mathtt{precision.loss}(g' \mid g'')$ is the precision loss of hypothesis $g'$ when the target hypothesis is $g''$ and $\mathtt{recall.loss}(g'' \mid g')$ is the recall loss of hypothesis $g''$ when the target hypothesis is $g'$.
Thus, the goal is to output a hypothesis $g$ with small $\mathtt{precision.loss}(g \mid g^{\text{target}})$ and $\mathtt{recall.loss}(g \mid g^{\text{target}})$.

Our learning rule is based on two simple principles for discarding sub-optimal hypotheses. We illustrate these principles with the following intuitive example:
consider a music recommendation system, and assume we are considering two candidate hypotheses, \( g_1 \) and \( g_2 \). Both hypotheses recommend classical music; however, \( g_1 \) recommends pieces by Bach 20\% of the time and pieces by Mozart 10\% of the time, while \( g_2 \) never recommends any pieces by Mozart.

Now, suppose that in the training set, users frequently choose to listen to pieces by Mozart. This observation suggests that \( g_2 \) should be discarded, as it never recommends Mozart. This leads to our first rule: if a hypothesis exhibits a high recall loss, it can be discarded. The second rule addresses precision loss, which is more challenging because it cannot be directly estimated from the data. To illustrate the second rule, imagine that in the training set, users tend to pick Bach pieces only 5\% of the time. This suggests that \( g_1 \) is over-recommending Bach pieces, and therefore, \( g_1 \) might also be discarded based on its likely precision loss.

We formally capture this using the following metric.
\begin{definition}
For a hypothesis $g$ define a vector
$v_g:\cH\times\cH \to [0,1]$ by
\begin{align*}
v_g(g',g'') = \frac{1}{m} \sum_{i=1}^m    U_i^g(g'(x_i)\setminus g''(x_i)).
\end{align*}
Define a metric $d_\cH$ between hypotheses by
$d_\cH(g_1,g_2) = \|v_{g_1}-v_{g_2}\|_\infty$.
\end{definition}
Let $\hat g$ be the observed (empirical) hypothesis; i.e.\ the hypothesis which outputs $\{v_i\}$ at $x_i$ for all $(x_i,v_i)$ in the training set and outputs the empty set for all unobserved input. A standard union bound argument yields:
\begin{lemma}
Let $g^{\text{target}}$ denote the true hypothesis (i.e.\ the data is generated from $g^{\text{target}}$). Then, with probability at least $1-\delta$:
\[d_\cH(\hat g, g^{\text{target}}) \leq O\Bigl(\sqrt{\frac{\log\lvert\cH\rvert + \log(1/\delta)}{m}}\Bigr).\]
\end{lemma}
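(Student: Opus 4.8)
The plan is to unwind the definitions and recognize $d_\cH(\hat g, g^{\text{target}})$ as a supremum, over the at most $\lvert\cH\rvert^2$ pairs $(g',g'')$, of the deviation of an empirical mean of bounded independent random variables from its conditional expectation, and then to apply Hoeffding's inequality together with a union bound over these pairs.

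First I would observe that since $\hat g(x_i) = \{v_i\}$, the uniform distribution $U_i^{\hat g}$ on $\hat g(x_i)$ is the point mass at $v_i$, so for every pair $(g',g'')$
\[
v_{\hat g}(g',g'') = \frac{1}{m}\sum_{i=1}^m \1{v_i \in g'(x_i)\setminus g''(x_i)}.
\]
Conditioning on the inputs $x_1,\ldots,x_m$, the labels $v_1,\ldots,v_m$ are independent with $v_i \sim \Unif(g^{\text{target}}(x_i))$, hence $\E[\1{v_i \in g'(x_i)\setminus g''(x_i)} \mid x_{1:m}] = U_i^{g^{\text{target}}}(g'(x_i)\setminus g''(x_i))$, and therefore
\[
\E[v_{\hat g}(g',g'') \mid x_{1:m}] = v_{g^{\text{target}}}(g',g'').
\]
The point to note in the bookkeeping is that $v_{g^{\text{target}}}$ is itself defined relative to the \emph{same} sampled inputs $x_{1:m}$, so the only randomness left in $v_{\hat g}(g',g'') - v_{g^{\text{target}}}(g',g'')$ is in the $v_i$'s; no separate concentration over $x\sim\cD$ is needed.

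Next, for each fixed $(g',g'')$ the quantity $v_{\hat g}(g',g'')$ is an average of $m$ independent $[0,1]$-valued random variables, so Hoeffding's inequality gives $\Pr[\lvert v_{\hat g}(g',g'') - v_{g^{\text{target}}}(g',g'')\rvert > t \mid x_{1:m}] \le 2e^{-2mt^2}$ for any $t>0$. A union bound over the at most $\lvert\cH\rvert^2$ pairs then yields
\[
\Pr\big[\,d_\cH(\hat g, g^{\text{target}}) > t \,\big|\, x_{1:m}\big] \;\le\; 2\lvert\cH\rvert^2 e^{-2mt^2},
\]
and choosing $t = \sqrt{\big(2\log\lvert\cH\rvert + \log(2/\delta)\big)/(2m)} = O\big(\sqrt{(\log\lvert\cH\rvert + \log(1/\delta))/m}\big)$ makes the right-hand side at most $\delta$; since this holds for every realization of $x_{1:m}$, it holds unconditionally, which is the claim.

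There is no genuine obstacle here: the lemma is a standard concentration-plus-union-bound statement, and the only mildly delicate point is the observation that $\hat g$ contributes a point mass, so that $v_{\hat g}(g',g'')$ is literally an empirical mean whose conditional expectation is exactly $v_{g^{\text{target}}}(g',g'')$, and that the metric $d_\cH$ is defined relative to the fixed sample rather than the population.
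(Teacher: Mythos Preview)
Your proposal is correct and matches the paper's intended argument: the paper simply states that ``a standard union bound argument yields'' the lemma without giving details, and what you have written is precisely that standard argument (Hoeffding plus a union bound over the $\lvert\cH\rvert^2$ pairs). Your observation that $v_{g^{\text{target}}}$ is defined relative to the sampled inputs $x_{1:m}$, so that only the randomness in the $v_i$'s remains after conditioning, is exactly the point that makes the argument go through cleanly.
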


We now present our algorithm. We present two variants, one in the realizable setting (when $g^{\text{target}}\in\cH$) and one in the general (agnostic) setting.
\begin{framed}
\textbf{Algorithm (realizable case):}
Let \( \epsilon \) denote the desired error.
Output a hypothesis $g^{\text{output}}\in \cH$ such that 
\begin{enumerate}
    \item For all \( g \in \mathcal{H} \), \( v_{\hat{g}}(g, g^{\text{output}}) = 0 \).
    \item For all \( g \in \mathcal{H} \), \( v_{g^{\text{output}}}(g^{\text{output}}, g) \geq \epsilon \implies v_{\hat{g}}(g^{\text{output}}, g) > 0 \),
\end{enumerate}
\end{framed}
Notice that Item 1 corresponds to the first principle for discarding suboptimal hypotheses described earlier in this section, while Item 2 corresponds to the second principle.

\begin{framed}
\textbf{Algorithm (agnostic case):}
output a hypothesis $g^{\text{output}}\in \cH$ such that 
\[d_\cH(\hat g, g^{\text{output}})=\min_{g\in\cH}d_\cH(\hat g, g).\]
\end{framed}
We prove that
\begin{theorem}\label{prop:1}
    Let $g^{\text{target}}$ denote the target hypothesis. 
Then, for 
\[m=O\Bigl(\frac{\log\lvert \mathcal{H}\rvert + \log(1/\delta)}{\epsilon^2}\Bigr),\] the agnostic-case algorithm outputs a hypothesis $g^{\text{output}}$ such that with probability at least $1-\delta$, 
\[\ls(g^{\text{output}})\leq 5\min_{g\in \cH} \ls(g) +\epsilon\,.\]
\end{theorem}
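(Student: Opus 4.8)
The plan is a short triangle-inequality argument built on two ingredients available above: the stated concentration bound $d_\cH(\hat g,g^{\text{target}})\le O\bigl(\sqrt{(\log\lvert\cH\rvert+\log(1/\delta))/m}\bigr)$ (the Lemma preceding the theorem) and the stated two-sided ``sandwich'' $d_\cH(g^{\text{target}},g)\le 2\ls(g)\le 4\,d_\cH(g^{\text{target}},g)+2\min_{g'\in\cH}\ls(g')$. I would use the best in-class hypothesis $g^\star:=\argmin_{g\in\cH}\ls(g)$ as a pivot, and the only property of the algorithm I need is that $g^{\text{output}}$ exactly minimizes $g\mapsto d_\cH(\hat g,g)$ over $\cH$. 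Following the proof overview, I would first argue with the empirical scalar loss and then pass to the population quantity via a uniform-over-$\cH$ concentration bound for precision and recall; this last, routine step is what fixes $m=O((\log\lvert\cH\rvert+\log(1/\delta))/\epsilon^2)$, and below I suppress it and write $\ls$ throughout.

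\emph{The deduction.} First choose $m$ so that, with probability at least $1-\delta$, $d_\cH(\hat g,g^{\text{target}})\le\epsilon/4$, and condition on that event. Applying the left half of the sandwich to $g^\star$ gives $d_\cH(g^{\text{target}},g^\star)\le 2\ls(g^\star)$, so by the triangle inequality for $d_\cH$,
\[
d_\cH(\hat g,g^\star)\le d_\cH(\hat g,g^{\text{target}})+d_\cH(g^{\text{target}},g^\star)\le \tfrac{\epsilon}{4}+2\ls(g^\star).
\]
By optimality of $g^{\text{output}}$, $d_\cH(\hat g,g^{\text{output}})\le d_\cH(\hat g,g^\star)$, and a second triangle inequality gives
\[
d_\cH(g^{\text{target}},g^{\text{output}})\le d_\cH(g^{\text{target}},\hat g)+d_\cH(\hat g,g^{\text{output}})\le \tfrac{\epsilon}{2}+2\ls(g^\star).
\]
Finally the right half of the sandwich applied to $g^{\text{output}}$ gives $2\ls(g^{\text{output}})\le 4\,d_\cH(g^{\text{target}},g^{\text{output}})+2\ls(g^\star)\le 2\epsilon+10\ls(g^\star)$, i.e.\ $\ls(g^{\text{output}})\le 5\min_{g\in\cH}\ls(g)+\epsilon$.

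\emph{Where the work is.} The deduction above is essentially bookkeeping; the substantive content is the sandwich, and within it the lower bound $\ls(g)\le 2\,d_\cH(g^{\text{target}},g)+\min_{g'}\ls(g')$. For the easy direction $d_\cH(g^{\text{target}},g)\le 2\ls(g)$ I would bound each coordinate of $v_{g^{\text{target}}}-v_g$ by an average over $i$ of $\mathrm{TV}\bigl(\Unif(g^{\text{target}}(x_i)),\Unif(g(x_i))\bigr)$, compute that this TV distance equals $\max\bigl(\lvert g^{\text{target}}(x_i)\setminus g(x_i)\rvert,\lvert g(x_i)\setminus g^{\text{target}}(x_i)\rvert\bigr)\big/\max\bigl(\lvert g^{\text{target}}(x_i)\rvert,\lvert g(x_i)\rvert\bigr)\le \lp(g,x_i)+\lr(g,x_i)$, and average. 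The lower bound is the hard part, and it is exactly where the agnostic assumption bites: since $g^{\text{target}}$ need not lie in $\cH$, the quantities $\lp(g)$ and $\lr(g)$ are not coordinates of $v_g-v_{g^{\text{target}}}$ (whose index set is $\cH\times\cH$), so they cannot be read off directly. I would route through $g^\star$: use $g(x)\setminus g^{\text{target}}(x)\subseteq(g(x)\setminus g^\star(x))\cup(g^\star(x)\setminus g^{\text{target}}(x))$ and the symmetric inclusion for recall to bound $\lp(g)$ and $\lr(g)$ by the in-class coordinates $v_g(g,g^\star)$, $v_{g^\star}(g^\star,g)$ plus precision/recall terms for $g^\star$; then replace $v_g(g,g^\star)$ by $v_{g^{\text{target}}}(g,g^\star)$ at cost $d_\cH(g^{\text{target}},g)$, and control the residual $g^\star$ terms via the easy direction applied to $g^\star$, namely $d_\cH(g^{\text{target}},g^\star)\le 2\ls(g^\star)$. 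Tracking which set-difference terms vanish by disjointness and which denominators must be matched is the delicate part that produces the multiplicative factor and the $2\min_{g'}\ls(g')$ slack; I expect this to be the main obstacle, while everything else is routine.
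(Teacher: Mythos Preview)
Your triangle-inequality deduction is correct and mirrors the paper's argument almost line for line; the paper also conditions on $d_\cH(\hat g,g^{\text{target}})\le\epsilon/4$, uses optimality of $g^{\text{output}}$ to pass through the empirical hypothesis, and arrives at the same factor $5$.

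Where you diverge is in the ``where the work is'' paragraph. You treat the upper bound $2\ls(g)\le 4d_\cH(g^{\text{target}},g)+2\min_{g'}\ls(g')$ as the hard step and propose to prove it by set-inclusion bookkeeping through $g^\star$; you flag the residual term $U_i^g(g^\star(x_i)\setminus g^{\text{target}}(x_i))$ (wrong denominator, second argument not in $\cH$) as the delicate point, and indeed your sketch does not give a clean way to control it. The paper sidesteps this entirely. It introduces the symmetric quantity $d_{\mathtt{p,r}}(g_1,g_2):=\frac{1}{m}\sum_i\bigl(U_i^{g_1}(g_1\setminus g_2)+U_i^{g_2}(g_2\setminus g_1)\bigr)$ for \emph{any} pair (so that $2\ls(g)=d_{\mathtt{p,r}}(g,g^{\text{target}})$), and proves the two-line Lemma~\ref{c:2}: (i) $d_\cH\le d_{\mathtt{p,r}}$ always, via the TV bound you already describe; and (ii) for $g_1,g_2\in\cH$, $d_{\mathtt{p,r}}(g_1,g_2)\le 2d_\cH(g_1,g_2)$. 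Part (ii) is the observation you are missing: at the coordinate $(g_1,g_2)\in\cH\times\cH$ one has $v_{g_2}(g_1,g_2)=\frac{1}{m}\sum_i U_i^{g_2}(g_1\setminus g_2)=0$ by disjointness, so $v_{g_1}(g_1,g_2)-v_{g_2}(g_1,g_2)$ is exactly $\frac{1}{m}\sum_i U_i^{g_1}(g_1\setminus g_2)$; symmetrically for $(g_2,g_1)$. Hence $d_\cH(g_1,g_2)\ge\max$ of the two summands $\ge\frac12 d_{\mathtt{p,r}}(g_1,g_2)$.

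With this lemma in hand, the paper routes through an arbitrary $g\in\cH$ via $d_{\mathtt{p,r}}(g^{\text{output}},g^{\text{target}})\le d_{\mathtt{p,r}}(g^{\text{output}},g)+d_{\mathtt{p,r}}(g,g^{\text{target}})$, applies (ii) to the first term (both arguments in $\cH$), and never needs to manipulate a set difference involving $g^{\text{target}}$ under the ``wrong'' measure $U_i^g$. The sandwich you quote from the overview is just this argument packaged with $g=g^\star$; once you see Lemma~\ref{c:2}(ii), what you called ``the main obstacle'' evaporates.
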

\begin{remark}
In the realizable setting, our algorithm achieves a quadratic improvement in sample complexity: learning with recall and precision losses at most \( \epsilon \) can be achieved with \( O\left(\frac{\log\lvert \mathcal{H} \rvert + \log(1/\delta)}{\epsilon}\right) \) examples.
\end{remark}

\subsubsection{Proof of Theorem~\ref{thm:lowerscalar}}
\begin{proof}[of Theorem~\ref{thm:lowerscalar}]
For simplicity, we adopt payoffs instead of losses here.
The payoff of hypothesis $g$ at $x$ is 
\begin{align*}
     u(g,x) = \frac{|g^{\text{target}}(x)\cap g(x)|}{2|g(x)|} + \frac{|g^{\text{target}}(x)\cap g(x)|}{2|g^{\text{target}}(x)|}\,.
\end{align*}
If $ g(x) = \emptyset$ and $g^{\text{target}}(x)\neq \emptyset$, $u(g,x) = \frac{1}{2}$; if both are empty set $u(g,x) = 1$.
The expected payoff is $u(g) = \EEs{x\sim \cD}{u(g,x)} = 1-\ls(g)$.

\paragraph{Construction of $g^{\text{target}}$ and $\cD$}
    Let's start by focusing on one single input $x$. There are $n$ items $N_1(x) = [\frac{n}{2}]$ and $N_2(x) = \{\frac{n}{2}+1,\ldots,n\}$.
    Consider two hypotheses---$g_1$ with $g_1(x) = N_1(x)$ and $g_2$ with $g_2(x) = N_1(x)\cup N_2(x)$. 
    So $g_1(x)$ contains half of the items in $g_2(x)$.

In a world characterized by $\beta \in [\frac{1}{8},\frac{2}{3}]$, $g^{\text{target}}(x)$ is generated in the following random way: Randomly select $\frac{3}{4} \cdot \beta n$ items from $N_1(x)$ and $\frac{1}{4} \cdot \beta n$ items from $N_2(x)$. We denote this distribution by $P_\beta$.
No matter what $\beta$ is, w.p. $\frac{3}{4}$, $v$ is sampled uniformly at random from  $N_1(x)$ and w.p. $\frac{1}{4}$, $v$ is sampled uniformly at random from $N_2(x)$. 
That is, every item in $N_1(x)$ has probability $\frac{3}{2n}$ of being sampled and every item in $N_2(x)$ has probability $\frac{1}{2n}$ of being sampled.

For any $g^{\text{target}}$ generated from the above process, the payoff of $g_1$ at $x$ is
\begin{align*}
    u(g_1,x) = \frac{|g^{\text{target}}(x)\cap g_1(x)|}{2|g_1(x)|} + \frac{|g^{\text{target}}(x)\cap g_1(x)|}{2|g^{\text{target}}(x)|} = \frac{3/4 \cdot \beta n}{n} + \frac{3/4 \cdot \beta n}{2\beta n} = \frac{3}{4}\beta + \frac{3}{8}\,,
\end{align*}
and the payoff of $g_2$ at $x$ is 
\begin{align*}
    u(g_2,x) = \frac{|g^{\text{target}}(x)\cap g_2(x)|}{2|g_2(x)|} + \frac{|g^{\text{target}}(x)\cap g_2(x)|}{2|g^{\text{target}}(x)|} = \frac{\beta n}{2n} + \frac{ \beta n}{2\beta n} = \frac{1}{2}\beta + \frac{1}{2}\,.
\end{align*}

We make infinite copies of $\{x,N_1(x),N_2(x)\}$. In each of the copy, $g_1(x) = N_1(x)$ and  $g_2(x) = N_1(x)\cup N_2(x)$.
For each $x$, we independently sample $g^{\text{target}}(x)$ from $P_\beta$.
Let the data distribution over all of such copies of $x$.
Then almost surely, there is no repentance in the training data, i.e., there does not exist $i\neq j$ such that $x_i=x_j$. And for any random sampled test point, w.p. $1$, it has not been sampled in the training set.

\textbf{Analysis }
For any unobserved $x\notin \{x_i|i\in [m]\}$, let $\alpha_1 = \frac{|g^{\text{output}}(x)\cap N_1(x)|}{n}$ and $\alpha_2 = \frac{|g^{\text{output}}(x)\cap N_2(x)|}{n}$. Note that $\alpha_1,\alpha_2$ are in $[0,\frac{1}{2}]$ and are possibly random variables if $\cA$ is randomized. Then the expected (over the randomness of $g^{\text{target}}$) payoff of $g^{\text{output}}$ at $x$ is 
\begin{align}
    \EEs{g^{\text{target}}}{u(g^{\text{output}},x)} = & \EEs{g^{\text{target}}}{\frac{|g^{\text{target}}(x)\cap g^{\text{output}}(x)|}{2|g^{\text{output}}(x)|} + \frac{|g^{\text{target}}(x)\cap g^{\text{output}}(x)|}{2|g^{\text{target}}(x)|}}\nonumber\\
    = & \frac{\alpha_1 n \cdot \frac{3}{2}\beta + \alpha_2 n \cdot \frac{1}{2}\beta}{2(\alpha_1+ \alpha_2)n} + \frac{\alpha_1 n \cdot \frac{3}{2}\beta + \alpha_2 n \cdot \frac{1}{2}\beta}{2\beta n}\nonumber\\
    = &  \frac{\alpha_1 \beta}{2(\alpha_1+\alpha_2)} + \frac{\beta}{4}+ \frac{3}{4}\alpha_1+ \frac{1}{4}\alpha_2 \,,\label{eq:payoff}
\end{align}
which is monotonically increasing in $\alpha_1$. Hence $\EEs{g^{\text{target}}}{u(g^{\text{output}},x)}$ is maximized at $\alpha_1 = \frac{1}{2}$. Then 
\begin{equation*}
    \EEs{g^{\text{target}}}{u(g^{\text{output}},x)} \leq \frac{\beta}{4}\cdot (\frac{1}{\frac{1}{2}+\alpha_2}+1)+\frac{3}{8}+ \frac{1}{4}\alpha_2\,. 
\end{equation*}

Note that $\beta$ is not observable if we never sample the same $x$ more than once (and thus the distribution of $v$ conditional on $\beta$ is identical for any $\beta$). Hence $g^{\text{output}}$ is independent of $\beta$.
\begin{itemize}
    \item If $\cP_{x\sim \cD}(\alpha_2(x)\leq \frac{1}{4})\geq \frac{1}{2}$: when $\beta = \frac{1}{8}$, $\EEs{g^{\text{target}}}{u(g^{\text{output}},x)} \leq \frac{1}{4}(\frac{1}{4+8\alpha_2} +\alpha_2) +\frac{13}{32}$ is monotonically increasing in $\alpha_2$. Hence, $$u(g_2) - \EEs{g^{\text{target}}}{u(g^{\text{output}})} \geq \frac{1}{2} (\frac{9}{16} - \frac{49}{96}) = \frac{5}{192}=\frac{5}{84}\ls(g_2)\,.$$
    \item If $\cP_{x\sim \cD}(\alpha_2(x)> \frac{1}{4})\geq \frac{1}{2}$: when $\beta = \frac{2}{3}$, $\EEs{g^{\text{target}}}{u(g^{\text{output}},x)} = \frac{1}{3+6\alpha_2} +\frac{1}{4}\alpha_2 +\frac{13}{24}$ is maximized at $\alpha_2 = \frac{1}{2}$ for $\alpha \in [\frac{1}{4},\frac{1}{2}]$. Hence, 
    $$u(g_1) - \EEs{g^{\text{target}}}{u(g^{\text{output}})} \geq \frac{1}{2} (\frac{7}{8} - \frac{5}{6}) = \frac{1}{48}= \frac{1}{6}\ls(g_1)\,.$$
\end{itemize}
Therefore, for any algorithm $\cA$, for any $x_{1:m},v_{1:m}$, $\EEs{g^{\text{target}}}{\ls(g^{\text{output}})}$ is worse than $1.05 \cdot \min\{\ls(g_1), \ls(g_2)\}$ at either $\beta = \frac{1}{8}$ or $\beta = \frac{2}{3}$. So there exists a target hypothesis such that $\ls(g^{\text{output}}) \geq 1.05 \cdot \min\{\ls(g_1), \ls(g_2)\}$.
\end{proof}
\subsubsection{Proof of Theorem~\ref{prop:1}}
We use the following auxiliary metric between hypotheses:
\begin{definition}
For two hypotheses $g',g''$ define
\begin{align*}
d_{\mathtt{p,r}}(g',g'') &= \mathtt{precision.loss}(g'\vert g'') + \mathtt{recall.loss}(g'\vert g'')\\
&= \mathtt{precision.loss}(g''\vert g') + \mathtt{recall.loss}(g''\vert g').
\end{align*}
\end{definition}
For any hypothesis $g$, the scaler loss $\ls(g) = \frac{1}{2}d_{\mathtt{p,r}}(g^{\text{output}},g^{\text{target}})$.
In the remainder of this section, we focus on proving Theorem~\ref{prop:1}.
The basic idea is to show that $d_\cH$ can be used as a surrogate for $d_{\mathtt{p,r}}$. The following lemma plays a crucial role in our proof.
\begin{lemma}\label{c:2}
For every pair of hypotheses $g_1,g_2$:    
\[d_\cH(g_1,g_2)\leq d_{\mathtt{p,r}}(g_1,g_2).\]
If in addition $g_1,g_2\in \cH$, we have:
\[d_{\mathtt{p,r}}(g_1,g_2)\leq 2d_\cH(g_1,g_2).\]
\end{lemma}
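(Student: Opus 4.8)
The plan is to reduce the whole statement to an elementary inequality comparing two uniform distributions on finite sets, applied coordinate-by-coordinate — that is, separately for each sample index $i\in[m]$ and each pair $(g',g'')\in\cH\times\cH$ that indexes the vectors $v_g$. First I would record the dictionary between the two quantities: since $\mathtt{precision.loss}(g'\mid g'')=v_{g'}(g',g'')$ and $\mathtt{recall.loss}(g'\mid g'')=\mathtt{precision.loss}(g''\mid g')=v_{g''}(g'',g')$, we have $d_{\mathtt{p,r}}(g_1,g_2)=v_{g_1}(g_1,g_2)+v_{g_2}(g_2,g_1)$, which is the form I will work with throughout.

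For the first inequality I would fix a coordinate $(g',g'')$ and an index $i$, write $A=g_1(x_i)$, $B=g_2(x_i)$, $S=g'(x_i)\setminus g''(x_i)$, and observe that the $i$-th term of $v_{g_1}(g',g'')-v_{g_2}(g',g'')$ equals $U^A(S)-U^B(S)$ with $U^A:=\mathrm{Unif}(A)$, so its absolute value is at most $d_{\mathrm{TV}}(U^A,U^B)$. The crux is the elementary bound
\[
d_{\mathrm{TV}}\bigl(\mathrm{Unif}(A),\mathrm{Unif}(B)\bigr)\;\le\;\frac{|A\setminus B|}{|A|}+\frac{|B\setminus A|}{|B|},
\]
which I would obtain by expanding $2\,d_{\mathrm{TV}}=\frac{|A\setminus B|}{|A|}+\frac{|B\setminus A|}{|B|}+|A\cap B|\cdot\frac{\bigl||A|-|B|\bigr|}{|A|\,|B|}$ and bounding the last summand by $\max\!\bigl(\frac{|A\setminus B|}{|A|},\frac{|B\setminus A|}{|B|}\bigr)$: if $|A|\le|B|$ use $|A\cap B|\le|A|$ together with $\bigl||A|-|B|\bigr|=|B\setminus A|-|A\setminus B|\le|B\setminus A|$, and argue symmetrically if $|B|\le|A|$. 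Averaging over $i$ then gives $|v_{g_1}(g',g'')-v_{g_2}(g',g'')|\le v_{g_1}(g_1,g_2)+v_{g_2}(g_2,g_1)=d_{\mathtt{p,r}}(g_1,g_2)$ for every coordinate, and taking the maximum over $(g',g'')$ yields $d_\cH(g_1,g_2)\le d_{\mathtt{p,r}}(g_1,g_2)$.

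For the second inequality, the point is that when $g_1,g_2\in\cH$ the indices $(g_1,g_2)$ and $(g_2,g_1)$ are genuine coordinates of the vectors $v_g$. Since $g_1(x_i)\setminus g_2(x_i)$ is disjoint from $g_2(x_i)$, every term of $v_{g_2}(g_1,g_2)$ vanishes, so $v_{g_2}(g_1,g_2)=0$; hence $v_{g_1}(g_1,g_2)=\bigl|v_{g_1}(g_1,g_2)-v_{g_2}(g_1,g_2)\bigr|\le\|v_{g_1}-v_{g_2}\|_\infty=d_\cH(g_1,g_2)$, and symmetrically $v_{g_1}(g_2,g_1)=0$ gives $v_{g_2}(g_2,g_1)\le d_\cH(g_1,g_2)$. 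Summing the two and using the dictionary gives $d_{\mathtt{p,r}}(g_1,g_2)\le 2\,d_\cH(g_1,g_2)$.

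The only non-routine step is the total-variation estimate above; everything else is bookkeeping, and the second half is essentially immediate once one notices the vanishing coordinates. One minor point I would dispatch at the outset is the degenerate case $g_j(x_i)=\emptyset$ for some $i,j$: adopting the convention that such an index contributes $0$ to every $v_g(\cdot,\cdot)$, the per-coordinate inequalities still hold (on those indices the left-hand side is $0$, while the right-hand side only gains a nonnegative term), so the argument goes through unchanged.
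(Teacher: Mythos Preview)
Your proof is correct and follows essentially the same route as the paper: bound each coordinate difference by the total variation between $U_i^{g_1}$ and $U_i^{g_2}$ for the first inequality, and for the second inequality exploit that $(g_1,g_2)$ and $(g_2,g_1)$ are admissible coordinates with one side vanishing. The one place the paper is cleaner is the TV step: instead of your expansion of $2\,d_{\mathrm{TV}}$ and the case analysis bounding the cross term, the paper uses the exact identity
\[
d_{\mathrm{TV}}\bigl(\mathrm{Unif}(A),\mathrm{Unif}(B)\bigr)=\max\Bigl\{\tfrac{|A\setminus B|}{|A|},\,\tfrac{|B\setminus A|}{|B|}\Bigr\},
\]
which holds because both measures are uniform (the optimal witnessing set is simply $A$ or $B$, whichever has larger cardinality). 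This immediately gives $d_{\mathrm{TV}}\leq$ the sum with no further work, and also makes the second inequality a one-liner since $d_\cH(g_1,g_2)\geq\max\{v_{g_1}(g_1,g_2),v_{g_2}(g_2,g_1)\}\geq\tfrac12 d_{\mathtt{p,r}}(g_1,g_2)$. Your argument achieves the same conclusions with slightly more bookkeeping.
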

We first use Lemma~\ref{c:2} to prove Theorem~\ref{prop:1}, and later prove the Lemma. 

\begin{proof}[of Theorem~\ref{prop:1}]
    Assume $m=O(\frac{\log\lvert \mathcal{H}\rvert + \log(1/\delta)}{\epsilon^2})$ is such that $d_\cH(g^{\text{output}},g^{\text{target}})\leq \epsilon/4$ with probability at least $1-\delta$, and assume the latter event holds. Let $g\in\cH$,
by the triangle inequality:
\[
d_{\mathtt{p,r}}(g^{\text{output}}, g^{\text{target}}) \leq d_{\mathtt{p,r}}(g^{\text{output}}, g) + 
d_{\mathtt{p,r}}(g, g^{\text{target}}).
\]
We upper bound the first term on the right-hand side as follows:
\begin{align*}
d_{\mathtt{p,r}}(g^{\text{output}}, g) &\leq
2 d_{\mathcal{H}}(g^{\text{output}}, g) \tag{Lemma~\ref{c:2}} \\
&\leq 2d_{\mathcal{H}}(g^{\text{output}}, g^{\text{target}}) + 2d_{\mathcal{H}}(g^{\text{target}}, g)\\
&\leq 4d_{\mathcal{H}}(g^{\text{target}}, g) + \epsilon \tag{see below} \\
&\leq 4d_{\mathtt{p,r}}(g^{\text{target}}, g) + \epsilon \tag{Lemma~\ref{c:2}}.
\end{align*}
Altogether,
\[
d_{\mathtt{p,r}}(g^{\text{output}}, g^{\text{target}})
\leq 5d_{\mathtt{p,r}}(g, g^{\text{target}}) + \epsilon.
\]
It remains to explain the second to last inequality above. It follows by two applications of the triangle inequality:
\begin{align*}
  d_{\cH}(g^{\text{output}}, g^{\text{target}}) &\leq d_{\cH}(g^{\text{output}}, \hat g) + \epsilon/4 \tag{$d_\cH(g^{\text{target}}, \hat g)\leq \epsilon/4$}\\
  &\leq d_{\cH}(g, \hat g) + \epsilon/4 \tag{$g^{\text{output}}\in\arg\min_{g\in\cH}d_\cH(g, \hat g)$} \\
  &\leq d_{\cH}(g, g^{\text{target}}) + \epsilon/2 \tag{$d_\cH(g^{\text{target}}, \hat g)\leq \epsilon/4$}.
\end{align*}
\end{proof}

\begin{proof}[of Lemma~\ref{c:2}]
For the first inequality, note that both of the distributions $U_i^{g_1}$ and $U_i^{g_2}$ are uniform over their supports and hence
$\mathtt{TV}(U_i^{g_1},U_i^{g_2}) 
= \max\{U_i^{g_1}(g_1(x_i)\setminus g_2(x_i)), U_i^{g_2}(g_2(x_i)\setminus g_1(x_i))\}$.
Thus, for every $g',g''\in \cH$:
\begin{align*}
&\lvert U_i^{g_1}(g'(x_i)\setminus g''(x_i)) - 
U_i^{g_2}(g'(x_i)\setminus g''(x_i))\rvert\\
\leq&
\mathtt{TV}(U_i^{g_1},U_i^{g_2}) \\
\leq& 
U_i^{g_1}(g_1(x_i)\setminus g_2(x_i))+ U_i^{g_2}(g_2(x_i)\setminus g_1(x_i)).
\end{align*}
Hence, by averaging the above inequalities over $i=1,\ldots,n$:
\[d_\cH(g_1,g_2)\leq \frac{1}{m} \sum_{i=1}^m \mathtt{TV}(U_i^{g_1}, U_i^{g_2})\leq d_{\mathtt{p,r}}(g_1,g_2),\]
which yields the first inequality.

For the second inequality, assume $g_1,g_2\in\cH$. Thus,
\begin{align*}
d_\cH(g_1,g_2)&\geq \max\Bigl\{\frac{1}{m} \sum_{i=1}^m    U_i^{g_1}(g_1(x_i)\setminus g_2(x_i)),\frac{1}{m} \sum_{i=1}^m    U_i^{g_2}(g_2(x_i)\setminus g_1(x_i))\Bigr\}    \\
&\geq \frac{1}{m} \sum_{i=1}^m    \frac{U_i^{g_1}(g_1(x_i)\setminus g_2(x_i))+    U_i^{g_2}(g_2(x_i)\setminus g_1(x_i))}{2}\\
&=
\frac{1}{2}d_{\mathtt{p,r}}(g_1,g_2).
\end{align*}
\end{proof}

\subsection{Algorithm and Proofs in the Semi-Realizable Case}
In the semi-realizable case, there exists a hypothesis in the class with zero precision loss. The question is whether we achieve zero precision loss while allowing for the worst recall loss in the class.

\semiupper*
The algorithm works as follows.
\begin{framed}
    \textbf{Algorithm:} output 
    \begin{align*}
        g^{\text{output}} = \argmin_{g\in \cH} \sum_{i=1}^m \frac{\1{v_i\in g(x_i)}}{n_g(x_i)}\,.
    \end{align*}
    If there are multiple solutions, we break ties by picking the hypothesis with smallest empirical recall loss.
\end{framed}

\begin{proof}[of Theorem~\ref{thm:semiupper}]
For any hypothesis $g$, $\1{v_i\in g(x_i)}$ is an unbiased estimate of the recall $\frac{|g(x_i)\cap g^{\text{target}}(x_i)|}{n_{g^{\text{target}}}(x_i)}$.
Thus, $\frac{\1{v_i\in g(x_i)}}{n_g(x_i)}$ is an unbiased estimate of $\frac{|g(x_i)\cap g^{\text{target}}(x_i)|}{n_g(x_i)\cdot n_{g^{\text{target}}}(x_i)}$. 

Since $g'$ has zero precision loss, $\frac{|g(x_i)\cap g^{\text{target}}(x_i)|}{n_g(x_i)} = 1$ almost everywhere. Thus, we have 
    \begin{align*}
        \abs{\frac{1}{m}\sum_{i=1}^m\frac{\1{v_i\in g(x_i)}}{n_g(x_i)}- \EE{\frac{1}{n_{g^{\text{target}}}(x)}}}\leq \sqrt{\frac{\log(|\cH|)+\log(1/\delta)}{m}}\,, 
    \end{align*}
    for all $g\in \cH$.
    Then if $\Delta_\cD >0$, we need $\frac{1}{\Delta_\cD^2}$ samples to separate $g'$ from other hypotheses in the hypothesis class. 
\end{proof}
\semilower*
\begin{proof}[of Theorem~\ref{thm:semilower}]
    Let's start by focusing on one single input $x$. Let $N = \{v_1,\ldots,v_n\}$ for some $n\gg m$. Let $g_1(x) = \{v_1\}$ and $g_2(x) = \{v_2\}$.
    In world I, $g^{\text{target}}(x)$ is generated in the following way.
    \begin{itemize}
        \item w.p. $\frac{1}{2}$, $g^{\text{target}}(x) = N\setminus \{v_2\}$.
        \item w.p. $\frac{1}{2}$, $g^{\text{target}}(x) = \{v_1, v_2\}$.
    \end{itemize}
    We construct a symmetric world II by switching $v_1$ and $v_2$, i.e., 
    \begin{itemize}
        \item w.p. $\frac{1}{2}$, $g^{\text{target}}(x) = N\setminus \{v_1\}$.
        \item w.p. $\frac{1}{2}$, $g^{\text{target}}(x) = \{v_1, v_2\}$.
    \end{itemize}
    We make infinite independent copies of $(x,N)$ and let $\cD$ to be the uniform distribution over such $x$'s.
    Hence, in world I, $\lp(g_1)= 0$ and $\lr(g_1) = \frac{3}{4} -\frac{1}{2n}$; $\lp(g_2)= \frac{1}{2}$ and $\lr(g_2) = \frac{3}{4}$. When $n\rightarrow\infty$, we can't distinguish between two worlds. In order to achieve $\lp(g^{\text{output}}) =0$ in both worlds, we need to make $g^{\text{output}}(x) = \emptyset$ for almost every $x$. Then the recall loss would be $1$. 
\end{proof}

\section{Discussion}
One of the main goals of this paper is to study the Pareto front of precision and recall, which is reflected in our Pareto-loss objective. There are other scalar measures of predictive performance, the $F_1$ and the $F_\beta$ scores. $F_1$ and $F_\beta$  are direct functions of precision and recall, which is weaker than our Pareto-loss objective, and thus, the hardness results can be extended directly. We also note that in the realizable setting, we can minimize other alternatives for scalar losses as we can optimize both precision and recall simultaneously. Nevertheless, hypotheses with optimal $F_1$ or $F_\beta$ scores are specific points on this Pareto front, and it will be interesting for future work to find them efficiently. 
          
Beyond that, there are two natural open questions. 
First, there is a gap between the upper and lower bounds. For the scalar-loss objective, we demonstrate that an $\alpha = 5$ approximate optimal scalar loss is achievable, while $\alpha = 1.05$ is not, leaving it unclear what the optimal $\alpha$ is. For the Pareto-loss objective, we establish an upper bound of $(p,r) \Rightarrow (5(p+r), r)$ and a lower bound of $(p,r) \not\Rightarrow (p+0.01, r+0.01)$, again suggesting a gap that we do not yet know how to close.

Second, it remains an open question whether there exists a combinatorial measure, similar to the VC dimension in standard PAC learning, that characterizes the learnability of precision and recall. Each hypothesis implicitly defines a distribution at each node—specifically, a uniform distribution over its neighborhood. In Section~\ref{subsec:surrogate}, we also link the scalar loss to the total variation distance, thus reducing the scalar loss learning problem to a special case of distribution learning.  However, as shown in \cite{lechner2024inherent}, there is no such a dimension characterizing the sample complexity of learning certain distribution classes (in their case, a mixture of point mass and uniform distributions). This result suggests a potential limitation in identifying a combinatorial measure for our learning problem.

\section*{Acknowledgements}
Lee Cohen is supported by the Simons Foundation Collaboration on the Theory of Algorithmic Fairness, the Sloan Foundation Grant 2020-13941, and the Simons Foundation investigators award 689988.

Yishay Mansour was supported by funding from the European Research Council (ERC) under the European Union’s Horizon 2020 research and innovation program (grant agreement No. 882396), by the Israel Science Foundation,  the Yandex Initiative for Machine Learning at Tel Aviv University and a grant from the Tel Aviv University Center for AI and Data Science (TAD).

Shay Moran is a Robert J.\ Shillman Fellow; he acknowledges support by ISF grant 1225/20, by BSF grant 2018385, by Israel PBC-VATAT, by the Technion Center for Machine Learning and Intelligent Systems (MLIS), and by the the European Union (ERC, GENERALIZATION, 101039692). Views and opinions expressed are however those of the author(s) only and do not necessarily reflect those of the European Union or the European Research Council Executive Agency. Neither the European Union nor the granting authority can be held responsible for them.

Han Shao was supported by Harvard CMSA.

\printbibliography

\newpage

\end{document}